\documentclass[11pt]{article}
\usepackage{fullpage}
\usepackage{amsmath,amssymb,amsthm}
\usepackage{mathtools}
\usepackage{hyperref}
\usepackage{enumitem}
\usepackage{natbib}
\setcitestyle{authoryear,round,citesep={;},aysep={,},yysep={;}}

\usepackage[utf8]{inputenc} % allow utf-8 input
\usepackage[T1]{fontenc}    % use 8-bit T1 fonts
\usepackage{hyperref}       % hyperlinks
\usepackage{url}            % simple URL typesetting
\usepackage{booktabs}       % professional-quality tables
\usepackage{amsfonts}       % blackboard math symbols
\usepackage{nicefrac}       % compact symbols for 1/2, etc.
\usepackage[verbose=false]{microtype} % microtypography; suppress harmless \showhyphens warning
\usepackage{xcolor}         % colors
\usepackage{graphicx}
\usepackage{caption}
\usepackage{float}      
\usepackage{amsmath,amssymb,amsfonts}
\usepackage{amsthm}
\usepackage{wrapfig}
\usepackage{booktabs}       % professional-quality tables
\usepackage{amsfonts}       % blackboard math symbols
\usepackage{nicefrac}       % compact symbols for 1/2, etc.
\usepackage{graphicx}
\usepackage[hypcap=false]{caption}
\usepackage{subcaption} 
\usepackage{float}      
\usepackage{amsmath,amssymb,amsfonts}
\usepackage{amsthm}
\usepackage{colortbl}
\usepackage{array}
\usepackage{tabularx}
\usepackage{titlesec}

\newtheorem{theorem}{Theorem}
\newtheorem{proposition}[theorem]{Proposition}

\newtheorem{remark}{Remark}

\titleformat{\paragraph}[runin]
  {\normalfont\normalsize\bfseries}
  {}
  {0pt}
  {}

\titlespacing*{\paragraph}
  {0pt}
  {1.2ex plus 0.4ex minus 0.2ex}
  {0.6em}
\theoremstyle{plain}
\theoremstyle{remark}
\usepackage{tabularx}
\usepackage[table]{xcolor}
\definecolor{psoftgreen}{RGB}{235,243,252} % light blue for LOFT rows
\definecolor{avgblue}{RGB}{255,239,219}    % Loft peach for Avg column
\usepackage{adjustbox}
\usepackage{makecell}
\usepackage{placeins}
\usepackage{titletoc}
\usepackage{etoolbox}
\usepackage{enumitem}
\newcommand{\vthead}[1]{\rotatebox[origin=c]{90}{\strut #1}}% 
\newcommand{\loft}{\textsc{LOFT}}

\newcommand{\pprin}{P_{\mathrm{prin}}}
\newcommand{\pgrad}{P_{\mathrm{grad}}}
\newcommand{\pskew}{P_{\mathrm{skew}}}
\newcommand{\Torth}{T_{\mathrm{orth}}}
\newcommand{\Tfree}{T_{\mathrm{free}}}
\DeclareMathOperator{\skewsym}{skew}
\newif\ifAppendixSectionBreaks
\AppendixSectionBreaksfalse

\hypersetup{colorlinks=true,linkcolor=red!70!black,linktocpage=false,citebordercolor=blue!70!black,citecolor=blue!70!black,anchorcolor=blue!70!black}

\title{\textbf{LOFT: Low-Rank Orthogonal Fine-Tuning
via Task-Aware Support Selection}}

\author{Lanxin Zhao\footnote{The University of Sydney, Australia. \texttt{lzha6608@uni.sydney.edu.au}, \texttt{lequan.lin@sydney.edu.au}, \texttt{junbin.gao@sydney.edu.au}, \texttt{andi.han@sydney.edu.au}} \and Bamdev Mishra\footnote{Microsoft India. \texttt{bamdevm@microsoft.com}} \and Pratik Jawanpuria\footnote{Indian Institute of Technology Bombay, India. \texttt{pratik.jawanpuria@iitb.ac.in}} \and Lequan Lin\footnotemark[1] \and Dai Shi\footnote{University of Cambridge, UK. \texttt{ds2213@cam.ac.uk}} \and Junbin Gao\footnotemark[1] \and Andi Han\footnotemark[1]}
\date{}

\begin{document}
\maketitle
\begin{abstract}
Orthogonal parameter-efficient fine-tuning (PEFT) adapts pretrained weights through structure-preserving multiplicative transformations, but existing methods often conflate two distinct design choices: the subspace in which adaptation occurs and the transformation applied within that subspace. This paper introduces LOFT, a low-rank orthogonal fine-tuning framework that explicitly separates these two components. By viewing orthogonal adaptation as a  multiplicative subspace rotation, LOFT provides a unified formulation that recovers representative orthogonal PEFT methods, including coordinate-, butterfly-, Householder-, and principal-subspace-based variants. More importantly, this perspective exposes support selection as a central design axis rather than a byproduct of a particular parameterization. We develop a first-order analysis showing that useful adaptation supports should be informed by the downstream training signal, motivating practical task-aware support selection strategies. Across language understanding, visual transfer, mathematical reasoning, and multilingual out-of-distribution adaptation, LOFT recovers principal-subspace orthogonal adaptation while gradient-informed supports improve the efficiency–performance trade-off under matched parameter, memory, and compute budgets. These results suggest that principled support selection is an important direction for improving orthogonal PEFT.
\end{abstract}

\section{Introduction}
% \begin{ack}
% Use unnumbered first level headings for the acknowledgments. All acknowledgments
% go at the end of the paper before the list of references. Moreover, you are required to declare
% funding (financial activities supporting the submitted work) and competing interests (related financial activities outside the submitted work).
% More information about this disclosure can be found at: \url{https://neurips.cc/Conferences/2026/PaperInformation/FundingDisclosure}.

Foundation models are increasingly being adapted to a wide range of downstream tasks, domains, and deployment environments. As model scale grows, however, maintaining a separately fully fine-tuned model for each task becomes increasingly costly in optimization, storage, and inference, especially in multi-task or resource-constrained settings \citep{hu2021lora,han2024parameter}. Parameter-efficient fine-tuning (PEFT) addresses this problem by keeping the pretrained backbone fixed and learning only a small task-specific modification.

Within reparameterization-based PEFT, two complementary paradigms have emerged. One line is additive and low-rank: LoRA \citep{hu2021lora} models downstream adaptation as a compact low-dimensional perturbation, and later methods such as PiSSA \citep{meng2024pissa} and DoRA \citep{liu2024dora} refine how that perturbation is initialized or decomposed. A second line, which is the focus of this paper, is multiplicative and geometric: orthogonal fine-tuning adapts frozen weights through multiplicative transforms that aim to preserve pretrained structure while still allowing task-specific change \citep{qiu2023controlling,liu2024parameterefficient,ma2024qgoft,yuan2024bridging,wu2025memory}. The appeal of the orthogonal view is geometric: unlike additive low-rank updates, a multiplicative orthogonal transform exactly preserves the structure of the pretrained weight, so adaptation is constrained to act as a change of basis on top of a fixed pretrained geometry. Although these lines are often presented separately, both rely on the same underlying picture: effective adaptation is concentrated on a low-dimensional subspace, which we call the \emph{adaptation support}, and methods differ primarily in what transformation is allowed to act on that support. 
% Concretely, we represent a support by a row-orthonormal basis \(P_r\in\mathbb{R}^{r\times d_{\mathrm{in}}}\) with \(P_rP_r^{\top}=I_r\); the support is then the right subspace it spans, and the transform that acts inside it is a separate choice. 

% Within reparameterization-based PEFT, two complementary paradigms have emerged. One line of works consider additive and low-rank: LoRA \citep{hu2021lora} models downstream adaptation as a compact low-dimensional perturbation, and later methods such as PiSSA \citep{meng2024pissa} and DoRA \citep{liu2024dora} refine how that perturbation is initialized or decomposed. A second view is multiplicative and geometric: orthogonal fine-tuning adapts frozen weights through multiplicative transforms that aim to preserve pretrained structure while still allowing task-specific change \citep{qiu2023controlling,liu2024parameterefficient,ma2024qgoft,yuan2024bridging,wu2025memory}. Although these lines are often presented separately, both implicitly rely on the same fundamental idea: effective adaptation tends to be concentrated on a low-dimensional support, and the key modeling choice is what transformation is allowed on that support.

A characteristic feature of existing orthogonal PEFT methods is that the support and the transform are tightly coupled inside a single parameterization.
% Block-diagonal transforms \citep{qiu2023controlling}, butterfly factorizations \citep{liu2024parameterefficient}, Givens rotations \citep{ma2024qgoft}, Householder reflections \citep{yuan2024bridging}, and principal-subspace restrictions \citep{wu2025memory} are introduced as different ways to parameterize the orthogonal action, but each of them implicitly fixes the subspace on which that action takes place. 
Block-diagonal transforms \citep{qiu2023controlling}, butterfly factorizations \citep{liu2024parameterefficient}, Givens rotations \citep{ma2024qgoft}, Householder reflections \citep{yuan2024bridging}, and principal-subspace restrictions \citep{wu2025memory} are introduced as different ways to parameterize the orthogonal action, but each of them couples the effective support with a particular transform parameterization. As a result, the support is treated as a built-in property of the method rather than as a design variable that one can compare across methods. This coupling makes it hard to tell whether a reported improvement comes from a better orthogonal parameterization, a different effective support, or a combination of the two.

To make this distinction explicit, we introduce \textbf{Low-rank orthogonal Fine-Tuning (LOFT)}, a multiplicative subspace-rotation framework that separates the adaptation support \(P_r\) from the in-subspace transform \(T_r \in \mathbb R^{r \times r}\), with the update as
\(W^+ x=W_0\!\left(I_{d_{\mathrm{in}}}+P_r^{\top}(T_r-I_r)P_r\right) x\),
so that a row orthonormal matrix \(P_r \in \mathbb R^{r \times d_{\rm in}}\)  determines \emph{where} adaptation acts and \(T_r\) determines \emph{how} the selected subspace is transformed. 
% We adopt a one-sided right-multiplicative form because, when \(T_r\in O(r)\), it preserves the row-neuron Gram matrix \(W_0W_0^{\top}\) of the pretrained weight exactly, while still inducing a rank-\(r\) additive update at merge time. 
Under this view, several existing  orthogonal PEFT methods can be expressed as specific instances of the same primitive by varying the support basis, factor width, factor depth, and in-subspace transform class, rather than appearing as fundamentally distinct mechanisms.

This reformulation poses a natural question: \emph{which support is most useful for downstream adaptation, and is there a principled, task-aware way to construct \(P_r\)?} Existing orthogonal PEFT methods choose the support in a task-agnostic way---as a fixed coordinate or butterfly partition of the input space \citep{qiu2023controlling,liu2024parameterefficient,ma2024qgoft}, or as the principal right singular subspace of the pretrained weight \citep{wu2025memory}. Such choices use no information about the downstream task, and there is no a priori reason for the resulting \(P_r\) to align with directions along which the downstream loss is most easily reduced, particularly when the downstream task departs from the pretraining distribution. Motivated by a first-order analysis of the orthogonal-adaptation training dynamics, we propose a complementary, task-aware way to construct \(P_r\) from downstream gradient signals computed at the pretrained weight.

% Principal singular subspaces \CommentsByPJ{of $W_0$?} are an attractive default because they capture high-energy directions of the pretrained weight, but this criterion is task-agnostic. We show through a first-order analysis at the pretrained initialization that the training signal of a right-subspace orthogonal update is governed by \(\mathrm{skew}(W_0^\top G)\), where \(G\) is the downstream gradient at \(W_0\). The optimal rank-\(r\) support \CommentsByPJ{The term `support' comes suddenly. Perhaps we can explain for non-expert reviewer.} for immediate loss reduction is therefore the top invariant subspace of \(\mathrm{skew}(W_0^\top G)\), not generically the top right-singular subspace of \(W_0\). This gives a loss-driven principle for support selection.

% The central thesis of LOFT is that, in orthogonal PEFT, the adaptation support is a loss-sensitive modeling choice. Once the support is made explicit, many existing methods become different answers to the question of how to transform a subspace, while SkewGrad provides a theory-guided answer to the question of which subspace should be transformed. \CommentsByPJ{The term `SkewGrad' appears without sufficient context.}.

Our contributions are summarized as follows:

\begin{itemize}
\item \textbf{Framework.} We introduce LOFT, an orthogonal PEFT framework that decomposes the adaptation into a support \(P_r\) and an in-subspace transform \(T_r\). When \(T_r \) is orthogonal, the framework preserves the pretrained structure exactly. We show existing representative orthogonal PEFT methods can be expressed as specific instances of this primitive.

\item \textbf{Gradient-informed support selection.} Within the framework, we give a first-order analysis showing that the local training signal of a right-subspace orthogonal update is governed by \(\mathrm{skew}(W_0^{\top}G)\), where $G$ is the gradient at $W_0$, and use this analysis to motivate two practical gradient-informed supports, \textsc{GradSVD} and \textsc{SkewGrad}, as new options for \(P_r\) alongside the principal-weight support used by spectral orthogonal PEFT. 

\item \textbf{Controlled empirical study.} We use the framework to compare principal and gradient-informed supports under matched parameter, memory, and transform budgets across language understanding, visual transfer, and mathematical reasoning. The experiments indicate that the support is a meaningful design axis: principal-support LOFT recovers the closest spectral baseline, and gradient-informed supports can improve the efficiency--performance trade-off in our settings.
\end{itemize}

\section{Methodology}

\subsection{Preliminary on Orthogonal Fine-tuning}

We consider orthogonal parameter-efficient adaptation of a pretrained linear weight matrix \(W_0\in\mathbb{R}^{d_{\mathrm{out}}\times d_{\mathrm{in}}}\), and let \(L(W)\) denote the downstream objective evaluated at weight \(W\). We write \(O(d)\) for the orthogonal group and \(\mathrm{Skew}(d):=\{S\in\mathbb{R}^{d\times d}:S^\top=-S\}\) for the space of skew-symmetric matrices. We also define ${\rm skew}(A) = (A  - A^\top)/2$. For a row-orthonormal support \(P_r\in\mathbb{R}^{r\times d_{\mathrm{in}}}\) with \(P_rP_r^\top=I_r\), the matrix \(\Pi_r:=P_r^\top P_r\) is the orthogonal projection onto the selected \(r\)-dimensional right subspace. In Appendix \ref{sec:related_work}, we also include additional related works.

\textbf{Orthogonal Fine-tuning.}
Orthogonal fine-tuning \citep{qiu2023controlling} proposes to adapt  pretrained weight $W_0$ by an orthogonal transformation, i.e., 
\begin{equation*}
    W^+ = W_0 U, \qquad W_0 \in \mathbb R^{d_{\rm out} \times d_{\rm in}},U \in O(d_{\rm in}),
\end{equation*}
which preserves the pairwise neuron similarity of the pretrained model, i.e., $W^+ W^{+\top} = W_0 W_0^\top$. By contrast, the standard  additive adaptation $W^+ = W_0 + \Delta W$ places no structural constraint on $\Delta W$, and thus offers no guarantee on the geometry of the resulting weight matrix. The seminal work, LoRA \citep{hu2021lora} restricts $\Delta W$ to have rank at most $r$  via the factorization $\Delta W = AB$, achieving parameter efficiency but still lacking geometric control. LOFT builds on the geometry-preserving view of orthogonal fine-tuning, while confining the transformation to an explicit low-dimensional support subspace.

\subsection{LOFT: a Unified Framework for Orthogonal PEFT}
\label{sec:loft_parameterization}

\begin{figure}[t]
\centering
\includegraphics[width=0.97\linewidth]{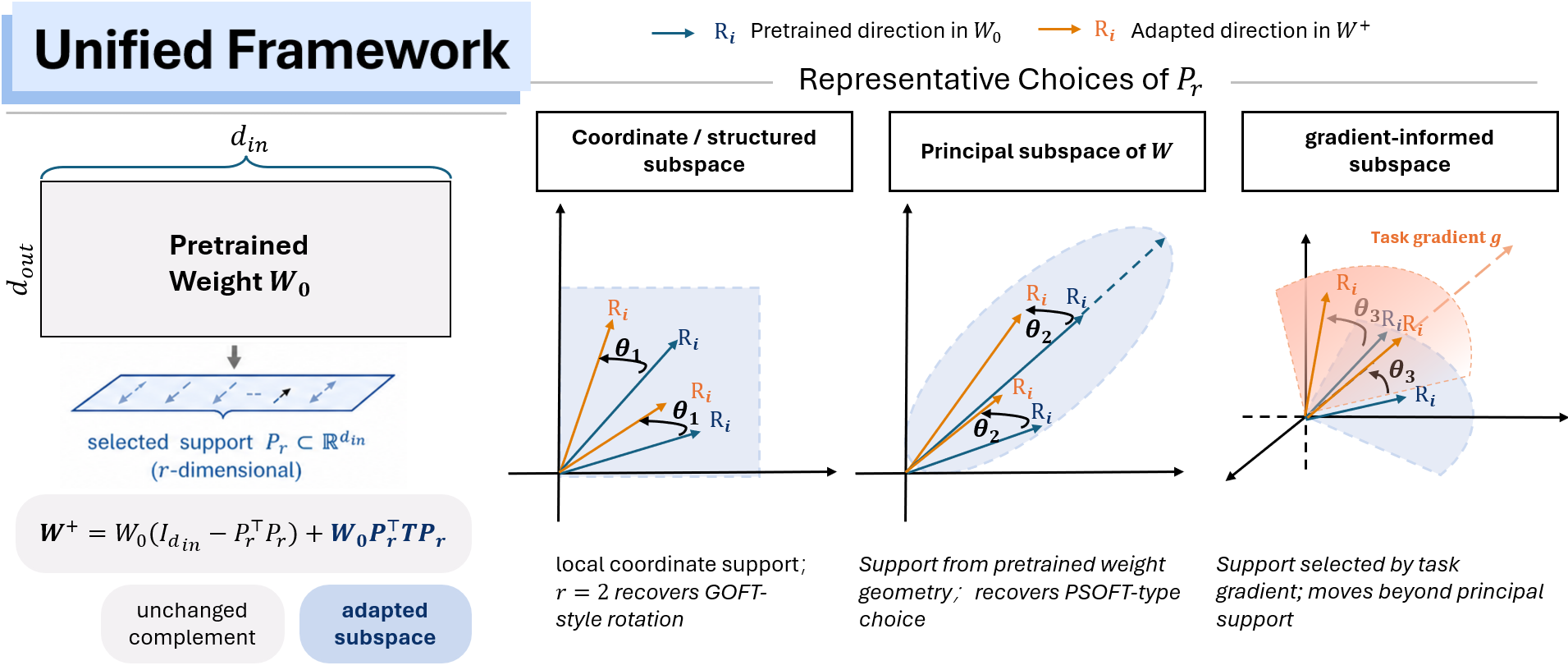}
\captionof{figure}{
\textbf{LOFT as right-subspace adaptation.}
Left: LOFT updates the pretrained weight \(W_0\) by applying an in-subspace transform \(T_r\) only on the selected right support \(P_r\), while leaving the orthogonal complement unchanged. 
Right: different choices of \(P_r\), including coordinate, principal, random, and task-informed supports, instantiate different orthogonal PEFT regimes within the same right-multiplicative form \(W^+=W_0\!\left(I+P_r^\top(T_r-I_r)P_r\right)\).
}
\label{fig:unified_framework}
\end{figure}

We formulate parameter-efficient orthogonal adaptation as a multiplicative transform acting on a selected low-dimensional input subspace. Let $P \in O(d_{\rm in})$, and denote $P_r$ as the first $r$ rows of $P$, \(P_r\in\mathbb{R}^{r\times d_{\mathrm{in}}}\), which forms row-orthonormal support basis with \(P_rP_r^\top=I_r\), and let \(T_r\in\mathbb{R}^{r\times r}\) be the learnable transform within this support. We parameterize the update as
\[
S(P_r,T_r):= P^\top \begin{bmatrix}
    T_r &0 \\
    0 & I_{d_{\rm in} - r}
\end{bmatrix} P =   I_{d_{\mathrm{in}}}+P_r^\top(T_r-I_r)P_r,
\qquad
W^+=W\,S(P_r,T_r).
\]
Equivalently, $W^+
=
W(I_{d_{\mathrm{in}}}-P_r^\top P_r)+WP_r^\top T_rP_r,$
so the selected subspace is transformed by \(T_r\), while its orthogonal complement is left unchanged. This cleanly separates two design choices: $P_r$ determines \textit{where} adaptation acts, and $T_r$ determines \emph{how} the selected subspace is transformed.  Notably, when \(T_r\in O(r)\), \(S(P_r,T_r) \in O(d_{\rm in})\). 

\begin{remark}[Connection to Group Theory]
For a fixed support $P_r$, the map $S(P_r, \cdot)$ is a Lie group embedding of $O(r)$ into $O(d_{\rm in})$, whose image is a closed Lie subgroup. Thus, the choice of $P_r$ selects \textit{which Lie subgroup} the training optimizes over.
\end{remark}

% This decomposition separates two design choices: \(P_r\) determines \emph{where} adaptation acts, and \(T_r\) determines \emph{how} the selected subspace is transformed. It can be verified that when

In the following proposition, we show the favorable properties of orthogonal fine-tuning, such as spectral norm and rank, are preserved under LOFT parameterization. This is shown to retain the pretrained model's learned representations structurally, improve training stability, and mitigate catastrophic forgetting \citep{qiu2023controlling,qiu2025scalable}.
% \CommentsByPJ{It would be great if we can add a sentence/phrase/reference on why these are favorable properties.}

\begin{proposition}[Geometry Preservation]
\label{prop:loft_orthogonality}
If \(P_rP_r^\top=I_r\) and \(T_r\in O(r)\), then \(S(P_r,T_r)\in O(d_{\mathrm{in}})\). Therefore, for \(W^+=W\,S(P_r,T_r)\),
\[
W^+W^{+\top}=WW^\top,\qquad
\operatorname{rank}(W^+)=\operatorname{rank}(W),\qquad
\sigma_i(W^+)=\sigma_i(W)\ \forall i,
\]
where $\sigma_i$ are singular values,
and consequently \(\|W^+\|_F=\|W\|_F\) and \(\|W^+\|_2=\|W\|_2\).
\end{proposition}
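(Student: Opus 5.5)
The plan is to first establish that \(S:=S(P_r,T_r)\) is orthogonal, and then derive every listed invariant from that single fact.

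For orthogonality I would argue in one of two ways.

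\emph{Route 1 (block form).} Extend \(P_r\) to a full orthogonal matrix \(P\in O(d_{\mathrm{in}})\) by completing its rows to an orthonormal basis; this is always possible because \(P_rP_r^\top=I_r\). Then \(S=P^\top\operatorname{diag}(T_r,I_{d_{\mathrm{in}}-r})P\). The block-diagonal middle factor is orthogonal because \(T_r\in O(r)\), and products of orthogonal matrices are orthogonal.

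\emph{Route 2 (direct expansion).} Write \(S=(I-\Pi_r)+P_r^\top T_rP_r\) and expand \(S^\top S\). The cross terms vanish because \((I-\Pi_r)P_r^\top=P_r^\top-P_r^\top P_rP_r^\top=0\). The projector term satisfies \((I-\Pi_r)^2=I-\Pi_r\). The remaining term is \(P_r^\top T_r^\top P_rP_r^\top T_rP_r=P_r^\top T_r^\top T_rP_r=\Pi_r\). Summing gives \(S^\top S=I\).

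Route 2 avoids the extension step, so I would present it and mention Route 1 as the conceptual picture.

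The consequences then follow immediately.
\begin{itemize}
\item \(W^+W^{+\top}=WSS^\top W^\top=WW^\top\), since a square \(S\) with \(S^\top S=I\) also satisfies \(SS^\top=I\).
\item Rank is preserved because \(S\) is invertible.
\item For the singular values, the nonzero \(\sigma_i(W^+)^2\) are the eigenvalues of \(W^+W^{+\top}=WW^\top\), which coincide with the \(\sigma_i(W)^2\). The number of nonzero singular values equals the rank, which already matches, so the full lists agree including zeros.
\item Alternatively, if \(W=U\Sigma V^\top\) is an SVD, then \(W^+=U\Sigma(S^\top V)^\top\) is an SVD of \(W^+\).
\item The norm equalities follow because \(\|\cdot\|_F^2=\sum_i\sigma_i^2\) and \(\|\cdot\|_2=\sigma_1\).
\end{itemize}

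There is no real obstacle. The only points needing care are the cross-term cancellation, which uses only \(P_rP_r^\top=I_r\), and noting that \(SS^\top=I\) (not just \(S^\top S=I\)) is what the Gram identity requires; this holds because \(S\) is square.
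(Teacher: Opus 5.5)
Your proposal is correct and matches the paper's proof. The paper uses exactly your Route 1: it completes \(P_r\) to \(P\in O(d_{\mathrm{in}})\), writes \(S=P^\top\operatorname{diag}(T_r,I)P\), and then derives Gram, rank, singular-value and norm preservation from the orthogonality of \(S\). The only cosmetic difference in the consequences is that the paper gets the singular values from the orthogonal similarity \((W^+)^\top W^+=S^\top W^\top W S\) rather than from the equality \(W^+W^{+\top}=WW^\top\).

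Your Route 2 expansion is a valid and slightly more self-contained check, since it needs only \(P_rP_r^\top=I_r\) and skips basis completion. The block form has the advantage of exhibiting \(S\) as conjugate to \(\operatorname{diag}(T_r,I)\), which is the picture the paper relies on elsewhere.
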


% \BM{Restructure the appendix in such a way that it is easier to find the proofs. The propositions proof probably needs to cite AH's earlier work?}

% Proofs are given in Appendix~\ref{app:orthogonality_proof} and Appendix~\ref{app:gram_preservation}. 
% This exact geometry preservation is the structural reason LOFT is formulated as a one-sided right-multiplicative framework: orthogonal LOFT changes the input-side coordinates of the layer while preserving pairwise row-neuron inner products.

More generally, LOFT can compose multiple orthogonal transforms,
\[
W^+
=
W\prod_{\ell=1}^{L} S\!\left(P_{r_\ell}^{(\ell)},T_{r_\ell}^{(\ell)}\right),
\]
which explores the subgroup of $O(d_{\rm in})$ generated by $L$ products of $O(r_\ell)$, $\ell  = 1, ..., L$. The composition is able to explore a larger subgroup depending on the diversity of supports $P^{(\ell)}$. This exposes multiple design axes: factor depth \(L\), factor width \( \{ r_\ell \}\), and support selection \(\{P_{r_\ell}^{(\ell)} \}\). Figure~\ref{fig:unified_framework} summarizes the decomposition.

% The orthogonal case gives the structural invariance used throughout the paper.

\subsection{Recovering Orthogonal PEFT Methods}
\label{sec:recoveries_main}

% In this section, we show LOFT recovers several representative orthogonal fine-tuning methods under different choices of factor depth, factor width, and subspace basis. Table~\ref{tab:recoveries} summarizes the design-level correspondences, while Appendix~\ref{app:recoveries} provides expanded derivations.

In this section, we show how the core multiplicative updates of several representative orthogonal fine-tuning methods can be written as special cases of LOFT under different choices of factor depth, factor width, and subspace basis.  
% Here, ``recover'' refers to the underlying orthogonal update map; method-specific relaxations, scaling vectors, regularizers, and implementation shortcuts are treated as additional design choices rather than part of the strict orthogonal LOFT core.  
Table~\ref{tab:recoveries} summarizes the design-level correspondences, while Appendix~\ref{app:recoveries} provides expanded derivations.

\begin{center}
\vspace{0.25em}
\captionsetup{type=table, skip=4pt}
\captionof{table}{
LOFT design map of representative orthogonal PEFT branches.
}
\label{tab:recoveries}
\scriptsize
\renewcommand{\arraystretch}{1.08}
\setlength{\tabcolsep}{3.4pt}
\begin{tabularx}{\linewidth}{@{}lcccX@{}}
\toprule
Methods & \(P_r\) & Width/Depth & Transform & Core mechanism \\
\midrule
Full OFT \citep{qiu2023controlling}
& \(I_{d_{\mathrm{in}}}\)
& \(r=d_{\mathrm{in}},\ L=1\)
& \(T_r\in O(d_{\mathrm{in}})\)
& Full orthogonal fine-tuning, \(W^+=WT_r\). \\

\addlinespace[1pt]
Block OFT \citep{qiu2023controlling}
& coordinate blocks
& block width \(b\)
& \(T_b\in O(b)\)
& Disjoint coordinate-block rotations forming a block-diagonal transform. \\

\addlinespace[1pt]
GOFT \citep{ma2024qgoft}
& coordinate pairs
& width \(2\), depth \(L\)
& \(T_2(\theta)\)
& Products of 2D coordinate-plane rotations. \\

\addlinespace[1pt]
BOFT \citep{liu2024parameterefficient}
& butterfly blocks
& width \(b\), multi-stage
& \(T_b\in O(b)\)
& Sparse block rotations composed across butterfly stages. \\

\addlinespace[1pt]
HRA \citep{yuan2024bridging}
& learnable \(u_\ell^\top\)
& width \(1\), depth \(L\)
& \(T_1=-1\)
& Products of Householder reflections, \(I-2u_\ell u_\ell^\top\). \\

\addlinespace[1pt]
PSOFT \citep{wu2025memory}
& \(V_r^\top\)
& \(r<d_{\mathrm{in}},\ L=1\)
& \(T_r\in O(r)\)
& Rotation inside the principal right-singular subspace of \(W_0\). \\
\bottomrule
\end{tabularx}
\vspace{-0.45em}
\end{center}

Several specializations are particularly relevant for subsequent discussions. 
When \(r=2\) and \(P_r\) is formed by two rows of a permutation matrix, LOFT applies a Givens rotation to the selected coordinate plane, thus recovering GOFT \citep{ma2024qgoft}. 
When \(r=1\), \(P_r=u^\top\) with \(\|u\|_2=1\), and \(T_1=-1\), LOFT gives \(S(u^\top,-1)=I-2uu^\top\). 
Thus products of Householder reflections \citep{yuan2024bridging} correspond to LOFT depth with learnable width-one subspace bases and fixed in-subspace reflections. 
When \(P_r\) is selected to be the top right-singular subspace of \(W_0\), i.e., \(P_r=V_r^\top\), where \(W_0=U_r\Sigma_rV_r^\top+U_\perp\Sigma_\perp V_\perp^\top\), LOFT gives \(W^+=U_\perp\Sigma_\perp V_\perp^\top+U_r\Sigma_rT_rV_r^\top\), which reduces to PSOFT \citep{wu2025memory} up to additional scaling components and relaxation designs.

% For coordinate supports, \(P_r\) selects a subset of input coordinates and
% \(W^+=W(I-P_r^\top P_r)+WP_r^\top RP_r\), so unselected coordinates are unchanged while the selected coordinate subspace is orthogonally transformed; when \(r=2\), this recovers a Givens rotation.

% For principal supports, let \(W=U_r\Sigma_rV_r^\top+U_\perp\Sigma_\perp V_\perp^\top\) and choose \(P_r=V_r^\top\). Then \(W^+=U_\perp\Sigma_\perp V_\perp^\top+U_r\Sigma_rRV_r^\top\), so the residual right-singular subspace remains fixed while the principal right-singular subspace is rotated. This is the principal-subspace orthogonal core of PSOFT; any additional relaxation components in a full PSOFT implementation are separate design choices.

% For Householder-style updates, taking \(r=1\), \(P_1=u^\top\) with \(\|u\|_2=1\), and \(R_1=-1\) gives \(S(u^\top,-1)=I-2uu^\top\). Thus products of Householder reflections correspond to LOFT depth with learnable width-one subspace bases and fixed in-subspace reflections.

\subsection{Task-Aware Adaptation Support Selection}
\label{sec:subspace_selection}

Despite the empirical success of parameter-efficient orthogonal fine-tuning, the selection of the support subspace has received little attention.  Existing works (e.g., PSOFT \citep{wu2025memory}) are mostly task-agnostic, with the support determined by pretrained structure. This offers no guarantee that the chosen subspace aligns with the downstream  signal. \textit{In the following, we provide a principled way to determine the support based on the initial training signal.}

% Existing works (e.g., PSOFT~\citep{wu2025memory}) are mostly task-agnostic, with the support determined by pretrained structure.

Specifically, we  parameterize the low-dimensional orthogonal transform $T_r = Q(E) \in O(r)$ for $E \in {\rm Skew}(r)$, where $Q(\cdot):  {\rm Skew}(r) \rightarrow O(r)$ such that $Q(0) = I_r, {\rm d} Q(t E)/ {\rm d} t\vert_{t=0} = E$. Examples of such $Q$ include Cayley transform \citep{qiu2023controlling,wu2025memory} and matrix exponential. Consider the proposed LOFT update \(W^+=W_0(I+P_r^\top(Q(E)-I_r)P_r)\), where we optimize over $E \in {\rm Skew}(r)$ from zero initialization because when $E = 0$, $W^+ = W_0$. 
The next proposition derives the gradient with respect to $E$ at initialization.

% Once the adaptation support \(P_r\) is explicit, orthogonal PEFT becomes a subspace-selection problem. A principal right-singular subspace of \(W_0\) is a natural capacity-oriented choice because it captures directions with large pretrained-weight energy. However, downstream adaptation is driven by the loss gradient rather than pretrained energy alone. We therefore ask which right subspace exposes the largest first-order orthogonal training signal at initialization.

  % and \(F:=\mathrm{skew}(W_0^\top G)=(W_0^\top G-G^\top W_0)/2\) the loss-coupled skew generator. Consider the right-subspace orthogonal factor \(W=W_0(I+P_r^\top(Q(S)-I_r)P_r)\), where \(S\in\mathrm{Skew}(r)\), \(Q(0)=I_r\), and \(dQ(tE)/dt|_{t=0}=E\) for every \(E\in\mathrm{Skew}(r)\).

\begin{proposition}[Initial Training Gradient]
\label{prop:signal-strength}
Let \(W_0 \) be the pretrained weight, and let \(G=\nabla_W L|_{W=W_0}\) be the gradient of downstream loss at $W_0$. We define $\widetilde L(E_t) := L \big(W_0(I + P_r^\top(Q(E_t)-I_r)P_r) \big)$.  For any \(E\in\mathrm{Skew}(r)\), it satisfies that
\[
\frac{d}{dt}
L\!\left(W_0\!\left(I+P_r^\top(Q(tE)-I_r)P_r\right)\right)\Big|_{t=0}
=
\left\langle P_r \mathrm{skew}(W_0^\top G) P_r^\top,E\right\rangle .
\]
Equivalently, on \(\mathrm{Skew}(r)\), the projected gradient and its norm satisfy
\[
\nabla_{E_t} \widetilde L|_{E_t=0}=P_r \mathrm{skew}(W_0^\top G) P_r^\top,
\qquad
\|\nabla_{E_t} \widetilde L|_{E_t=0}\|_F^2=\|P_r  \mathrm{skew}(W_0^\top G) P_r^\top\|_F^2
\leq
2\sum_{k=1}^{r/2}\mu_k^2.
\]
where the nonzero eigenvalues of \(\mathrm{skew}(W_0^\top G) \) are \(\{\pm i\mu_k\}\) with \(\mu_1\geq\mu_2\geq\cdots\geq0\). Equality is achieved when \(P_r\) spans the invariant subspace of \(F\) associated with the largest \(\lfloor r/2 \rfloor \) skew-eigenvalue pairs.
\end{proposition}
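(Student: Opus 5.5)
The plan is to prove the derivative identity by the chain rule and then bound the norm with an eigenvalue interlacing argument for skew-symmetric matrices. Write $F:=\mathrm{skew}(W_0^\top G)$, so that $F$ is the matrix the statement calls $F$ in its equality clause.

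\textbf{Derivative identity.} Set $W(t)=W_0(I+P_r^\top(Q(tE)-I_r)P_r)$. Since $Q(0)=I_r$ and $\tfrac{d}{dt}Q(tE)|_{t=0}=E$, the chain rule gives
\[
\tfrac{d}{dt}L(W(t))|_{t=0}=\langle G, W_0P_r^\top E P_r\rangle .
\]
Moving factors across the trace inner product turns this into $\langle P_rW_0^\top G P_r^\top, E\rangle$. Split $P_rW_0^\top GP_r^\top$ into its symmetric and skew parts. The symmetric part is orthogonal to every skew $E$, and the skew part equals $P_rFP_r^\top$ because $\mathrm{skew}$ commutes with congruence by $P_r$. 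This proves the first displayed identity.

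\textbf{Gradient on $\mathrm{Skew}(r)$.} The matrix $P_rFP_r^\top$ is itself skew-symmetric, so it is the Riesz representer of the linear functional on $\mathrm{Skew}(r)$ with the Frobenius inner product. Hence it is the gradient $\nabla_{E_t}\widetilde L|_{E_t=0}$, and the norm identity follows immediately.

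\textbf{Norm bound.} This is the real content. Let $H:=iF$, which is Hermitian with eigenvalues $\pm\mu_k$ and zeros. Then $P_rHP_r^\top=iP_rFP_r^\top$ is a compression of $H$ by an isometry, and
\[
\|P_rFP_r^\top\|_F^2=\sum_j \lambda_j(P_rHP_r^\top)^2 .
\]
By Cauchy interlacing, the $k$-th largest eigenvalue of the compression is at most $\lambda_k(H)$ and at least $\lambda_{d-r+k}(H)$. So the positive eigenvalues of the compression are dominated termwise by $\mu_1,\mu_2,\dots$, and the negative ones in absolute value likewise.

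The obstacle is controlling how many terms appear. The compression is real skew, so its spectrum is symmetric: exactly $\lfloor r/2\rfloor$ pairs $\pm\nu_k$, plus a zero when $r$ is odd. Interlacing then gives $\nu_k\le\mu_k$ for $k\le\lfloor r/2\rfloor$, and therefore
\[
\|P_rFP_r^\top\|_F^2=2\sum_{k\le\lfloor r/2\rfloor}\nu_k^2\le 2\sum_{k\le \lfloor r/2\rfloor}\mu_k^2 ,
\]
which is the stated bound with $r/2$ read as $\lfloor r/2\rfloor$.

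An alternative route is a Ky Fan-type argument: maximize $\|P_rFP_r^\top\|_F^2$, which is a convex function of the projector, over rank-$r$ projectors. That approach is messier, so I would use interlacing.

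\textbf{Equality.} Bring $F$ to real canonical form $F=V\,\mathrm{diag}(\mu_1J,\mu_2J,\dots)V^\top$, where $J=\begin{bmatrix}0&1\\-1&0\end{bmatrix}$. Choose the rows of $P_r$ to span the planes of the top $\lfloor r/2\rfloor$ blocks, together with a kernel vector or any remaining direction when $r$ is odd. Then $P_rFP_r^\top=\mathrm{diag}(\mu_1J,\dots)$, whose squared Frobenius norm is exactly $2\sum_{k\le\lfloor r/2\rfloor}\mu_k^2$, so the bound is attained.
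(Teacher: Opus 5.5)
Your proof is correct and follows the same route as the paper. You use the chain rule with $Q(tE)=I_r+tE+o(t)$, trace cyclicity, and the fact that only the skew part pairs with $E$ to get the identity. You then obtain $\nu_j\le\mu_j$ for the compressed skew matrix and use the canonical block form for equality. Your Cauchy interlacing on the Hermitian matrix $iF$ simply makes explicit the step the paper attributes to the ``variational characterization for skew-symmetric compressions.'' Your treatment of odd $r$ is slightly more careful than the paper's: you add one direction orthogonal to the top invariant subspace.
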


Proposition \ref{prop:signal-strength} proves that the training signal (measured by gradient norm) is maximized when the support $P_r$ spans  the top invariant subspace of $\mathrm{skew}(W_0^\top G)$.\footnote{We remark that a different parameterization of the low-dimensional orthogonal transform may induce a different support. Nevertheless, the same principle can be applied to derive a gradient-informed support selection criterion.} 

% \CommentsByPJ{One question here is - does different parameterization of low-dimensional orthogonal transform lead to different support selection criterion?}  \AH{This is a good question. I think it is. I can add a remark.}

\textbf{Gradient based support selection.}
% LOFT admits both task-agnostic and task-dependent supports. The principal-weight support \(P_r=V_r^\top\) uses the top right singular subspace of \(W_0\), recovering principal-subspace orthogonal PEFT, while a random orthonormal support provides a task-agnostic baseline. 
Motivated by the first-order analysis, we propose two gradient-based support selection strategies that capture downstream task information.
\begin{itemize}[leftmargin=0.3in]
\vspace{-5pt}
    \item \textsc{SkewGrad}: choosing $P_r$ to span the top-invariant subspace of $\mathrm{skew}(W_0^\top G)$. 

    \item \textsc{GradSVD}: choosing $P_r$ to be the top right singular subspace of gradient $G$.
\vspace{-5pt}
\end{itemize}
% \begin{center}
%     (1) \textsc{SkewGrad}: choosing $P_r$ to span the top-invariant subspace of $\mathrm{skew}(W_0^\top G)$. \quad (2) \textsc{GradSVD}: choosing $P_r$ to be the top right singular subspace of gradient $G$.
% \end{center}
% \textsc{SkewGrad} follows the theory by choosing \(P_r\) to span the top invariance subspace associated with the largest skew-eigenvalues of $\mathrm{skew}(W_0^\top G)$. \textsc{GradSVD} simplifies by choosing $P_r$ to be the top right singular subspace of gradient $G$.
% The first-order analysis motivates two task-dependent alternatives. \textsc{GradSVD} uses the top right singular subspace of the downstream gradient \(G\), providing a cheap proxy for task-responsive input directions but ignoring the interaction with \(W_0\). \textsc{SkewGrad} follows the theory directly: it constructs \(F=\mathrm{skew}(W_0^\top G)\) and chooses \(P_r\) to span the top invariant subspace associated with the largest skew-eigenvalue pairs. 
In our \textit{implementation}, \(G\) is estimated once (via a forward and backward pass) from a small calibration set, and the resulting support is fixed during training. In Section \ref{sect:add}, we show that the one-off computational cost is marginal compared to the principal support. We also highlight that it may further reduce the runtime under optimized implementation, such as gradient reuse. 

% We discuss the computational cost relative to principal support in Section \ref{sect:add}. 

%
LOFT  is implemented as a fixed-support PEFT adapter on the target linear layers. The adapter applies \(x\mapsto S(P_r,T_r) x  = x+ P_r^\top(T_r-I_r)P_r x\) before the frozen linear projection, which is equivalent to using the merged weight \(W^+=WS(P_r,T_r)\). At inference time, LOFT performs the additive update $\Delta W = W^+-W = WP_r^\top(T_r-I_r)P_r$. We can show that $\operatorname{rank}(\Delta W)\le r$.
Thus we can view LOFT as multiplicative from its parameterization but low-rank in its induced additive update.

\begin{remark}[Optimality under Linearization]
\label{rmk:opt}
Existing works \citep{malladi2023kernel,jang2024lora}  have shown that fine-tuning often operates in a near-linear regime: the model remains close to pretrained initialization and is well-approximated by its first-order expansion, the so-called lazy-training or NTK regime \citep{jacot2018neural}. Under gradient flow in this regime, the loss evolves as, $L(W^+) \approx L(W) - t \| \nabla_{E_t} \widetilde L \vert_{E_t = 0}  \|_F^2$. So the support that maximizes the initial gradient norm remains approximately optimal throughout training. In contrast, choosing $P_r$ as the top subspace of $W_0$ (as in PSOFT \citep{wu2025memory}) maximizes pretrained-weight energy $\| W_0 P_r^\top \|^2_F$, which is sub-optimal unless the downstream gradient $G$ aligns with the dominant right-singular directions of $W_0$. In practice, this alignment is not guaranteed.
\end{remark}

\begin{remark}[Additional Support Choices]
We also highlight additional support choices within the LOFT framework. Rather than a static support, one may consider a \textit{dynamic support} that evolves over the course of training. In its simplest form, a dynamic random support corresponds to randomized submanifold gradient descent on the orthogonal manifold \citep{han2025efficient}, which enjoys a global convergence guarantee and, in principle, recovers the full orthogonal fine-tuning. Alternatively, one can adopt a dynamic gradient-informed support by periodically updating the support using gradient information, in a similar spirit to GaLore \citep{zhao2024galore}.
\end{remark}

% \BM{is there something equivalent of prop 3 for free $T_r$? there should be right, it is just find support that maximizes the squreed norm of grad wrt Tr. It recovers something? lie Pprincipal or Pgrad?}

% \AH{It will be $P_r W_0^\top G P_r^\top$ and there is no skew operation.}

% The proposition identifies the missing support-selection principle. The best first-order support is not the subspace with largest pretrained-weight energy, but the subspace on which the loss-coupled skew generator \(F=\mathrm{skew}(W_0^\top G)\) has the largest action. Principal singular subspaces are therefore optimal only under a special compatibility condition between the pretrained spectrum of \(W_0\) and the downstream gradient \(G\); generically, there is no reason for the top right-singular subspace of \(W_0\) to coincide with the top invariant subspace of \(F\).

% Thus, gradient-informed LOFT changes where the adapter acts while preserving the same parameter-efficient training and mergeable inference structure as principal-support orthogonal PEFT.

% \textbf{Implementation and a Low-Rank View.}
% \label{sec:implementation}

% \[
% \Delta W = W^+-W = WP_r^\top(T_r-I_r)P_r,
% \qquad
% \operatorname{rank}(\Delta W)\le r .
% \]
% The rank bound follows because the update factors through the \(r\)-dimensional selected right subspace (the proof is given in Appendix~\ref{app:rank_proof}).
% Orthogonal LOFT preserves pretrained row geometry exactly, while free LOFT relaxes the in-subspace constraint to gain local adaptation flexibility under the same support.

\section{Experiments}

Our experiments evaluate the support-selection view of orthogonal PEFT across encoder-only language understanding, mathematical reasoning and visual transfer, organized around the questions: \textbf{(1)} Does LOFT recover the principal-subspace orthogonal regime represented by PSOFT as a special case? \textbf{(2)} Do task-calibrated supports improve over principal-weight supports under matched parameter and transform budgets? 
% \textbf{(3)} After fixing the support, when does orthogonality in the in-subspace transform help relative to an unconstrained transform?
% \begin{enumerate}[leftmargin=*, itemsep=2pt]
% \item Does LOFT recover the principal-subspace orthogonal regime represented by PSOFT as a special case?
% \item Do task-calibrated supports improve over principal-weight supports under matched parameter and transform budgets?
% \item After fixing the support, when does orthogonality in the in-subspace transform help relative to an unconstrained transform?
% \end{enumerate}
Throughout, we report trainable parameters, peak memory, and task performance. The central empirical observation is that LOFT's gains stem from selecting a task-aligned support, not merely from a new orthogonal parameterization. 
% Because LOFT factorizes the design into independent axes — the support $P_r$, the transform class of $T_r$, the rank $r$, and the support construction rule. Each axis can be varied while holding the others fixed. 
% This allows to attribute improvements to specific design choices, rather than comparing one adapter against another as a monolithic unit.

% Our experiments test the support-selection view of orthogonal PEFT across encoder-only language understanding, visual transfer, and mathematical reasoning. We focus on three questions: whether LOFT recovers the principal-subspace orthogonal regime represented by PSOFT; whether task-calibrated supports improve over principal-weight supports under matched parameter and transform budgets; and, after fixing the support, when orthogonality in the in-subspace transform helps relative to an unconstrained transform. Throughout, we report trainable parameters, peak memory, and task performance. The central empirical claim is that LOFT's gains come not merely from a new orthogonal parameterization, but from selecting a task-aligned adaptation support. Unlike a single-parameterization comparison, LOFT provides a controlled design space in which the support \(P_r\), the transform class \(T_r\), the rank \(r\), and the data-informed support construction rule can be varied independently. This allows us to ask which design axis accounts for the improvement, rather than only comparing one adapter name against another.

\textbf{Notation.}
For LOFT, \(P_r\) denotes the  adaptation support and \(T_r\) the in-subspace transform. We write \(P_r=\pprin\) for the principal-weight support from the top right singular subspace of \(W_0\), \(P_r=\pgrad\) for GradSVD support from the top right singular subspace of the downstream gradient \(G\), and \(P_r=\pskew\) for SkewGrad support from the top invariant subspace of \(\mathrm{skew}(W_0^\top G)\). 
% Unless stated otherwise, \(T_r=\Torth\in O(r)\); \(T_r=\Tfree\) denotes the unconstrained identity-initialized dense transform used only in ablations. 
In result tables, \(\loft_{r}^{P}\) denotes a LOFT adapter with rank \(r\) and support \(P\).
% omitted transform notation means \(T_r=\Torth\). 
Main tables list only the support \(P_r\). Full \(P_r/T_r\) ablations are reported for GLUE and VTAB in Appendix~\ref{app:support_transform_ablations}.

\textbf{Baselines.}
Across benchmarks, we compare LOFT with representative reparameterization-based PEFT methods. The low-rank family includes LoRA, PiSSA, DoRA, and LoRA-XS \citep{hu2021lora,meng2024pissa,liu2024dora,balazy2024loraxs}. The orthogonal family includes GOFTv2/qGOFTv2, BOFT, OFTv2, and PSOFT \citep{ma2024qgoft,liu2024parameterefficient,qiu2025scalable,wu2025memory}. We cite the baseline families here and refer to them below as the baseline suite; benchmark-specific protocol differences and additional ablations are described in the corresponding appendix sections.

The code is available at \url{https://github.com/Kris-camp/loft}.

\subsection{GLUE on DeBERTaV3-base}

\paragraph{Experimental setting.}
\begin{wraptable}{r}{0.56\linewidth}
\vspace{-1.2\baselineskip}
\centering
\tiny
\renewcommand{\arraystretch}{0.96}
\setlength{\tabcolsep}{1.55pt}
\caption{
 GLUE results on DeBERTaV3-base. 
Reported values are averaged over five random seeds, and Avg. is highlighted. 
Mem.(GB) reports peak GPU memory on a single NVIDIA V100 64GB GPU. 
Full experimental settings are provided in Appendix~\ref{app:glue_details}.
}
\label{tab:glue_main}
\begin{adjustbox}{max width=\linewidth}
\begin{tabular}{lccccccccc}
\toprule
Method & \#Params & Mem.(GB) & CoLA & STS-B & RTE & MRPC & SST-2 & QNLI & \cellcolor{avgblue}Avg. \\
\midrule
FFT & 184M & 5.9 & 67.56 & 91.46 & 82.88 & 90.69 & 94.13 & 93.37 & \cellcolor{avgblue}86.68 \\
GOFTv2 & 0.08M & 18.5 & 65.45 & \multicolumn{5}{c}{OOM} & \cellcolor{avgblue}N/A \\
qGOFTv2 & 0.33M & 18.5 & 68.03 & \multicolumn{5}{c}{OOM} & \cellcolor{avgblue}N/A \\
BOFT$_{m=2}^{b=8}$ & 1.41M & 6.3 & 68.85 & 91.09 & 83.60 & 88.40 & 95.28 & 93.78 & \cellcolor{avgblue}86.83 \\
OFTv2$_{b=32}$ & 1.29M & 4.5 & 66.79 & 91.22 & 84.03 & 89.61 & 93.72 & 92.64 & \cellcolor{avgblue}86.34 \\
LoRA$_{r=8}$ & 1.33M & 4.5 & 67.98 & 91.60 & 84.87 & 90.20 & 95.28 & 93.89 & \cellcolor{avgblue}87.30 \\
PiSSA$_{r=8}$ & 1.33M & 4.5 & 66.50 & 91.40 & 83.77 & 89.90 & 93.17 & 92.72 & \cellcolor{avgblue}86.24 \\
DoRA$_{r=8}$ & 1.41M & 5.8 & 67.06 & 91.60 & \textbf{87.19} & 90.49 & 95.23 & \textbf{94.09} & \cellcolor{avgblue}87.61 \\
LoRA-XS$_{r=136}$ & 1.33M & 4.2 & 64.67 & 91.48 & 84.17 & 91.27 & 93.85 & 93.14 & \cellcolor{avgblue}86.43 \\
PSOFT$_{r=46}$ & 0.08M & 4.1 & 70.42 & 91.56 & 86.74 & 90.49 & 95.55 & 93.47 & \cellcolor{avgblue}88.04 \\
\midrule
\rowcolor{psoftgreen}
\loft$_{r=46}^{\scriptscriptstyle \pprin}$ 
& 0.07M & 4.1 & 70.52 & 91.37 & 86.19 & 90.90 & \textbf{95.73} & 93.63 & \cellcolor{avgblue}88.06 \\
\rowcolor{psoftgreen}
\loft$_{r=46}^{\scriptscriptstyle \pgrad}$
& 0.07M & 4.1 & 70.63 & 91.32 & 87.05 & 91.67 & 95.28 & 94.39 & \cellcolor{avgblue}88.39 \\
\rowcolor{psoftgreen}
\loft$_{r=46}^{\scriptscriptstyle \pskew}$ 
& 0.07M & 4.1 & \textbf{72.03} & \textbf{91.75} & 86.83 & \textbf{92.36} & 95.42 & 93.97 & \cellcolor{avgblue}\textbf{88.73} \\
\bottomrule
\end{tabular}
\end{adjustbox}
\vspace{-1.2\baselineskip}
\end{wraptable}

We begin with a controlled encoder-only benchmark on six GLUE tasks CoLA, STS-B, RTE, MRPC, SST-2, and QNLI, using DeBERTaV3-base \citep{he2021debertav3}. We follow the same hold-out validation protocol as PSOFT \citep{wu2025memory}: the original validation split is partitioned into validation/test subsets with a fixed seed, model selection is performed on the new validation split, and the reported score is the held-out test score of the best validation checkpoint. Results are averaged over five random seeds. Following GLUE convention \citep{wang2018glue}, we report Matthews correlation on CoLA, Pearson correlation on STS-B, and accuracy on the remaining tasks.

In the main text, we report the default orthogonal LOFT transform \(T_r=\Torth\) and focus on \(P_r=\pprin\) and the proposed \(P_r=\pgrad\), \(P_r=\pskew\). All LOFT results are averaged across five random seeds and the results from the baselines are extracted from \citep{wu2025memory}. We report additional LOFT variants in Appendix~\ref{app:support_transform_ablations}.

% \textbf{Main results.}
% From Table~\ref{tab:glue_main}, LOFT recovers the principal-subspace orthogonal regime with \(P_r=\pprin\), where it matches PSOFT with a comparable average score (\(88.06\) vs.\ \(88.04\)), slightly fewer trainable parameters (\(0.07\)M vs.\ \(0.08\)M), and the same \(4.1\)GB peak memory. 
% % This sanity check rules out protocol or implementation differences as the source of later gains.
% The main improvement comes from support selection. Replacing the principal support with the proposed SkewGrad support improves the six-task average to \(88.73\), a \(+0.69\) point gain over PSOFT and a \(+0.67\) point gain over principal-support LOFT, without increasing trainable parameters or peak memory. Compared with standard low-rank baselines such as LoRA and DoRA, \(\loft_{r=46}^{\pskew}\) uses about \(20\times\) fewer trainable parameters while achieving a higher average score. The largest task-level gains occur on CoLA and MRPC, consistent with the first-order prediction that useful orthogonal adaptation support is governed by the downstream loss-coupled generator \(\mathrm{skew}(W_0^\top G)\), rather than only by the principal singular subspace of \(W_0\). 

\textbf{Main results.}
From Table~\ref{tab:glue_main}, LOFT recovers the principal-subspace orthogonal regime with \(P_r=\pprin\), where it matches PSOFT with a comparable average score (\(88.06\) vs.\ \(88.04\)), slightly fewer trainable parameters (\(0.07\)M vs.\ \(0.08\)M), and the same \(4.1\)GB peak memory. 
The main improvement comes from support selection. Replacing the principal support with the proposed SkewGrad support improves the six-task average to \(88.73\), 
% corresponding to about a \(0.8\%\) relative improvement over both PSOFT and principal-support LOFT, 
without increasing trainable parameters or peak memory. 
Compared with standard low-rank baselines such as LoRA and DoRA, \(\loft_{r=46}^{\pskew}\) uses about \(20\times\) fewer trainable parameters while achieving a higher average score. 
This supports the view that \(P_r\) is a meaningful design axis: under the same rank, transform class, parameter count, and peak-memory footprint, replacing the pretrained-weight principal support with a loss-informed support improves the efficiency--performance trade-off. 
% \AH{How supported by first-order analysis?}\LZ{fixed}

% We report an additional MNLI-matched/QQP follow-up in Appendix~\ref{app:mnli_qqp_details}. 
% This follow-up uses the original GLUE development-set protocol rather than the six-task hold-out protocol in Table~\ref{tab:glue_main}. \AH{what is the conclusion, similar observation?}
% so we treat it as supportive evidence rather than as part of the primary controlled GLUE comparison. 

\subsection{Mathematical Question Answering on MetaMathQA-40K}

\begin{wraptable}{r}{0.55\columnwidth}
\vspace{-1.1\baselineskip}
\centering
\scriptsize
\renewcommand{\arraystretch}{1.05}
\setlength{\tabcolsep}{3.2pt}
\caption{
Mathematical reasoning results on GSM8K and MATH under a matched low-budget setting. 
All methods are run on a single A100-SXM GPU, except HRA, which is run on a single H200-SXM GPU due to OOM on A100-SXM. 
Mem.(GB) reports peak GPU memory allocated during training, measured with the same instrumentation; details are in Appendix~\ref{app:math_details}.
}
\label{tab:math_main}
\resizebox{\linewidth}{!}{
\begin{tabular}{lcc|cc}
\toprule
Method & \#Params & Mem.(GB) & GSM8K & MATH \\
\midrule
BOFT$_{m=1}^{b=2}$ & 0.786M & 63.24 & 37.33 & 4.38 \\
LoRA$_{r=1}$ & 0.524M & 49.5 & 36.32 & 4.68 \\
HRA$_{r=2,\lambda=\infty}$ & 0.524M & 124.92 & 36.85 & 5.02 \\
PSOFT$_{r=128}$ & 0.536M & 28.67 & 35.94 & 4.58 \\
\midrule
\rowcolor{psoftgreen}
Ortho LOFT$_{r=128}^{\mathrm{prin}}$ 
& 0.520M & 25.68 & 34.95 & 4.98 \\
\rowcolor{psoftgreen}
Ortho LOFT$_{r=128}^{\mathrm{grad}}$ 
& 0.520M & 25.68 & \textbf{40.94} & 5.30 \\
\rowcolor{psoftgreen}
Ortho LOFT$_{r=128}^{\mathrm{skew}}$ 
& 0.520M & 25.68 & 39.65 & \textbf{5.52} \\
\bottomrule
\end{tabular}
}
\vspace{-2.\baselineskip}
\end{wraptable}

\textbf{Experiment Settings.}
We consider a matched low-budget setting where all main methods use approximately \(0.5\)M trainable parameters, and report the finalized evaluation result under the tuned learning rate and a shared evaluation pipeline. Implementation details, learning rates, and evaluation scripts are deferred to Appendix~\ref{app:math_details}.

% \textbf{Main Results.} Table~\ref{tab:math_main} also confirms that the principal-support LOFT variant remains close to PSOFT. 
% which is expected since both methods define the adaptation subspace from pretrained weight geometry. 
% However, both gradient-informed supports substantially improve over the principal-support variant while keeping the same rank, parameter count, and memory footprint. 
% This indicates that dominant pretrained directions are not necessarily the directions most useful for downstream reasoning.

\textbf{Main Results.} Table~\ref{tab:math_main} shows that the support-selection effect observed on GLUE also transfers to decoder-side mathematical reasoning. Principal-support LOFT remains close to PSOFT, while the task-aware supports improve performance under the same rank and parameter count. 
GradSVD achieves the best GSM8K score, and SkewGrad achieves the best MATH score, suggesting that downstream optimization signals identify more useful adaptation supports than pretrained principal directions alone.
% All LOFT variants use only \(25.68\)GB peak memory, the lowest among the compared methods, yielding a stronger performance--memory trade-off.

% In contrast, gradient-informed supports provide a stronger task-aligned subspace. 
% GradSVD gives the best GSM8K result, and SkewGrad gives the best MATH result
% improving over principal support by \(5.99\) points and over PSOFT by \(5.00\) points 
% under a comparable parameter budget. 
% , improving over principal support by \(0.54\) points and over PSOFT by \(0.94\) points. 
% Moreover, both GradSVD and SkewGrad outperform principal support on both benchmarks, showing that the gain does not come from increasing rank or parameter count, but from choosing a support better aligned with task-specific optimization signals.

% In terms of memory, all LOFT variants consumes a peak GPU memory of \(25.68\)GB, reducing peak memory by \(47.2\%\) relative to PSOFT, \(48.1\%\) relative to LoRA, \(59.4\%\) relative to BOFT, and \(79.4\%\) relative to HRA in our implementation. 
% Thus, gradient-informed LOFT improves over principal-support LOFT without increasing  memory consumption. 
% Overall, LOFT with gradient-informed support provides a more favorable performance memory trade-off. 
% improving mathematical reasoning performance while preserving the lowest measured peak memory among the compared low-budget methods.
In terms of memory, all LOFT variants consume a peak GPU memory of \(25.68\)GB, 
which is the lowest among the compared methods. 
This corresponds to a \(10.4\%\) reduction relative to PSOFT, \(48.1\%\) relative to LoRA, 
\(59.4\%\) relative to BOFT, and \(79.4\%\) relative to HRA in our implementation. 
Thus, gradient-informed LOFT improves over principal-support LOFT without increasing memory consumption.
Overall, LOFT with gradient-informed support provides a more favorable performance--memory trade-off.

\subsection{Visual Transfer on VTAB-1K}

\textbf{Experimental setting.}
Additionally, we evaluate visual transfer on VTAB-1K \citep{zhai2019vtab} by fine-tuning ViT-B/16 \citep{dosovitskiy2021vit} under the standard VTAB-1K transfer protocol. 
Each task uses 800 labeled training examples for adaptation and 200 validation examples for hyperparameter tuning, and performance is reported as top-1 test accuracy. 
% Peak memory is measured on the same NVIDIA V100 64GB GPU used for GLUE. 
% We compare against the applicable methods from the baselines defined above. 
In the main table, we report the proposed GradSVD and SkewGrad supports under the default orthogonal LOFT transform; principal-support controls and additional support/transform variants are reported in Appendix~\ref{app:support_transform_ablations}.
All LOFT results are averaged across five random seeds, and the baseline results are extracted from \citep{wu2025memory}.
% GradSVD and free-transform variants are reported in Appendix~\ref{app:support_transform_ablations}.

\textbf{Main results.}
% Table~\ref{tab:vtab_main} tests whether the support-selection effect observed on GLUE transfers to visual adaptation. 
Table~\ref{tab:vtab_main} provides further evidence that support selection is a useful design axis beyond language tasks. 
SkewGrad gives the strongest VTAB average, improving over PSOFT from \(73.4\) to \(73.8\), while using fewer trainable parameters (\(0.07\)M vs.\ \(0.08\)M) and lower peak memory (\(5.7\)GB vs.\ \(6.2\)GB). 
Compared with standard low-rank baselines such as LoRA and DoRA, \(\loft_{r=46}^{\pskew}\) achieves the best average score with roughly an order of magnitude fewer trainable parameters. 
Since the \(r=46\) principal-support variant in Appendix~\ref{app:support_transform_ablations} does not improve the average, the gain is better explained by support selection than by rank alone.

\begin{table}[!tbp]
\vspace{-0.7em}
\centering
\captionsetup{skip=2pt}
\tiny
\renewcommand{\arraystretch}{1.05}
\setlength{\tabcolsep}{1.55pt}
\caption{
Experimental results of fine-tuned ViT-B/16 on VTAB-1K. 
Reported values are top-1 accuracy (\%) averaged over five random seeds. 
Mem.(GB) reports peak GPU memory on a single NVIDIA V100 64GB GPU; full settings are in Appendix~\ref{app:vtab_details}.
}
\label{tab:vtab_main}
\vspace{-0.25em}
\begin{adjustbox}{width=\linewidth}
\begin{tabular}{lcc|ccccccc|cccc|cccccccc|c}
\toprule
& & & \multicolumn{7}{c|}{Natural} & \multicolumn{4}{c|}{Specialized} & \multicolumn{8}{c|}{Structured} & \\
\cmidrule(lr){4-10} \cmidrule(lr){11-14} \cmidrule(lr){15-22}
Method & \vthead{\#Params} & \vthead{Mem.(GB)}
& \vthead{Cifar100}
& \vthead{Caltech101}
& \vthead{DTD102}
& \vthead{Flower102}
& \vthead{Pets}
& \vthead{SVHN}
& \vthead{Sun397}
& \vthead{Camelyon}
& \vthead{EuroSAT}
& \vthead{Resisc45}
& \vthead{Retinopathy}
& \vthead{Clevr-Count}
& \vthead{Clevr-Dist}
& \vthead{DMLab}
& \vthead{KITTI-Dist}
& \vthead{dSpr-Loc}
& \vthead{dSpr-Ori}
& \vthead{sNORB-Azi}
& \vthead{sNORB-Ele}
& \cellcolor{avgblue}\vthead{Avg.} \\
\midrule
FFT & 85.9M & 8.2
& 70.7 & 89.3 & 69.5 & 99.0 & 90.4 & 81.7 & 54.9
& 85.4 & 93.6 & 83.8 & 74.5
& 58.3 & 51.5 & 43.2 & 75.0 & 73.1 & 48.7 & 16.4 & 30.0
& \cellcolor{avgblue}67.8 \\

GOFTv2 & 0.08M & OOM
& \multicolumn{20}{c}{N/A} \\

qGOFTv2 & 0.33M & OOM
& \multicolumn{20}{c}{N/A} \\

BOFT$_{m=2}^{b=8}$ & 1.41M & 10.9
& 70.6 & 88.2 & 69.8 & 99.0 & 91.4 & 77.4 & 55.1
& 85.1 & 93.6 & 82.3 & 74.9
& 61.8 & 50.4 & 42.9 & 76.1 & 73.7 & 48.8 & 15.7 & 30.8
& \cellcolor{avgblue}70.9 \\

OFTv2$_{b=32}$ & 1.29M & 7.7
& 68.5 & 88.9 & 67.5 & 98.4 & 89.5 & \textbf{86.9} & 53.6
& 86.0 & 94.1 & \textbf{84.2} & 74.6
& 58.7 & 56.4 & \textbf{46.7} & 78.5 & 81.1 & 48.1 & 17.3 & 32.5
& \cellcolor{avgblue}72.1 \\

LoRA$_{r=8}$ & 1.33M & 9.9
& 71.4 & 88.4 & 70.1 & 99.0 & 91.4 & 76.6 & 55.7
& 85.9 & 94.2 & 83.3 & 74.1
& \textbf{72.0} & 54.3 & 43.0 & 76.6 & 74.8 & 48.6 & 16.4 & 31.8
& \cellcolor{avgblue}71.8 \\

PiSSA$_{r=8}$ & 1.33M & 9.9
& 70.7 & 88.7 & 68.9 & 99.2 & 91.0 & 81.9 & 53.3
& 82.6 & 93.4 & 83.0 & 74.0
& 71.0 & \textbf{60.2} & 44.0 & 77.1 & 81.9 & 51.8 & 18.1 & 33.1
& \cellcolor{avgblue}72.3 \\

DoRA$_{r=8}$ & 1.41M & 17.8
& 70.7 & 89.0 & 69.8 & 98.9 & 91.0 & 81.7 & 55.5
& 85.7 & 94.2 & 83.5 & 74.8
& 67.3 & 54.2 & 45.1 & 77.4 & \textbf{82.0} & 48.5 & 16.9 & 31.5
& \cellcolor{avgblue}72.3 \\

LoRA-XS$_{r=136}$ & 1.33M & 6.6
& 68.5 & 89.4 & 68.4 & 98.7 & 90.9 & 84.5 & 54.1
& 84.0 & 94.3 & 80.8 & 73.6
& 60.0 & 57.7 & 45.8 & 79.6 & 80.6 & 48.1 & 17.4 & 30.8
& \cellcolor{avgblue}71.6 \\

PSOFT$_{r=46}$ & 0.08M & 6.2
& \textbf{71.9} & 89.6 & \textbf{70.3} & 99.1 & 91.8 & \textbf{86.9} & \textbf{55.9}
& 84.6 & 94.2 & 82.4 & 75.2
& 71.2 & 59.9 & 45.7 & 79.6 & 80.9 & \textbf{52.9} & \textbf{20.0} & 32.9
& \cellcolor{avgblue}73.4 \\

\midrule

\rowcolor{psoftgreen}
\loft$_{r=46}^{\scriptscriptstyle \pgrad}$ & 0.07M & 5.7
& 69.9 & 89.6 & 69.4 & 99.2 & 92.0 & 85.2 & 55.2
& 86.4 & \textbf{95.7} & 82.7 & 75.3
& 68.9 & 59.8 & 46.2 & 79.8 & 79.8 & 50.2 & 19.8 & \textbf{33.2}
& \cellcolor{avgblue}73.3 \\
\rowcolor{psoftgreen}
\loft$_{r=46}^{\scriptscriptstyle \pskew}$ & 0.07M & 5.7
& 71.8 & \textbf{90.1} & 70.2 & \textbf{99.3} & \textbf{92.1} & 86.2 & 55.7
& \textbf{86.7} & \textbf{95.0} & 83.7 & \textbf{75.5}
& 71.2 & 59.9 & 46.1 & \textbf{80.1} & 80.1 & 52.5 & \textbf{20.0} & 33.1
& \cellcolor{avgblue}\textbf{73.8} \\
\bottomrule
\end{tabular}
\end{adjustbox}
% \vspace{-1.2em}
\end{table}

% Across-seed standard deviations for the VTAB LOFT variants are reported in Appendix~\ref{app:support_transform_ablations},  showing that the gradient-informed supports remain stable across seeds, with average task-level standard deviations below \(0.4\) percentage points.

The main message is that the benefit of task-informed support selection transfers beyond language understanding. 
Together with the GLUE results, this suggests that SkewGrad captures a general support-selection signal and is not benchmark specific. 
Although the VTAB gain over PSOFT is modest, it improves the average over all 19 VTAB tasks while using fewer trainable parameters and lower peak memory, supporting the efficiency--performance trade-off of task-informed support selection.

\subsection{Training Dynamics under the Matched GLUE Protocol}
\label{sec:training_dynamics}

Table~\ref{tab:glue_main} reports the test performance selected by the best validation checkpoint. 
Here, we ask whether the task-aware support rule only improves this endpoint, or also changes how quickly the model reaches a good solution. 
This connects the first-order analysis in Proposition~\ref{prop:signal-strength} to the full-training discussion in Remark~\ref{rmk:opt}. 
Appendix~\ref{app:short_horizon_diag} isolates the initialization-local signal using controlled probe and early-validation diagnostics. At the full-training scale, we find the same support-selection effect: SkewGrad separates from Principal and Random along the training path, reaching lower loss faster and stronger held-out metrics on both CoLA and STS-B under the matched protocol.
% \AH{write out the main findings explicitly.}
% \textbf{Main finding.}
% Under the matched protocol, SkewGrad generally reaches low training loss faster and gives stronger held-out metrics than Principal and Random. 
% Since epoch \(0\) is shared before any update, the later separation shows that the task-aware support changes the adaptation path itself, not only the final selected checkpoint.

For representative GLUE tasks, we report training loss and held-out validation performance at epochs \(0,1,5,10,15,20\). 
CoLA uses Matthews correlation and STS-B uses Pearson correlation as in Table~\ref{tab:glue_main}. 
All variants use the same rank, optimizer, learning-rate schedule, data split, and orthogonal LOFT transform. 
Only the support \(P_r\) is changed: Random, Principal, or SkewGrad. 
Thus, the trajectories isolate the effect of where the orthogonal update is allowed to act.

\begin{figure*}[t]
    \centering
    \resizebox{\textwidth}{0.25\height}{%
        \includegraphics[
            trim=0pt 8pt 0pt 8pt,
            clip
        ]{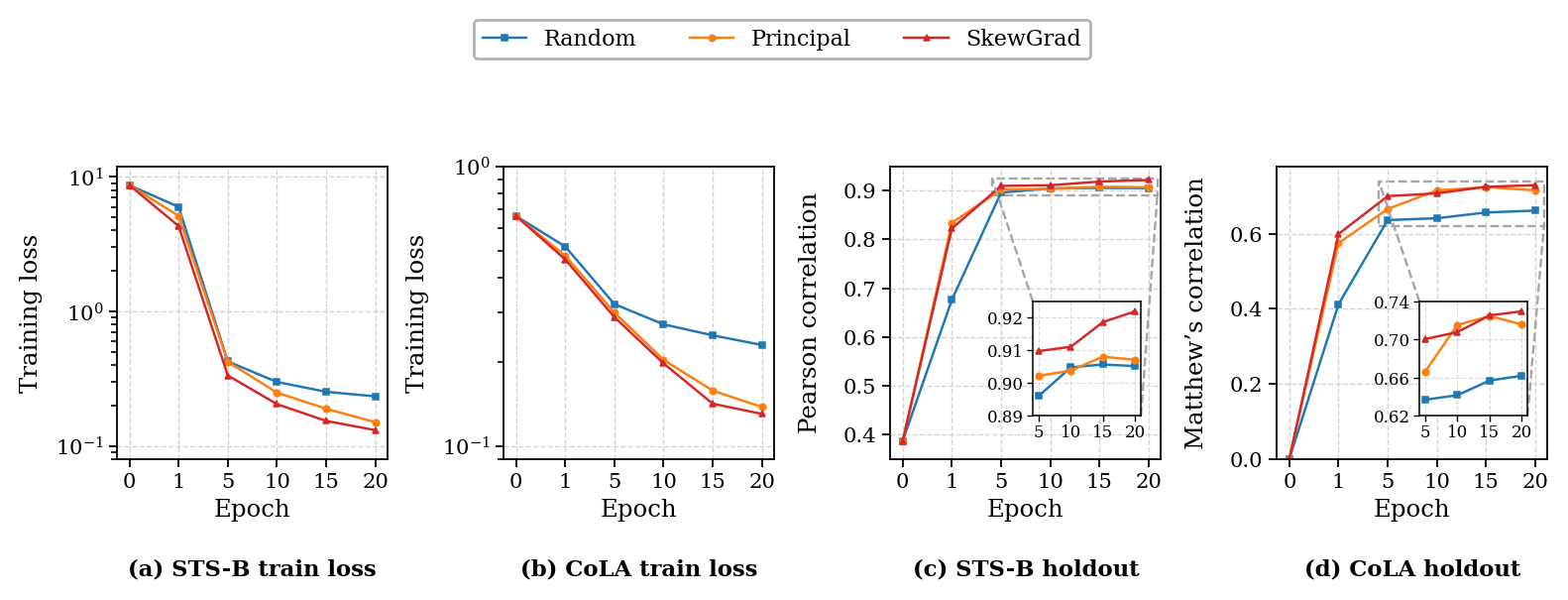}%
    }
    \caption{
    Training dynamics under the matched GLUE protocol.
    We compare Random, Principal, and SkewGrad supports at epochs \(0,1,5,10,15,20\).
    Panels (a,b) show training loss on STS-B and CoLA; panels (c,d) show held-out Pearson on STS-B and Matthews on CoLA.
    All settings are matched, including rank, optimizer, schedule, split, and the orthogonal LOFT transform; the only change is the support \(P_r\).
    }
    \label{fig:glue_training_dynamics}
    \vspace{-1em}
\end{figure*}

Figure~\ref{fig:glue_training_dynamics} shows the full training trajectories. 
On CoLA, the support effect appears early and remains visible throughout training. 
SkewGrad reaches the low-loss regime faster: by epoch 10, its training loss is already below \(0.2\), whereas Principal only enters this regime around epoch 15 and Random remains above it over the reported horizon. 
The held-out trajectory shows the same time-to-performance pattern: SkewGrad crosses \(0.70\) Matthews correlation by epoch 5, while Principal reaches comparable performance later and Random never reaches this level within 20 epochs. 
Although Principal is competitive at some intermediate points, SkewGrad gives the best final Matthews correlation, reaching \(0.7297\) at epoch 20.

The STS-B trajectory gives a complementary view using Pearson correlation as the held-out metric. 
At epoch 1, both Principal and SkewGrad already improve substantially over Random, with held-out Pearson scores of \(0.8342\) and \(0.8225\), compared with \(0.6762\) for Random. 
After this initial stage, SkewGrad becomes the strongest support from epoch 5 onward, reaching \(0.9098\) at epoch 5 and \(0.9220\) at epoch 20. 
Principal and Random end at \(0.9071\) and \(0.9052\), respectively. 
Notably, Principal attains lower training loss than SkewGrad at later epochs, but does not achieve the same held-out Pearson, suggesting that SkewGrad follows a better-aligned adaptation path rather than merely minimizing training loss faster.

Together, the CoLA and STS-B trajectories in Figure~\ref{fig:glue_training_dynamics} test whether the first-order support-selection signal is only an initialization artifact or continues to shape optimization under the standard training protocol. 
The observed dynamics support the latter interpretation: task-aware support selection changes the path of adaptation, not only the final checkpoint selected for reporting. 
On CoLA, SkewGrad moves the model into a high-performing region earlier; on STS-B, it gives the strongest held-out Pearson from epoch 5 onward and the best final trajectory value. 
This strengthens the main conclusion of Table~\ref{tab:glue_main}: SkewGrad improves the efficiency--performance trade-off while keeping rank, transform class, parameter count, and memory footprint fixed.

\subsection{Additional Experiment Results}
\label{sect:add}

\textbf{First-order support diagnostics.}
Appendix~\ref{app:short_horizon_diag} bridges the first-order analysis and the full-training trajectories in Section~\ref{sec:training_dynamics}. 
Proposition~\ref{prop:signal-strength} predicts that supports exposing stronger skew-gradient signal should provide better local descent directions, and the appendix tests this prediction directly. 
It reports two short-horizon diagnostics: a controlled probe where only the LOFT adapter is updated, and an early-validation diagnostic under the standard GLUE pipeline. 
Across CoLA and STS-B, stronger skew-gradient signal leads to larger probe-loss reductions, and SkewGrad gives the best early-validation behavior in the first 25 updates. 
A calibration-size study further shows that only a few training mini-batches are enough to construct an effective SkewGrad support. 
Together, these diagnostics explain why the trajectories in Section~\ref{sec:training_dynamics} improve beyond the endpoint: task-aware supports expose better descent directions from the start, and the full-training results show that this early advantage persists over the training horizon.

\textbf{GLUE on MNLI and QQP.}
We report MNLI-matched/QQP experiments in the Appendix~\ref{app:mnli_qqp_details}. This uses the original GLUE development-set protocol rather than the hold-out protocol in Table~\ref{tab:glue_main}. The results show a similar efficiency--performance pattern: principal-support LOFT recovers the PSOFT regime, SkewGrad improves over the principal orthogonal support, and the free-transform ablation gives the strongest two-task average, all while using far fewer trainable parameters than standard low-rank baselines.

\textbf{Low-Resource Language Adaptation.}
Additional low-resource language experiments in Appendix~\ref{app:lowres-lang} provide further support for the same conclusion. Across Bactrian-X languages with different base-model BPB levels, gradient-informed support generally improves over the principal weight-based support, with the largest gains appearing on the strongest OOD language, Swahili. This suggests that the benefit of gradient-informed supports is not specific to GLUE or mathematical reasoning, but also extends to multilingual OOD adaptation. 

% \textbf{Runtime Comparisons.}
% Appendix~\ref{app:glue_efficiency_grad} reports GLUE wall-clock measurements. 
% Principal-support LOFT is broadly comparable to PSOFT in runtime while using fewer trainable parameters. 
% GradSVD and SkewGrad introduce a one-off calibration overhead but keep the same trainable-parameter count and peak-memory footprint for a fixed transform; their similar runtimes show that the extra cost mainly comes from calibration-gradient collection. This suggests faster subspace estimation methods can be employed to reduce the computational cost, which we leave to future work.

\textbf{Runtime Comparisons.}
Appendix~\ref{app:glue_efficiency_grad} reports single-seed GLUE wall-clock measurements under the same task-specific protocol as Table~\ref{tab:glue_main}. 
At matched rank, orthogonal LOFT uses fewer non-classifier trainable parameters than PSOFT (\(74{,}520\) vs.\ \(81{,}144\)) and is consistently faster; principal-support LOFT is \(6.5\%\) faster on average, and SkewGrad remains \(6.1\%\) faster even with calibration included. 
Changing the support has little runtime impact: GradSVD and SkewGrad add only \(0.2\%\) and \(0.4\%\) average runtime relative to principal support. 
Free-transform LOFT uses more trainable parameters (\(152{,}352\)) but is substantially faster than orthogonal variants, indicating that runtime is driven more by the in-subspace transform computation than by parameter count alone.

% \textbf{Ablation Studies.}
% Appendix~\ref{app:support_transform_ablations} reports full support and transform ablations on GLUE, VTAB, and mathematical reasoning. The results separate the effect of the support \(P_r\) from the in-subspace transform \(T_r\): principal-support LOFT recovers the closest PSOFT-like regime, while gradient-informed supports often improve under the same rank and transform. Free transforms help in some settings, especially large sentence-pair tasks, but are not uniformly better; the main gain is therefore task-aware support selection rather than simply relaxing orthogonality. 

\textbf{Ablation Studies.}
Appendices~\ref{app:support_transform_ablations} and~\ref{app:glue_ablation_skew} report benchmark-specific ablations: GLUE covers support/transform choices plus MRPC/RTE data-fraction and rank sweeps, while VTAB isolates support selection. 
Across these settings, the appendix results reinforce the same pattern: principal support recovers the PSOFT-like control, while task-informed supports, especially SkewGrad, produce stronger matched-rank variants.
The MRPC/RTE sweeps show this effect persists under reduced supervision and across \(r\in\{16,46,64\}\), and VTAB tests its transfer to vision. 
Free transforms help on some sentence-pair tasks but are not uniformly superior. 
Together, the results identify task-aware support selection, rather than rank increase or relaxed orthogonality, as the main driver.
% Full GLUE ablations in Appendix~\ref{app:support_transform_ablations} further separate support choice from transform class. \AH{put this to later section where we discuss ablations.}

\section{Concluding Remarks}

We presented LOFT, a parameter-efficient orthogonal fine-tuning framework that decomposes the update rotation into two explicit choices: the adaptation support \(P_r\) and the in-subspace transform \(T_r\). This view recovers existing orthogonal PEFT methods and more importantly exposes support selection as an important design problem. Our first-order analysis shows that downstream task signal for orthogonal adaptation is governed by  \(\mathrm{skew}(W_0^\top G)\), rather than  by the principal support of the pretrained weight. This motivates task-aware supports such as \textsc{GradSVD} and \textsc{SkewGrad}. Across language understanding, visual transfer, and mathematical reasoning, LOFT variants with gradient-informed supports improve the efficiency--performance trade-off under matched budgets.
% The diagnostics further show that this support-selection signal is visible in the short-horizon regime predicted by the theory, and that SkewGrad remains effective with only a few training-only calibration mini-batches. 
Overall, LOFT suggests that future progress in orthogonal PEFT should focus not only on better orthogonal parameterizations, but also on principled selection of the support where adaptation occurs.

\textbf{Limitations.}
% Our support-selection criterion is first-order and local to initialization, so it does not guarantee globally optimal downstream supports. 
In this work, we focus on the choice of support rather than the orthogonal transform. We use the default Cayley transform on skew-symmetric matrices to parameterize the orthogonal action. In addition, we show that the preferred gradient-informed support can depend on the task family: SkewGrad performs best in our encoder, visual-transfer, and short-horizon studies, while GradSVD is more stable in some decoder-side and multilingual OOD settings. 
Gradient-informed supports also require a one-off calibration step for gradient computation, slightly increasing the end-to-end runtime  compared with the principal-support  variant, as reported in Appendix~\ref{app:glue_efficiency_grad}. 
% Finally, our mathematical reasoning experiments follow recent PEFT practice under a matched low-parameter budget; extending them to more decoder-only model sizes and seeds would further test robustness. \LZ{maybe need to consider remove this as no previous paper mention multiseed in decoder due to massive amount of computational cost}

\bibliographystyle{apalike}
\bibliography{references}

%%%%%%%%%%%%%%%%%%%%%%%%%%%%%%%%%%%%%%%%%%%%%%%%%%%%%%%%%%%%
\appendix
\clearpage

\startcontents[appendices]

\begingroup
\definecolor{appred}{RGB}{170,55,55}

\titlecontents{section}
  [0em]
  {\addvspace{0.45em}\bfseries\color{appred}}
  {\contentslabel{1.8em}}
  {}
  {\titlerule*[0.55pc]{.}\contentspage}

\titlecontents{subsection}
  [2.4em]
  {\small\color{appred}}
  {\contentslabel{2.8em}}
  {}
  {\titlerule*[0.55pc]{.}\contentspage}

\titlecontents{subsubsection}
  [5.4em]
  {\small\color{appred}}
  {\contentslabel{3.3em}}
  {}
  {\titlerule*[0.55pc]{.}\contentspage}

\clearpage
% \nolinenumbers
\section*{Appendices Contents}
\setcounter{tocdepth}{2}
\printcontents[appendices]{}{1}{}
\endgroup

\clearpage
% \linenumbers
\AppendixSectionBreakstrue

\section{Related Work}
\label{sec:related_work}

Parameter-efficient fine-tuning (PEFT) methods are commonly grouped into selection-based, prompt-based, adapter-based, and reparameterization-based families \citep{han2024parameter}. Representative examples include BitFit, prefix/prompt tuning, and compact adapter layers \citep{benzaken2022bitfit,li2021prefix,lester2021power,liu2022fewshot,mahabadi2021compacter}. Since LOFT belongs to the reparameterization-based line, we focus on methods that adapt pretrained weights directly and can be merged into the backbone without introducing additional inference-time modules.

\textbf{Low-rank additive adaptation.}
The dominant reparameterization paradigm models downstream change as an additive perturbation constrained to a low-dimensional space, consistent with evidence that neural-network adaptation often has low intrinsic dimension \citep{li2018intrinsic,aghajanyan2021intrinsic}. LoRA established the basic recipe by freezing the pretrained weight and learning a low-rank update that can be merged at inference time \citep{hu2021lora}. Subsequent work refined this view from several directions. PiSSA initializes adaptation from the principal singular components of the pretrained weight \citep{meng2024pissa}, while DoRA separates magnitude and direction so that the adaptation dynamics more closely match those of full fine-tuning \citep{liu2024dora}. Other variants improve rank allocation, quantized training, parameter sharing, compression, or singular-vector structure within the same low-rank paradigm \citep{zhang2023adalora,valipour2022dylora,dettmers2023qlora,kopiczko2023vera,balazy2024loraxs,lingam2024svft,tastan2025loft}. This literature strongly supports low-dimensional adaptation, but mainly studies additive updates rather than geometry-preserving multiplicative transformations.

\textbf{Orthogonal fine-tuning.}
A complementary line approaches PEFT from a multiplicative and geometric perspective by constraining adaptation to remain orthogonal. Orthogonal over-parameterized training and OFT introduced this view, with OFT using block-diagonal orthogonal transforms to make optimization tractable \citep{liu2021orthogonal,qiu2023controlling}. BOFT replaces the block-diagonal construction with butterfly factorization, enabling denser information mixing under a parameter-efficient orthogonal parameterization \citep{liu2024parameterefficient}. qGOFT uses Givens rotations and relaxed orthogonality to improve flexibility \citep{ma2024qgoft}, while OFTv2 revisits the same line through an input-centric implementation with a Cayley--Neumann approximation \citep{qiu2025scalable}. Broader orthogonal PEFT variants also include Householder-based methods such as HOFT and SHOFT \citep{morenoarcas2025hoft}. For LOFT, the key message is that orthogonal PEFT has largely optimized how to parameterize the transform, while leaving the choice of adaptation support comparatively under-theorized.

\textbf{Bridging low-rank and orthogonal adaptation.}
Recent work makes the relation between low-rank and orthogonal PEFT explicit. HRA constructs adapters as products of learnable Householder reflections and interprets the resulting multiplicative update as adaptive low-rank adaptation, with reflection-plane orthogonality controlling the capacity--regularity trade-off \citep{yuan2024bridging}. Spectral Adapter and related singular-vector methods shift attention from parameterization to spectral placement, studying how subspace restriction changes effective adaptation capacity \citep{zhang2024spectral,lingam2024svft}. LoRA-XS is also related in form, since it freezes SVD-derived outer factors and learns a small inner matrix, but it remains an additive low-rank update rather than a multiplicative subspace transform \citep{balazy2024loraxs}. PSOFT is the closest direct antecedent to our formulation: it restricts orthogonal adaptation to the principal right-singular subspace of the pretrained weight and adds lightweight relaxation during training \citep{wu2025memory}. This makes PSOFT an important spectral instance of subspace-restricted orthogonal adaptation, but its support is chosen by pretrained-weight energy rather than by downstream loss. LOFT generalizes this regime by treating the support \(P_r\) as an explicit design variable. Our first-order analysis selects this support through the loss-coupled signal \(\mathrm{skew}(W_0^\top G)\), yielding gradient-informed supports that can be compared directly with principal supports under matched budgets.

\section{Additional Theory and Proofs}

\subsection{Proof of Proposition~\ref{prop:loft_orthogonality}}
\label{app:orthogonality_proof}

% \AH{Put all proofs for Proposition \ref{prop:loft_orthogonality} here! Do not create new propisition.}

\begin{proof}
Let
\[
U := S(P_r,T_r)=I_{d_{\mathrm{in}}}+P_r^\top(T_r-I_r)P_r .
\]
Extend \(P_r\) to a full orthogonal basis
\[
P=
\begin{bmatrix}
P_r\\
P_\perp
\end{bmatrix}
\in O(d_{\mathrm{in}}).
\]
Then
\[
U
=
P^\top
\begin{bmatrix}
T_r & 0\\
0 & I_{d_{\mathrm{in}}-r}
\end{bmatrix}
P .
\]
Since \(T_r\in O(r)\), the block-diagonal matrix is orthogonal. Hence \(U\in O(d_{\mathrm{in}})\).

Now let \(W^+=WU\). Since \(U\) is orthogonal,
\[
W^+(W^+)^\top
=
(WU)(WU)^\top
=
WUU^\top W^\top
=
WW^\top .
\]
This proves exact row-Gram preservation.

Because \(U\) is orthogonal, it is invertible. Therefore,
\[
\operatorname{rank}(W^+)
=
\operatorname{rank}(WU)
=
\operatorname{rank}(W).
\]
Moreover,
\[
(W^+)^\top W^+
=
(WU)^\top(WU)
=
U^\top W^\top WU .
\]
Thus \((W^+)^\top W^+\) is orthogonally similar to \(W^\top W\), so the two matrices have the same eigenvalues. Hence \(W^+\) and \(W\) have the same singular values:
\[
\sigma_i(W^+)=\sigma_i(W)\qquad \forall i .
\]
Since the Frobenius norm and spectral norm are determined by the singular values, we also have
\[
\|W^+\|_F=\|W\|_F,
\qquad
\|W^+\|_2=\|W\|_2 .
\]
This proves all claims in Proposition~\ref{prop:loft_orthogonality}.
\end{proof}

\subsection{Why LOFT is Right-Multiplicative Rather than Double-Sided}
\label{app:right_vs_double}

A natural generalization of orthogonal adaptation is the double-sided update
\[
W^+ = QWR,
\qquad
Q \in O(d_{\mathrm{out}}),\; R \in O(d_{\mathrm{in}}).
\]
Although this form preserves singular values, it does not preserve the same row-neuron relational
structure as right-multiplicative LOFT.

\begin{remark}
For right-multiplicative orthogonal LOFT, $W^+=WU$ with $U\in O(d_{\mathrm{in}})$ implies
\[
W^+(W^+)^\top = WW^\top.
\]
Hence the row Gram matrix is preserved exactly.

In contrast, for a double-sided update $W^+=QWR$,
\[
W^+(W^+)^\top = QWW^\top Q^\top.
\]
Therefore the row Gram matrix is generally rotated rather than preserved entrywise. This is the
reason we restrict LOFT to the right-multiplicative setting.
\end{remark}

% \subsection{Proof of Rank Control for the Induced Additive Update}
% \label{app:rank_proof}

% The additive form of LOFT also yields a simple rank bound.

% \paragraph{Proposition A.5.}
% Let
% \[
% \Delta W = W P_r^\top (T - I) P_r,
% \]
% where $P_r \in \mathbb{R}^{r \times d_{\mathrm{in}}}$. Then
% \[
% \operatorname{rank}(\Delta W) \le r.
% \]

% \begin{proof}
% Using submultiplicativity of rank,
% \[
% \operatorname{rank}(\Delta W)
% \le
% \min\!\Bigl(
% \operatorname{rank}(W P_r^\top),
% \operatorname{rank}(T-I),
% \operatorname{rank}(P_r)
% \Bigr).
% \]
% Since $P_r$ has $r$ rows, $\operatorname{rank}(P_r) \le r$, and $(T-I) \in \mathbb{R}^{r \times r}$ also has rank at most $r$. Therefore,
% \[
% \operatorname{rank}(\Delta W) \le r.
% \]
% \end{proof}

\subsection{Proof of Proposition \ref{prop:signal-strength}}
\label{app:signal_strength}

\begin{proof}
Let
\[
F:=\skewsym(W_0^\top G)
=
\frac{W_0^\top G-G^\top W_0}{2}.
\]
For any perturbation \(E\in\mathrm{Skew}(r)\), the local condition on \(Q\) gives
\[
Q(tE)=I_r+tE+o(t).
\]
Therefore,
\[
W_0\!\left(I_d+P_r^\top(Q(tE)-I_r)P_r\right)
=
W_0+tW_0P_r^\top E P_r+o(t).
\]
By the first-order expansion of \(L\) at \(W_0\),
\[
\frac{d}{dt}
L\!\left(
W_0\!\left(I_d+P_r^\top(Q(tE)-I_r)P_r\right)
\right)\Big|_{t=0}
=
\left\langle G,\,W_0P_r^\top E P_r\right\rangle_F .
\]
Using cyclicity of the trace,
\[
\left\langle G,\,W_0P_r^\top E P_r\right\rangle_F
=
\mathrm{tr}\!\left(G^\top W_0P_r^\top E P_r\right)
=
\mathrm{tr}\!\left(P_rG^\top W_0P_r^\top E\right).
\]
Since \(E\in\mathrm{Skew}(r)\), only the skew-symmetric part of
\(P_rW_0^\top G P_r^\top\) contributes to the inner product. Hence
\[
\mathrm{tr}\!\left(P_rG^\top W_0P_r^\top E\right)
=
\left\langle
\skewsym(P_rW_0^\top G P_r^\top),\,E
\right\rangle_F .
\]
Because \(P_rP_r^\top=I_r\), the skew operator commutes with this orthogonal compression:
\[
\skewsym(P_rW_0^\top G P_r^\top)
=
P_r\skewsym(W_0^\top G)P_r^\top
=
P_rFP_r^\top .
\]
Thus
\[
\frac{d}{dt}
L\!\left(
W_0\!\left(I_d+P_r^\top(Q(tE)-I_r)P_r\right)
\right)\Big|_{t=0}
=
\left\langle P_rFP_r^\top,E\right\rangle_F .
\]
Equivalently, under the Frobenius inner product on \(\mathrm{Skew}(r)\),
\[
\nabla_S L\big|_{S=0}=P_rFP_r^\top,
\qquad
\left\|\nabla_S L\big|_{S=0}\right\|_F^2
=
\|P_rFP_r^\top\|_F^2 .
\]

It remains to prove the upper bound. Since \(F\in\mathrm{Skew}(d)\), its nonzero eigenvalues occur in conjugate pairs
\(\{\pm i\mu_k\}\), with \(\mu_1\ge \mu_2\ge\cdots\ge 0\). The singular values of \(F\) are therefore the values \(\mu_k\), each repeated twice. For any row-orthonormal \(P_r\), the compressed matrix \(P_rFP_r^\top\) is also skew-symmetric. Let its nonzero eigenvalues be \(\{\pm i\nu_j\}_{j=1}^{\lfloor r/2\rfloor}\), with \(\nu_1\ge\nu_2\ge\cdots\ge0\). By the variational characterization of singular values for skew-symmetric compressions,
\[
\nu_j\le \mu_j
\qquad
\text{for } j=1,\ldots,\lfloor r/2\rfloor .
\]
Therefore, for even \(r\),
\[
\|P_rFP_r^\top\|_F^2
=
2\sum_{j=1}^{r/2}\nu_j^2
\le
2\sum_{j=1}^{r/2}\mu_j^2 .
\]
For odd \(r\), the same argument gives the bound with \(r/2\) replaced by \(\lfloor r/2\rfloor\).

Equality is achieved when the rows of \(P_r\) span the invariant subspace of \(F\) associated with the largest \(r/2\) skew-eigenvalue pairs. In that case, the compression \(P_rFP_r^\top\) contains exactly those largest skew blocks, so its Frobenius norm attains
\[
2\sum_{j=1}^{r/2}\mu_j^2 .
\]
This proves Proposition~\ref{prop:signal-strength}.
\end{proof}
\subsection{Why PSOFT is Generically Suboptimal}
\label{app:psoft_suboptimal}

The training-signal analysis above yields a precise condition under which the PSOFT choice
$P_r = V_r^\top$ is optimal.

\paragraph{Corollary A.8.}
Assume \(r\) is even; for odd \(r\), replace \(r/2\) by \(\lfloor r/2 \rfloor\) throughout. Let \(V_r\) denote the top-\(r\) right singular vectors of \(W_0\), and set \(P_r=V_r^\top\). The resulting first-order signal strength is
\[
\|V_r^\top F V_r\|_F^2.
\]
This achieves the maximal value
\[
2\sum_{k=1}^{r/2}\mu_k^2
\]
if and only if \(\mathrm{col}(V_r)\) is a maximizing invariant subspace of \(F\), namely an invariant subspace associated with the largest \(r/2\) skew-eigenvalue pairs.

Equivalently, writing \(V=[V_r\;V_\perp]\) and
\[
V^\top F V
=
\begin{pmatrix}
F_{rr} & F_{r\perp}\\
-F_{r\perp}^\top & F_{\perp\perp}
\end{pmatrix},
\]
a necessary condition is
\[
F_{r\perp}=0,
\]
and maximality further requires that the skew-eigenvalue pairs contained in \(F_{rr}\) are the largest ones. Generically, even the invariance condition \(F_{r\perp}=0\) does not hold, so principal-subspace orthogonal adaptation is not generically optimal for maximizing the first-order training signal.

\section{LOFT Design Space and Recovered Orthogonal PEFT Branches}
\label{app:design_recoveries}

\subsection{Recovering Right-Multiplicative Orthogonal PEFT Branches}
\label{app:recoveries}

We now clarify which existing orthogonal PEFT mechanisms are recovered by the right-multiplicative LOFT formulation
\[
W^{+}
=
W\prod_{\ell=1}^{L} S\!\left(P_{r_\ell}^{(\ell)}, R_{r_\ell}^{(\ell)}\right),
\qquad
S(P_r,R_r):=I + P_r^\top(R_r-I_r)P_r,
\]
where each $P_r \in \mathbb{R}^{r\times d_{\mathrm{in}}}$ satisfies $P_rP_r^\top=I_r$ and each
$R_r \in O(r)$. The statements below are made at the level of the core right-side subspace-rotation
mechanism rather than every implementation detail of a given method.

\paragraph{Full-space OFT.}
Setting $L=1$, $r=d_{\mathrm{in}}$, and $P_r=I_{d_{\mathrm{in}}}$ gives
\[
S(I,R)=R,
\qquad
W^{+}=WR.
\]
Therefore, the original full-space orthogonal fine-tuning update is recovered as the width-$d_{\mathrm{in}}$
special case of LOFT.

\paragraph{Block-diagonal OFT.}
Let $\{1,\dots,d_{\mathrm{in}}\}$ be partitioned into $k$ disjoint coordinate blocks of size $b$, and let
$P_b^{(i)}$ select the coordinates of the $i$-th block. Then
\[
W^{+}
=
W\prod_{i=1}^{k} S\!\left(P_b^{(i)}, R_b^{(i)}\right).
\]
Because the supports are disjoint, the factors commute and reduce to a block-diagonal orthogonal
transform,
\[
\prod_{i=1}^{k} S\!\left(P_b^{(i)}, R_b^{(i)}\right)
=
\operatorname{diag}\!\left(R_b^{(1)},\dots,R_b^{(k)}\right).
\]
Hence block-diagonal OFT is recovered by a single right-multiplicative LOFT layer with disjoint
coordinate supports.

\paragraph{GOFT / Givens-style coordinate rotations.}
Suppose each factor has width $2$ and selects a coordinate pair:
\[
W^{+}
=
W\prod_{\ell=1}^{L} S\!\left(P_2^{(\ell)}, R_2^{(\ell)}\right),
\qquad
R_2^{(\ell)} \in O(2).
\]
Each factor is then a Givens rotation acting on a selected coordinate plane. This recovers the core
coordinate-pair update mechanism underlying GOFT-style orthogonal fine-tuning.

\paragraph{BOFT.}
BOFT is obtained by stacking multiple right-multiplicative LOFT layers whose supports are coordinate
blocks chosen according to the butterfly permutation pattern. Concretely, with block width $b$ and
$L=\lceil \log_2 d_{\mathrm{in}}\rceil$ stages,
\[
W^{+}
=
W\prod_{\ell=1}^{L}\prod_{i=1}^{d_{\mathrm{in}}/b}
S\!\left(P_b^{(\ell,i)}, R_b^{(\ell,i)}\right),
\]
where within each stage the supports are disjoint, and across stages the supports are permuted in the
butterfly pattern. Thus BOFT is recovered as a multi-layer structured sparse special case of the same
right-side primitive.

\paragraph{HRA.}
HRA is recovered by width-one factors with learnable subspace bases. Let $u_i \in \mathbb{R}^{d_{\mathrm{in}}}$
be a unit vector and set
\[
P_1^{(i)} = u_i^\top,
\qquad
R_1^{(i)} = -1.
\]
Then
\[
S(u_i^\top,-1)
=
I + u_i(-1-1)u_i^\top
=
I - 2u_i u_i^\top,
\]
which is exactly a Householder reflection. Therefore,
\[
W^{+}
=
W\prod_{i=1}^{r}\left(I-2u_i u_i^\top\right)
\]
recovers HRA as a product of right-multiplicative width-one reflections. In this case, the trainable
degrees of freedom lie in the learnable subspace bases $\{u_i\}$ rather than in $R_1^{(i)}$.

\paragraph{PSOFT / principal-subspace OFT.}
Let the singular value decomposition of the pretrained weight be
\[
W = U\Sigma V^\top,
\qquad
V=[V_r\;V_\perp],
\qquad
U=[U_r\;U_\perp],
\]
and choose
\[
P_r = V_r^\top.
\]
Then
\[
W^{+}
=
W\bigl(I + V_r(R_r-I_r)V_r^\top\bigr)
=
WV_\perp V_\perp^\top + WV_r R_r V_r^\top
=
U_\perp\Sigma_\perp V_\perp^\top + U_r\Sigma_r R_r V_r^\top.
\]
Hence the residual right-singular subspace remains unchanged, while the principal right-singular
subspace undergoes an orthogonal transform. This recovers the core principal-subspace update
underlying PSOFT.

\paragraph{Scope of the recovery claims.}
The recoveries above should be interpreted as statements about the underlying right-multiplicative
subspace-rotation mechanism. In particular, LOFT recovers the structural core of full-space, block-diagonal,
coordinate-pair, butterfly, Householder, and principal-subspace orthogonal PEFT branches by varying
only the depth, width, and subspace basis of the same primitive update.

\subsection{Additional Remarks on the Free Variant}
\label{app:free_variant}

The free variant should be interpreted as replacing the orthogonal subspace transform with a fully unconstrained dense linear map
\[
T = S, \qquad S \in \mathbb{R}^{r \times r},
\]
under the same fixed subspace basis $P_r$. Accordingly, free LOFT is not a mild relaxation of orthogonality, nor a diagonal $\alpha,\beta$-style scaling extension. Instead, it is an identity-initialized dense subspace transform with no internal geometric constraint.

This distinction is useful when interpreting experiments: orthogonal and free LOFT differ only in the transformation class within the same selected subspace. Their comparison therefore isolates the effect of enforcing orthogonality once the subspace support is held fixed.

\section{Construction of Gradient-Based Supports}
\label{app:grad_support}

\subsection{Naive gradient support (GradSVD)}
For \textbf{LOFT-GradSVD}, we construct the fixed support before inserting the fixed-\(P_r\) LOFT module.
We first identify all target linear layers.
We then freeze the model except for the target-layer weights, enable gradients only on those weights, and run forward and backward passes on 4 calibration mini-batches.
The weight gradients are accumulated across these calibration batches, producing a gradient matrix \(G_\ell\) for each target layer \(\ell\).
We then perform singular value decomposition on \(G_\ell\) and use its top-\(r\) right singular subspace as the fixed support basis \(P_{r,\ell}\).

\subsection{Orthogonal and Free LOFT}
\label{sec:variants}

\begin{figure}[H]
\centering
\includegraphics[width=0.62\textwidth]{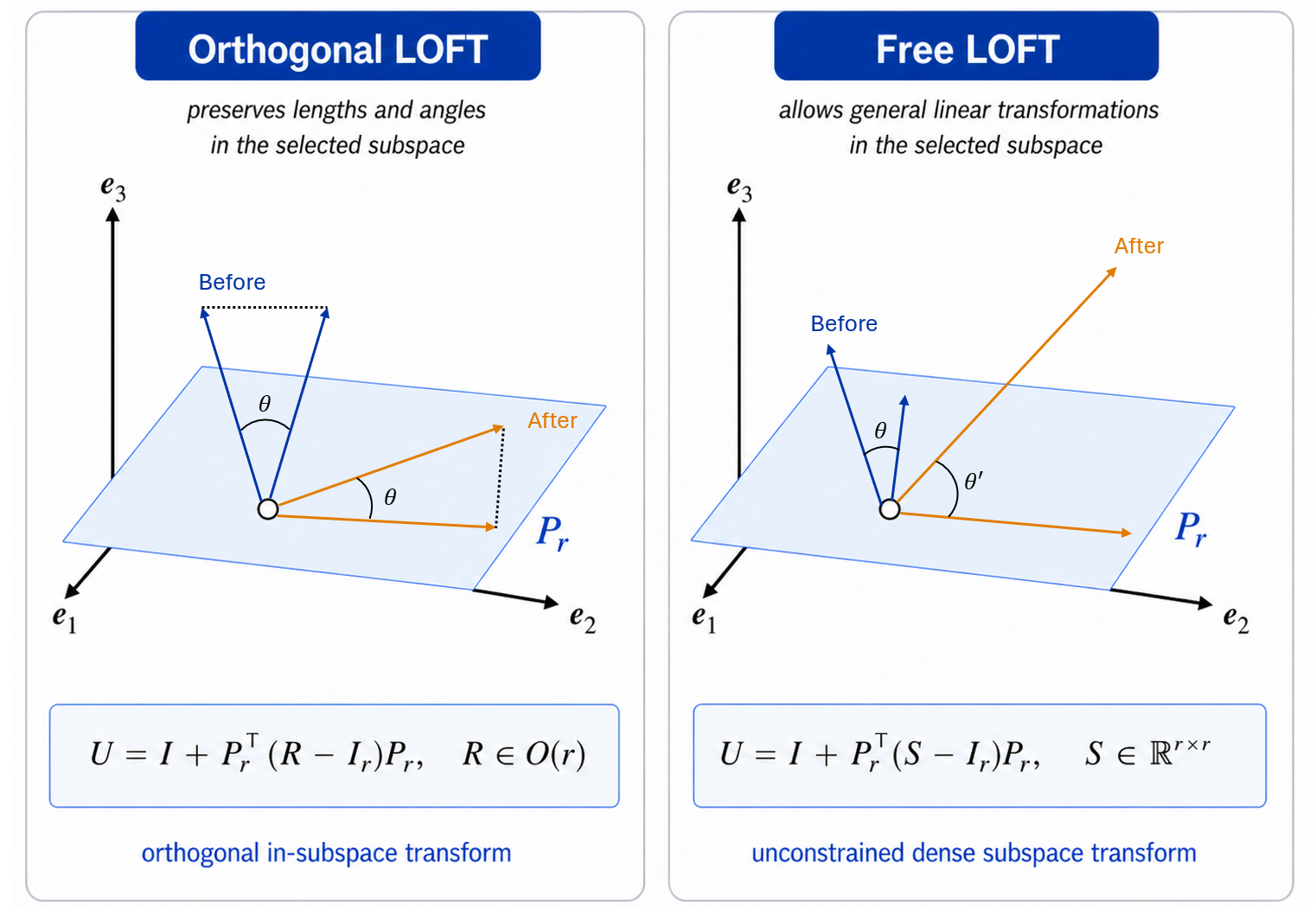}
\vspace{-0.4em}
\caption{Orthogonal and free LOFT under fixed support \(P_r\).}
\label{fig:ortho_vs_free}
\vspace{-0.6em}
\end{figure}

Once the support  basis \(P_r\) is fixed, LOFT enables a controlled comparison between two in-subspace transformation classes. In \emph{orthogonal LOFT}, we set \(T_r=R_r\in O(r)\), so the ambient transform is orthogonal and Proposition~\ref{prop:loft_orthogonality} applies directly. In \emph{free LOFT}, we use \(T_r=S_r\in\mathbb{R}^{r\times r}\), initialized at \(I_r\) and trained without an internal geometric constraint. Under a fixed support, the difference between orthogonal and free LOFT is entirely attributable to the transformation class acting inside the same selected subspace. Figure~\ref{fig:ortho_vs_free} visualizes this matched-support comparison: both methods share \(P_r\), but differ in whether the in-subspace transform is constrained to be orthogonal or allowed to be fully free. This distinction is important for the experimental analysis because, once the support is held fixed, comparisons between the two variants isolate the role of the transformation class rather than conflating it with support selection. In later experiments, we therefore use free LOFT only as a controlled ablation of \(T_r\), while keeping the main tables focused on the default orthogonal transform.

% =========================
% Experiment Appendix Starts Here
% =========================

\section{Additional Experimental Details}
\label{app:exp_details}

\subsection{GLUE on DeBERTaV3-base}
\label{app:glue_details}

\paragraph{Datasets and metrics.}
We evaluate encoder-only adaptation on six GLUE tasks: CoLA, STS-B, RTE, MRPC, SST-2, and QNLI. For CoLA we report Matthews correlation, for STS-B we report Pearson correlation, and for RTE, MRPC, SST-2, and QNLI we report accuracy.

\paragraph{Hold-out validation protocol.}
Following PSOFT, we adopt the same hold-out validation protocol for these six tasks. Specifically, the original validation set of each task is split into new validation and test splits with a fixed random seed. Model selection is performed on the new validation split, and the final reported result is the held-out test score of the checkpoint with the best validation performance. All reported numbers are averaged over five random seeds. Peak memory is measured on a single NVIDIA V100 64GB GPU.

\paragraph{Model and training protocol.}
We use DeBERTaV3-base as the encoder backbone. Unless otherwise stated, we use AdamW, warmup ratio \(0.1\), a linear learning-rate schedule, classifier-head learning rate \(5\mathrm{E}{-4}\), and batch size \(32\). Task-specific maximum sequence lengths and training epochs follow the PSOFT protocol.

\begin{center}
\captionsetup{type=table}
\captionof{table}{Shared hyperparameter settings for fine-tuning DeBERTaV3-base on the six-task GLUE setup.}
\label{tab:glue_shared_setup}
\small
\begin{tabular}{lcccccc}
\toprule
Hyperparameter & CoLA & STS-B & MRPC & RTE & SST-2 & QNLI \\
\midrule
Optimizer & \multicolumn{6}{c}{AdamW} \\
Warmup ratio & \multicolumn{6}{c}{0.1} \\
LR schedule & \multicolumn{6}{c}{Linear} \\
Learning rate (head) & \multicolumn{6}{c}{\(5\mathrm{E}{-4}\)} \\
Batch size & \multicolumn{6}{c}{32} \\
\midrule
Max seq. len. & 64 & 128 & 256 & 256 & 128 & 256 \\
\#Epochs & 20 & 20 & 30 & 30 & 10 & 5 \\
\bottomrule
\end{tabular}
\end{center}

\paragraph{LOFT variants.}
We evaluate three support families. \(\loft^{\pprin}\) uses the same principal right-singular support as PSOFT. \(\loft^{\pgrad}\) uses a naive gradient-derived support obtained by collecting gradients on a short calibration set and taking the top-\(r\) right singular subspace of the resulting gradient matrix. \(\loft^{\pskew}\) uses the theory-guided support induced by \(F=\mathrm{skew}(W_0^\top G)=(W_0^\top G-G^\top W_0)/2\). For each support, we compare the default orthogonal transform \(T_r=\Torth\) with a free variant \(T_r=\Tfree\), where ``free'' denotes a fully unconstrained dense in-subspace transform initialized at the identity rather than a PSOFT-style easing factor.

\begin{center}
\captionsetup{type=table}
\captionof{table}{Task-specific learning rates for PSOFT and LOFT variants on the six-task GLUE setup.}
\label{tab:glue_method_lrs}
\small
\begin{adjustbox}{max width=\linewidth}
\begin{tabular}{lcccccc}
\toprule
Method & CoLA & STS-B & RTE & MRPC & SST-2 & QNLI \\
\midrule
PSOFT$_{r=46}$ & \(6\mathrm{E}{-4}\) & \(4\mathrm{E}{-4}\) & \(4\mathrm{E}{-4}\) & \(4\mathrm{E}{-4}\) & \(2\mathrm{E}{-4}\) & \(4\mathrm{E}{-4}\) \\
\loft$_{r=46}^{\scriptscriptstyle \pprin}$ & \(6\mathrm{E}{-4}\) & \(7\mathrm{E}{-4}\) & \(9\mathrm{E}{-4}\) & \(5\mathrm{E}{-4}\) & \(9.5\mathrm{E}{-5}\) & \(4.5\mathrm{E}{-4}\) \\
\loft$_{r=46}^{\scriptscriptstyle \pprin,\,\Tfree}$ & \(1\mathrm{E}{-3}\) & \(6\mathrm{E}{-4}\) & \(7.5\mathrm{E}{-4}\) & \(7\mathrm{E}{-4}\) & \(6\mathrm{E}{-4}\) & \(3\mathrm{E}{-4}\) \\
\loft$_{r=46}^{\scriptscriptstyle \pgrad}$ & \(5\mathrm{E}{-4}\) & \(4\mathrm{E}{-4}\) & \(1\mathrm{E}{-4}\) & \(2\mathrm{E}{-4}\) & \(4\mathrm{E}{-4}\) & \(1\mathrm{E}{-4}\) \\
\loft$_{r=46}^{\scriptscriptstyle \pgrad,\,\Tfree}$ & \(5\mathrm{E}{-4}\) & \(6\mathrm{E}{-4}\) & \(1\mathrm{E}{-4}\) & \(5\mathrm{E}{-4}\) & \(6\mathrm{E}{-4}\) & \(4\mathrm{E}{-4}\) \\
\loft$_{r=46}^{\scriptscriptstyle \pskew}$ & \(5\mathrm{E}{-4}\) & \(1\mathrm{E}{-4}\) & \(5\mathrm{E}{-4}\) & \(4\mathrm{E}{-4}\) & \(2\mathrm{E}{-4}\) & \(1\mathrm{E}{-4}\) \\
\loft$_{r=46}^{\scriptscriptstyle \pskew,\,\Tfree}$ & \(5\mathrm{E}{-4}\) & \(5\mathrm{E}{-4}\) & \(1\mathrm{E}{-4}\) & \(4\mathrm{E}{-4}\) & \(2\mathrm{E}{-4}\) & \(5\mathrm{E}{-4}\) \\
\bottomrule
\end{tabular}
\end{adjustbox}
\end{center}

\subsection{Additional Follow-up on MNLI and QQP}
\label{app:mnli_qqp_details}

MNLI-matched and QQP are reported as a secondary follow-up under the original GLUE development-set protocol, rather than as part of the primary six-task hold-out GLUE comparison. This distinction is important. The main GLUE table follows the PSOFT-style protocol on CoLA, STS-B, RTE, MRPC, SST-2, and QNLI: the original validation split is partitioned into a new validation split for model selection and a held-out test split for final reporting. By contrast, the MNLI/QQP follow-up reports performance directly on the standard GLUE development sets, matching the reporting convention used by the published DeBERTaV3-base baselines.

We use this setting because PSOFT does not report MNLI or QQP under its hold-out protocol, and reproducing a comparably controlled large-task hold-out benchmark would substantially increase computational cost. We therefore compare our finalized MNLI/QQP runs against the published DeBERTaV3-base development-set numbers reported in the corresponding original papers for LoRA, PiSSA, DoRA, LoRA-XS, BOFT, OFTv2/GOFTv2/qGOFTv2, and PSOFT \citep{hu2021lora,meng2024pissa,liu2024dora,balazy2024loraxs,liu2024parameterefficient,qiu2025scalable,ma2024qgoft,wu2025memory}. This follow-up is consequently less controlled than the six-task hold-out benchmark, but it is useful for checking whether LOFT remains competitive on larger sentence-pair tasks under the widely used GLUE dev-set reporting protocol.

\paragraph{Protocol.}
Our runs use \texttt{microsoft/deberta-v3-base} with rank \(r=46\). We train with batch size \(32\), evaluate with batch size \(64\), use maximum sequence length \(128\), a linear learning-rate schedule, warmup ratio \(0.1\), and \(10\) training epochs. Evaluation is performed once per epoch, model selection uses development-set accuracy, and the best development checkpoint is loaded at the end. For MNLI, we report matched development accuracy only. Although the wrapper can evaluate both matched and mismatched splits, the finalized table uses MNLI-matched as the main MNLI metric and does not report mismatched accuracy.

\begin{center}
\captionsetup{type=table}
\captionof{table}{Shared hyperparameter settings for our MNLI-matched and QQP follow-up runs.}
\label{tab:mnli_qqp_shared_setup}
\small
\begin{tabular}{lcc}
\toprule
Hyperparameter & MNLI-matched & QQP \\
\midrule
Backbone & \multicolumn{2}{c}{DeBERTaV3-base} \\
Optimizer & \multicolumn{2}{c}{AdamW} \\
Warmup ratio & \multicolumn{2}{c}{0.1} \\
LR schedule & \multicolumn{2}{c}{Linear} \\
Train batch size & \multicolumn{2}{c}{32} \\
Eval batch size & \multicolumn{2}{c}{64} \\
Max seq. len. & 128 & 128 \\
\#Epochs & 10 & 10 \\
Model selection & \multicolumn{2}{c}{Best validation accuracy} \\
Reported split & validation-matched & validation \\
\bottomrule
\end{tabular}
\end{center}

\begin{center}
\captionsetup{type=table}
\captionof{table}{
Follow-up results on MNLI-matched and QQP under the original GLUE development-set protocol. Baseline numbers are taken from the corresponding DeBERTaV3-base papers; our LOFT and PSOFT follow-up runs use train batch size \(32\), evaluation batch size \(64\), maximum sequence length \(128\), and \(10\) epochs. Results are averaged over five seeds where rerun by us.
}
\label{tab:mnli_qqp_followup}
\small
\setlength{\tabcolsep}{4.2pt}
\renewcommand{\arraystretch}{1.04}
\begin{adjustbox}{max width=\linewidth}
\begin{tabular}{lccccc}
\toprule
Method & \#Params & Mem. & MNLI-m & QQP & \cellcolor{avgblue}Avg. \\
\midrule
FFT & 184M & 5.9 & 89.90 & 92.40 & \cellcolor{avgblue}91.15 \\
GOFTv2 & 0.08M & 18.5 & 90.10 & 90.85 & \cellcolor{avgblue}90.48 \\
qGOFTv2 & 0.33M & 18.5 & 90.17 & 91.34 & \cellcolor{avgblue}90.76 \\
BOFT$_{m=2}^{b=8}$ & 1.41M & 6.3 & 90.25 & 92.10 & \cellcolor{avgblue}91.18 \\
OFTv2$_{b=16}$ & 1.29M & 4.5 & 90.33 & 92.10 & \cellcolor{avgblue}91.22 \\
LoRA$_{r=8}$ & 1.33M & 4.5 & \textbf{90.65} & 91.99 & \cellcolor{avgblue}91.32 \\
PiSSA$_{r=8}$ & 1.33M & 4.5 & 90.37 & 92.33 & \cellcolor{avgblue}91.35 \\
DoRA$_{r=16}$ & 1.41M & 5.8 & 90.29 & 92.10 & \cellcolor{avgblue}91.20 \\
LoRA-XS$_{r=136}$ & 1.33M & 4.2 & \multicolumn{2}{c}{OOM} & \cellcolor{avgblue}N/A \\
PSOFT$_{r=46}$ & 0.08M & 4.1 & 89.83 & 91.00 & \cellcolor{avgblue}90.42 \\
\midrule
\rowcolor{psoftgreen}
\loft$_{r=46}^{\scriptscriptstyle \pprin}$ 
& 0.07M & 4.1 & 90.12 & 91.05 & \cellcolor{avgblue}90.58 \\
\rowcolor{psoftgreen}
\loft$_{r=46}^{\scriptscriptstyle \pprin,\,\Tfree}$ 
& 0.15M & 4.1 & 90.42 & \textbf{92.34} & \cellcolor{avgblue}\textbf{91.38} \\
\rowcolor{psoftgreen}
\loft$_{r=46}^{\scriptscriptstyle \pskew}$ 
& 0.07M & 4.1 & 90.35 & 92.13 & \cellcolor{avgblue}91.24 \\
\bottomrule
\end{tabular}
\end{adjustbox}
\end{center}

\paragraph{Learning rates.}
For our MNLI/QQP runs, we swept learning rates in \(\{1\mathrm{E}{-4},2\mathrm{E}{-4},3\mathrm{E}{-4},5\mathrm{E}{-4},7\mathrm{E}{-4},1\mathrm{E}{-3}\}\), depending on the variant. Table~\ref{tab:mnli_qqp_lrs} lists the selected learning rates.

\begin{center}
\captionsetup{type=table}
\captionof{table}{Selected learning rates for our MNLI-matched and QQP follow-up runs.}
\label{tab:mnli_qqp_lrs}
\small
\begin{tabular}{lcc}
\toprule
Method & MNLI-matched & QQP \\
\midrule
PSOFT$_{r=46}$ & \(5\mathrm{E}{-4}\) & \(5\mathrm{E}{-4}\) \\
\loft$_{r=46}^{\scriptscriptstyle \pprin}$ & \(5\mathrm{E}{-4}\) & \(5\mathrm{E}{-4}\) \\
\loft$_{r=46}^{\scriptscriptstyle \pprin,\,\Tfree}$ & \(5\mathrm{E}{-4}\) & \(5\mathrm{E}{-4}\) \\
\loft$_{r=46}^{\scriptscriptstyle \pskew}$ & \(2\mathrm{E}{-4}\) & \(3\mathrm{E}{-4}\) \\
\bottomrule
\end{tabular}
\end{center}

\paragraph{Interpretation.}
This follow-up should be read as a large-task sanity check rather than a fully controlled benchmark. The primary GLUE evidence in this paper remains the six-task hold-out comparison in Table~\ref{tab:glue_main}, where all reported tasks follow the same PSOFT-style validation/test split protocol. MNLI and QQP are different: published DeBERTaV3-base baselines for these two tasks are usually reported on the original GLUE development sets, and the corresponding papers do not use a single unified hyperparameter protocol. In particular, prior results differ in maximum sequence length, epoch count, dropout, batch size, and sometimes reuse numbers from earlier sources. Therefore, the MNLI/QQP table should not be interpreted as a strictly matched replacement for the main GLUE evaluation.

Despite this limitation, the follow-up is still informative because it tests whether the same LOFT design choices remain viable on larger sentence-pair datasets. Under the GLUE development-set reporting convention, \(\loft_{r=46}^{\pprin}\) recovers and improves over PSOFT, confirming that the principal-support LOFT implementation preserves the closest principal-subspace baseline behavior beyond the six-task hold-out setting. The free-transform variant \(\loft_{r=46}^{\pprin,\Tfree}\) achieves the strongest two-task average, showing that additional in-subspace flexibility can be helpful on large sentence-pair tasks. The SkewGrad variant \(\loft_{r=46}^{\pskew}\) remains competitive as well, although we treat the six-task hold-out results as the cleaner setting for isolating support-selection effects.

It is also worth noting that our MNLI/QQP runs use maximum sequence length \(128\). Some published DeBERTaV3-base GLUE baselines use longer task-specific maximum sequence lengths or otherwise different training recipes, so this follow-up is conservative with respect to input length and compute. The main takeaway is therefore not that the MNLI/QQP table establishes a fully controlled ranking across all methods, but that LOFT variants remain competitive under a widely used development-set protocol while using a relatively lightweight sequence-length configuration. We use these results as supportive evidence for scalability to larger GLUE tasks, while reserving Table~\ref{tab:glue_main} as the primary controlled GLUE conclusion.

\subsection{VTAB-1K on ViT-B/16}
\label{app:vtab_details}

\paragraph{Datasets and evaluation.}
We evaluate visual transfer on VTAB-1K, which consists of 19 tasks grouped into natural, specialized, and structured categories. Following the standard VTAB-1K protocol and PSOFT, each task uses 800 labeled training examples for adaptation and 200 validation examples for hyperparameter selection. Performance is reported as top-1 classification accuracy on the original test set. All numbers are averaged over five random seeds. Peak memory is measured on a single NVIDIA V100 64GB GPU.

\paragraph{Model and training protocol.}
We use ViT-B/16 as the image backbone. Unless otherwise stated, we use AdamW, warmup ratio \(0.1\), cosine learning-rate schedule, classifier-head learning rate \(5\mathrm{E}{-3}\), batch size \(64\), weight decay \(1\mathrm{E}{-3}\), dropout \(1\mathrm{E}{-1}\), and \(50\) epochs.

\begin{center}
\captionsetup{type=table}
\captionof{table}{Shared hyperparameter settings for fine-tuning ViT-B/16 on VTAB-1K.}
\label{tab:vtab_shared_setup}
\small
\begin{tabular}{lc}
\toprule
Hyperparameter & ViT-B/16 \\
\midrule
Optimizer & AdamW \\
Warmup ratio & 0.1 \\
LR schedule & Cosine \\
Learning rate (head) & \(5\mathrm{E}{-3}\) \\
Batch size & 64 \\
Weight decay & \(1\mathrm{E}{-3}\) \\
Dropout & \(1\mathrm{E}{-1}\) \\
\#Epochs & 50 \\
\bottomrule
\end{tabular}
\end{center}

\paragraph{LOFT variants on VTAB.}
We evaluate principal-support LOFT with orthogonal and free transforms at ranks \(r=42\) and \(r=46\). We additionally evaluate two orthogonal gradient-based variants: \(\loft^{\pgrad}\) and \(\loft^{\pskew}\), both at rank \(r=46\). Table~\ref{tab:vtab_loft_lrs} reports the selected task-specific learning rates.

\begin{center}
\captionsetup{type=table}
\captionof{table}{
Task-specific learning rates for LOFT variants on VTAB-1K. Task order follows the 19 VTAB tasks used in the main table.
}
\label{tab:vtab_loft_lrs}
\scriptsize
\renewcommand{\arraystretch}{1.22}
\setlength{\tabcolsep}{2.4pt}
\begin{adjustbox}{max width=\textwidth}
\begin{tabular}{lccccccccccccccccccc}
\toprule
Method
& \vthead{Cifar100}
& \vthead{Caltech101}
& \vthead{DTD102}
& \vthead{Flower102}
& \vthead{Pets}
& \vthead{SVHN}
& \vthead{Sun397}
& \vthead{Camelyon}
& \vthead{EuroSAT}
& \vthead{Resisc45}
& \vthead{Retinopathy}
& \vthead{Clevr-Count}
& \vthead{Clevr-Dist}
& \vthead{DMLab}
& \vthead{KITTI-Dist}
& \vthead{dSpr-Loc}
& \vthead{dSpr-Ori}
& \vthead{sNORB-Azi}
& \vthead{sNORB-Ele} \\
\midrule
\loft$_{r=42}^{\scriptscriptstyle \pprin}$
& \(5\mathrm{E}{-4}\) & \(3\mathrm{E}{-3}\) & \(1\mathrm{E}{-3}\) & \(2\mathrm{E}{-3}\) & \(5\mathrm{E}{-4}\) & \(3.5\mathrm{E}{-3}\) & \(1.5\mathrm{E}{-3}\) & \(5\mathrm{E}{-4}\) & \(1.5\mathrm{E}{-3}\) & \(2\mathrm{E}{-3}\) & \(5\mathrm{E}{-4}\) & \(1.5\mathrm{E}{-3}\) & \(1\mathrm{E}{-3}\) & \(3.5\mathrm{E}{-3}\) & \(2\mathrm{E}{-3}\) & \(3\mathrm{E}{-3}\) & \(3.5\mathrm{E}{-3}\) & \(5\mathrm{E}{-3}\) & \(3.5\mathrm{E}{-3}\) \\

\loft$_{r=42}^{\scriptscriptstyle \pprin,\,\Tfree}$
& \(1.5\mathrm{E}{-3}\) & \(3.5\mathrm{E}{-3}\) & \(1\mathrm{E}{-3}\) & \(3.5\mathrm{E}{-3}\) & \(5\mathrm{E}{-4}\) & \(5\mathrm{E}{-3}\) & \(1.5\mathrm{E}{-3}\) & \(1\mathrm{E}{-3}\) & \(3.5\mathrm{E}{-3}\) & \(3.5\mathrm{E}{-3}\) & \(1.5\mathrm{E}{-3}\) & \(1.5\mathrm{E}{-3}\) & \(5\mathrm{E}{-3}\) & \(5\mathrm{E}{-3}\) & \(3.5\mathrm{E}{-3}\) & \(3.5\mathrm{E}{-3}\) & \(1.5\mathrm{E}{-3}\) & \(5\mathrm{E}{-3}\) & \(3.5\mathrm{E}{-3}\) \\

\loft$_{r=46}^{\scriptscriptstyle \pprin}$
& \(5\mathrm{E}{-4}\) & \(4\mathrm{E}{-4}\) & \(2\mathrm{E}{-4}\) & \(1\mathrm{E}{-4}\) & \(5\mathrm{E}{-4}\) & \(3.5\mathrm{E}{-3}\) & \(1.5\mathrm{E}{-3}\) & \(1\mathrm{E}{-3}\) & \(2\mathrm{E}{-3}\) & \(2\mathrm{E}{-3}\) & \(3\mathrm{E}{-3}\) & \(1\mathrm{E}{-3}\) & \(3.5\mathrm{E}{-3}\) & \(3.5\mathrm{E}{-3}\) & \(2\mathrm{E}{-3}\) & \(3\mathrm{E}{-3}\) & \(2\mathrm{E}{-3}\) & \(3\mathrm{E}{-3}\) & \(2\mathrm{E}{-3}\) \\

\loft$_{r=46}^{\scriptscriptstyle \pprin,\,\Tfree}$
& \(1.5\mathrm{E}{-3}\) & \(3.5\mathrm{E}{-3}\) & \(5\mathrm{E}{-4}\) & \(3.5\mathrm{E}{-3}\) & \(5\mathrm{E}{-4}\) & \(5\mathrm{E}{-3}\) & \(1\mathrm{E}{-3}\) & \(1\mathrm{E}{-3}\) & \(3.5\mathrm{E}{-3}\) & \(3.5\mathrm{E}{-3}\) & \(1\mathrm{E}{-3}\) & \(1\mathrm{E}{-3}\) & \(5\mathrm{E}{-3}\) & \(5\mathrm{E}{-3}\) & \(3.5\mathrm{E}{-3}\) & \(3.5\mathrm{E}{-3}\) & \(1.5\mathrm{E}{-3}\) & \(5\mathrm{E}{-3}\) & \(1.5\mathrm{E}{-3}\) \\

\loft$_{r=46}^{\scriptscriptstyle \pgrad}$
& \(2\mathrm{E}{-4}\) & \(9\mathrm{E}{-4}\) & \(3.5\mathrm{E}{-4}\) & \(3.5\mathrm{E}{-4}\) & \(2.8\mathrm{E}{-4}\) & \(9\mathrm{E}{-4}\) & \(1.3\mathrm{E}{-3}\) & \(7\mathrm{E}{-4}\) & \(1.78\mathrm{E}{-3}\) & \(9\mathrm{E}{-4}\) & \(1.6\mathrm{E}{-4}\) & \(7\mathrm{E}{-4}\) & \(1.36\mathrm{E}{-3}\) & \(1.73\mathrm{E}{-3}\) & \(1.6\mathrm{E}{-3}\) & \(4\mathrm{E}{-4}\) & \(1.3\mathrm{E}{-3}\) & \(1.73\mathrm{E}{-3}\) & \(1.4\mathrm{E}{-3}\) \\

\loft$_{r=46}^{\scriptscriptstyle \pskew}$
& \(1.5\mathrm{E}{-4}\) & \(1.3\mathrm{E}{-3}\) & \(2\mathrm{E}{-4}\) & \(3\mathrm{E}{-4}\) & \(4\mathrm{E}{-5}\) & \(9.5\mathrm{E}{-4}\) & \(2.9\mathrm{E}{-4}\) & \(8\mathrm{E}{-4}\) & \(8\mathrm{E}{-4}\) & \(9\mathrm{E}{-4}\) & \(7\mathrm{E}{-4}\) & \(8\mathrm{E}{-4}\) & \(8\mathrm{E}{-4}\) & \(9\mathrm{E}{-4}\) & \(8\mathrm{E}{-4}\) & \(9\mathrm{E}{-4}\) & \(9\mathrm{E}{-4}\) & \(8\mathrm{E}{-4}\) & \(9\mathrm{E}{-4}\) \\
\bottomrule
\end{tabular}
\end{adjustbox}
\end{center}

\subsection{Mathematical Question Answering on MetaMathQA-40K}
\label{app:math_details}

\paragraph{Datasets and evaluation.}
For mathematical reasoning, we fine-tune LLaMA2-7B on MetaMathQA-40K and evaluate the resulting models on GSM8K and MATH. 
This follows the standard mathematical reasoning protocol used in recent PEFT and orthogonal fine-tuning evaluations, where MetaMathQA-40K is used as the instruction-tuning set and GSM8K/MATH are used as held-out reasoning benchmarks. 
All methods are evaluated under the same finalized pipeline to avoid method-dependent differences in answer parsing, adapter loading, or post-training deployment.

For GSM8K, we report exact-match accuracy after extracting the final numerical answer from model generations. 
For MATH, we use the same corrected answer-extraction routine across all methods and report the final accuracy on the evaluation set. 
In both benchmarks, all finalized PEFT adapters are trained first and merged only after training before evaluation. 
This avoids discrepancies caused by evaluating unmerged adapters or by merging adapters at different stages of training.

For evaluation throughput, we use batch size \(32\) on GSM8K and batch size \(1\) on MATH. 
Unless otherwise stated, all methods are evaluated on a single NVIDIA A100-SXM GPU. 
HRA is evaluated on a single NVIDIA H200-SXM GPU because its implementation incurs a substantially larger memory requirement and does not fit under the same A100 setting used by the other methods.

\paragraph{Low-budget adaptation setting.}
The main mathematical reasoning comparison is designed as a matched low-budget experiment. 
Instead of comparing methods at the larger parameter scale used in prior full-budget PEFT tables, we restrict all main methods to approximately \(0.5\)M trainable parameters. 
This setting makes the comparison more diagnostic: it tests whether the method can use a very small adaptation budget effectively, rather than simply benefiting from a larger number of trainable parameters.

For LoRA, we use rank \(r=1\), which gives \(0.524\)M trainable parameters in this setting. 
For PSOFT and LOFT, we use rank \(r=128\), resulting in approximately \(0.52\)--\(0.54\)M trainable parameters. 
This is possible because LOFT and PSOFT place the trainable transformation inside a compact subspace rather than training two full low-rank factors. 
For BOFT and HRA, we use their closest low-budget configurations that remain comparable in parameter count.

\paragraph{Training protocol.}
All memory-feasible methods are fine-tuned with the same general training protocol. 
We use AdamW, cosine learning-rate decay, warmup ratio \(0.1\), maximum sequence length \(512\), train batch size \(64\), and \(2\) training epochs. 
Method-specific differences are limited to the adapter parameterization, rank/configuration, learning rate, and support selection rule. 
The finalized learning rate for each method is reported in Table~\ref{tab:math_lrs_lowbudget}.

\begin{table}[H]
\centering
\small
\caption{Shared hyperparameter settings for low-budget MetaMathQA-40K fine-tuning.}
\label{tab:math_shared_setup_lowbudget}
\begin{tabular}{lc}
\toprule
Hyperparameter & Value \\
\midrule
Base model & LLaMA2-7B \\
Training data & MetaMathQA-40K \\
Evaluation datasets & GSM8K, MATH \\
Optimizer & AdamW \\
Warmup ratio & \(0.1\) \\
LR schedule & Cosine \\
Max sequence length & \(512\) \\
Train batch size & \(64\) \\
Training epochs & \(2\) \\
GSM8K eval batch size & \(32\) \\
MATH eval batch size & \(1\) \\
Default GPU & NVIDIA A100-SXM \\
HRA GPU & NVIDIA H200-SXM \\
\bottomrule
\end{tabular}
\end{table}

\paragraph{Compared methods.}
We compare LOFT against representative low-rank and orthogonal PEFT baselines under the matched low-budget setting.

\begin{itemize}
    \item \textbf{LoRA} uses a rank-\(1\) additive low-rank update.
    \item \textbf{BOFT} uses the low-budget butterfly orthogonal configuration \(b=2,m=1\).
    \item \textbf{HRA} uses Householder Reflection Adaptation with \(r=2\) and strict orthogonality, i.e., \(\lambda=\infty\).
    \item \textbf{PSOFT} uses principal-subspace orthogonal fine-tuning with rank \(r=128\).
    \item \textbf{Ortho LOFT-prin} uses the same orthogonal LOFT parameterization with principal support.
    \item \textbf{Ortho LOFT-GradSVD} selects the support using gradient singular directions.
    \item \textbf{Ortho LOFT-SkewGrad} selects the support from skew-gradient information.
\end{itemize}

The principal LOFT variant is included to isolate the effect of the parameterization when the support is chosen from pretrained weight geometry. 
The GradSVD and SkewGrad variants are included to test whether task-informed support selection improves mathematical reasoning under the same parameter budget.

\paragraph{Learning rates and method configurations.}
Table~\ref{tab:math_lrs_lowbudget} reports the finalized learning rates and parameter budgets used in the low-budget mathematical reasoning experiments. 
The learning rate is tuned per method, while the trainable parameter count is kept close to \(0.5\)M whenever possible.

\begin{table}[H]
\centering
\small
\caption{Method-specific settings for low-budget MetaMathQA-40K fine-tuning.}
\label{tab:math_lrs_lowbudget}
\begin{tabular}{lccc}
\toprule
Method & Configuration & \#Params & Learning rate \\
\midrule
BOFT & \(b=2,m=1\) & 0.786M & \(8\mathrm{E}{-4}\) \\
LoRA & \(r=1\) & 0.524M & \(8\mathrm{E}{-4}\) \\
HRA & \(r=2,\lambda=\infty\) & 0.524M & \(1\mathrm{E}{-3}\) \\
PSOFT & \(r=128\) & 0.536M & \(5\mathrm{E}{-4}\) \\
Ortho LOFT-prin & \(r=128\) & 0.520M & \(5\mathrm{E}{-4}\) \\
Ortho LOFT-GradSVD & \(r=128\) & 0.520M & \(8\mathrm{E}{-4}\) \\
Ortho LOFT-SkewGrad & \(r=128\) & 0.520M & \(8\mathrm{E}{-4}\) \\
\bottomrule
\end{tabular}
\end{table}

\paragraph{Peak memory measurement.}
The memory column in Table~\ref{tab:math_main} reports peak GPU memory allocated during training. 
For each method, we reset the CUDA peak-memory counter before fine-tuning and record the maximum allocated memory reached under the shared training pipeline. 
All memory-feasible methods are run on a single NVIDIA A100-SXM GPU. 
HRA is run on a single NVIDIA H200-SXM GPU due to out-of-memory issues on A100-SXM.

Under this measurement, LoRA and PSOFT require \(49.50\)GB and \(28.67\)GB, respectively, while all LOFT variants require \(25.68\)GB. 
Thus, LOFT has the lowest peak memory among the compared methods, reducing memory by \(48.1\%\) relative to LoRA and \(10.4\%\) relative to PSOFT. 
BOFT and HRA require \(63.24\)GB and \(124.92\)GB, respectively, corresponding to \(59.4\%\) and \(79.4\%\) reductions for LOFT in our implementation.

\paragraph{Evaluation implementation details.}
We found mathematical reasoning evaluation to be sensitive to two implementation details. 
First, answer extraction can noticeably affect GSM8K and MATH scores if the parser only recognizes overly restrictive numerical formats. 
We therefore use a corrected answer-extraction routine that handles more general numeric patterns and apply it consistently to all finalized methods. 
Second, adapter deployment timing can affect the final reported score. 
In the finalized pipeline, each adapter is trained to completion, then merged into the backbone only after training, and the merged model is used for evaluation. 
This ensures that the reported differences reflect the adaptation method rather than inconsistencies in adapter loading or merging.

\paragraph{Result summary.}
Table~\ref{tab:math_main} reports the final GSM8K and MATH results. 
The principal-support LOFT variant performs similarly to PSOFT, which is expected because both rely on pretrained weight geometry to define the adaptation support. 
However, replacing principal support with gradient-informed support substantially improves the accuracy--memory trade-off. 
GradSVD achieves the best GSM8K result, while SkewGrad achieves the best MATH result, both under the same \(r=128\) and approximately \(0.52\)M-parameter budget. 
These results show that, in the low-budget decoder-side setting, the main benefit of LOFT comes not only from operating in a compact subspace, but from selecting a support that is better aligned with task-specific optimization geometry.

\section{Full Short-Horizon Support Diagnostics}
\label{app:short_horizon_diag}

This appendix provides the full numerical results for short-horizon diagnostics of the first-order support-selection mechanism. 
These diagnostics are not intended to replace final task evaluation. 
Instead, they test a narrower initialization-local question: whether the support score predicted by Proposition~\ref{prop:signal-strength} identifies subspaces that expose useful short-horizon descent directions. 
We report two complementary diagnostics. 
The first is a controlled probe diagnostic, where supports are constructed from training-only calibration batches, the task head and all non-LOFT parameters are frozen, and only the LOFT adapter is updated for a small number of steps. 
This setting isolates the effect of the selected support. 
The second is an early-validation diagnostic under the standard GLUE training pipeline, where the LOFT adapter and task head are updated while the pretrained backbone remains frozen, and validation loss is recorded during the first 25 update steps.

GradSVD is included here as an auxiliary gradient-informed baseline. It selects a support from the principal right singular subspace of the downstream gradient, but unlike SkewGrad it does not follow the skew-symmetric first-order generator \(F=\mathrm{skew}(W_0^\top G)\) that governs orthogonal subspace adaptation. Its role is therefore diagnostic: it tests whether generic gradient awareness is sufficient, or whether the orthogonal-specific skew signal is needed.

\subsection{Does the First-Order Signal Predict Support Quality?}
\label{sec:first_order_diag}

% We next test whether the first-order support-selection score has empirical optimization relevance. For each layer \(\ell\), let \(F_\ell=\mathrm{skew}(W_{0,\ell}^{\top}G_\ell)\). Given a support \(P=\{P_{r,\ell}\}_{\ell\in\mathcal L}\), we measure its relative training signal capture
% \begin{equation*}
%     \rho(P)
% =
% \frac{
% \sum_{\ell\in\mathcal L}
% \|P_{r,\ell}F_\ell P_{r,\ell}^{\top}\|_F^2
% }{
% \sum_{\ell\in\mathcal L}
% 2\sum_{k=1}^{r/2}\mu_{\ell,k}^2
% },
% \end{equation*}
% where \(\{\pm i\mu_{\ell,k}\}\) are the skew-eigenvalue pairs of \(F_\ell\), ordered by \(\mu_{\ell,1}\geq \mu_{\ell,2}\geq\cdots\). This normalizes against the best rank-\(r\) skew-invariant support predicted by Proposition~\ref{prop:signal-strength}. Thus, SkewGrad has \(\rho(P)\approx 1\) on calibration gradients by construction. The real test is whether this local signal predicts short-horizon loss decrease.

We first define the support score used in the diagnostics. 
The first-order analysis gives a criterion for choosing the support before training, and the experiments below test whether this criterion aligns with short-horizon optimization behavior.
For each adapted layer \(\ell\), let $F_\ell=\mathrm{skew}(W_{0,\ell}^{\top}G_\ell),$ where \(G_\ell\) is the calibration gradient at initialization. 
Given a chosen support \(P=\{P_{r,\ell}\}_{\ell\in\mathcal L}\), define its relative signal capture as
\begin{equation*}
    \rho(P)
=
\frac{
\sum_{\ell\in\mathcal L}
\|P_{r,\ell}F_\ell P_{r,\ell}^{\top}\|_F^2
}{
\sum_{\ell\in\mathcal L}
2\sum_{k=1}^{r/2}\mu_{\ell,k}^2
},
\end{equation*}
where \(\{\pm i\mu_{\ell,k}\}\) are the skew-eigenvalue pairs of \(F_\ell\), ordered by 
\(\mu_{\ell,1}\geq \mu_{\ell,2}\geq\cdots\). 
The denominator is the best rank-\(r\) skew-invariant signal predicted by Proposition~\ref{prop:signal-strength}. 
Thus, SkewGrad has \(\rho(P)\approx 1\) on the calibration batches by construction. 
The key question is whether this local score also predicts short-horizon loss decrease.

% We adopt two diagnostics. The controlled probe diagnostic constructs supports from training-only calibration batches, freezes the task head and all non-LOFT parameters, and measures probe-loss reduction after a few adapter updates under matched rank, optimizer, learning rate, calibration batches, and probe batches. The early validation diagnostic instead runs the standard GLUE training pipeline and evaluates validation loss at fixed horizons in the first 25 update steps, testing whether the same short-horizon ordering appears outside the frozen-head probe protocol. Random support serves as a task-agnostic negative control for subspace restriction alone: if any low-dimensional support were sufficient, Random should be competitive, whereas separation among Random, Principal, and SkewGrad indicates that the choice of \(P_r\) itself carries optimization signal.

Random support serves as a task-agnostic control: if any low-dimensional support were sufficient, Random would be competitive. 
Principal support tests whether pretrained-weight geometry alone is sufficient, while GradSVD and SkewGrad test two ways of using downstream gradient information.

\paragraph{Controlled probe diagnostic.}
Figure~\ref{fig:app_probe_all_supports} and Table~\ref{tab:app_diag_full_probe} report the full controlled probe diagnostic with Random, Principal, GradSVD, and SkewGrad supports. 
The score \(\rho(P)\) is the relative calibration skew-signal capture defined above. 
Because SkewGrad is constructed from the top skew-invariant subspace, it achieves \(\rho(P)\approx 1\) on the calibration gradients by construction. 
The important question is whether this local signal translates into short-horizon loss decrease on held-out probe batches.

\begin{center}
\captionsetup{type=figure}
\begin{minipage}[t]{0.48\textwidth}
    \centering
    \includegraphics[width=\linewidth]{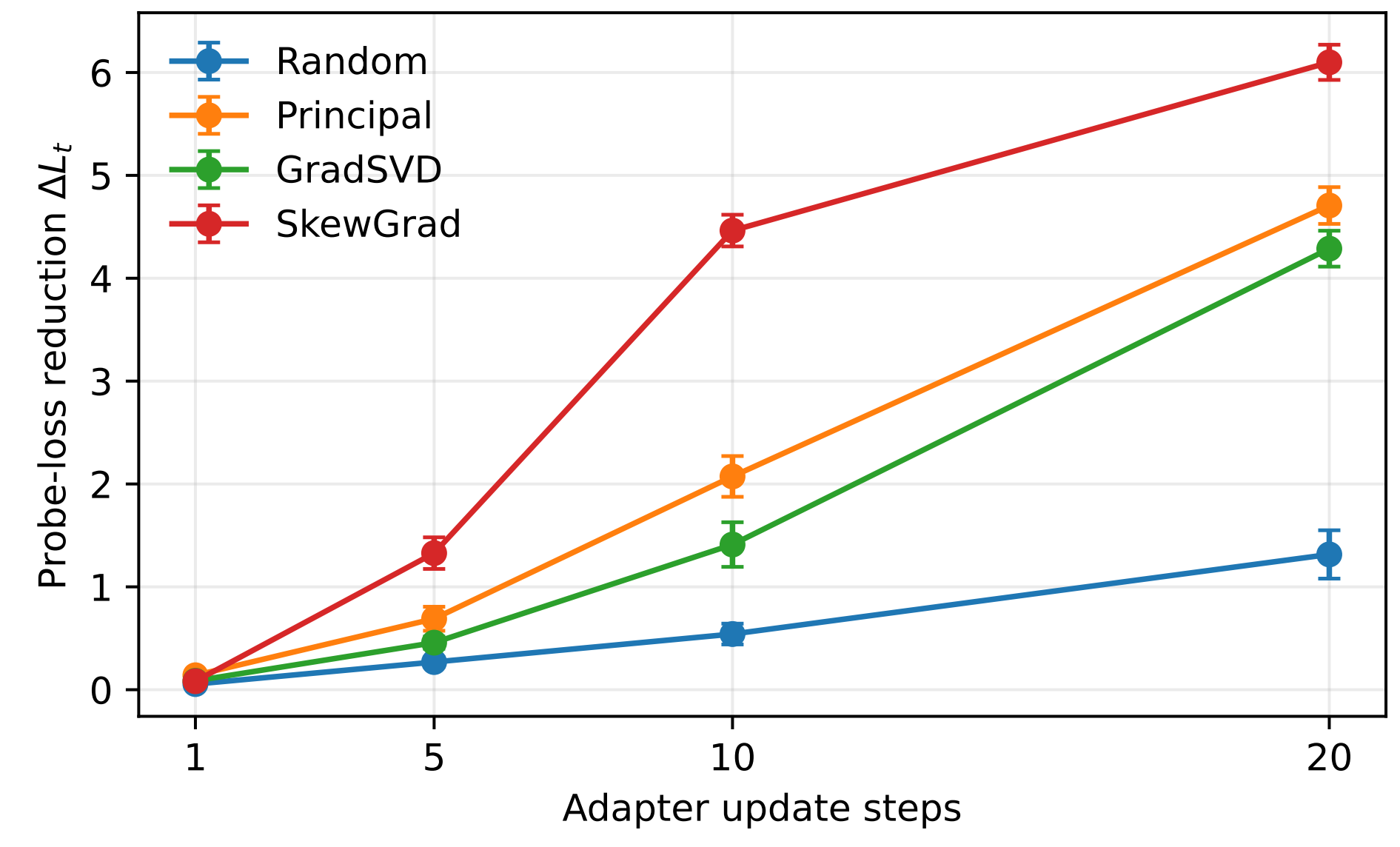}
    \vspace{-0.5em}
    \centerline{\small (a) STS-B.}
\end{minipage}
\hfill
\begin{minipage}[t]{0.48\textwidth}
    \centering
    \includegraphics[width=\linewidth]{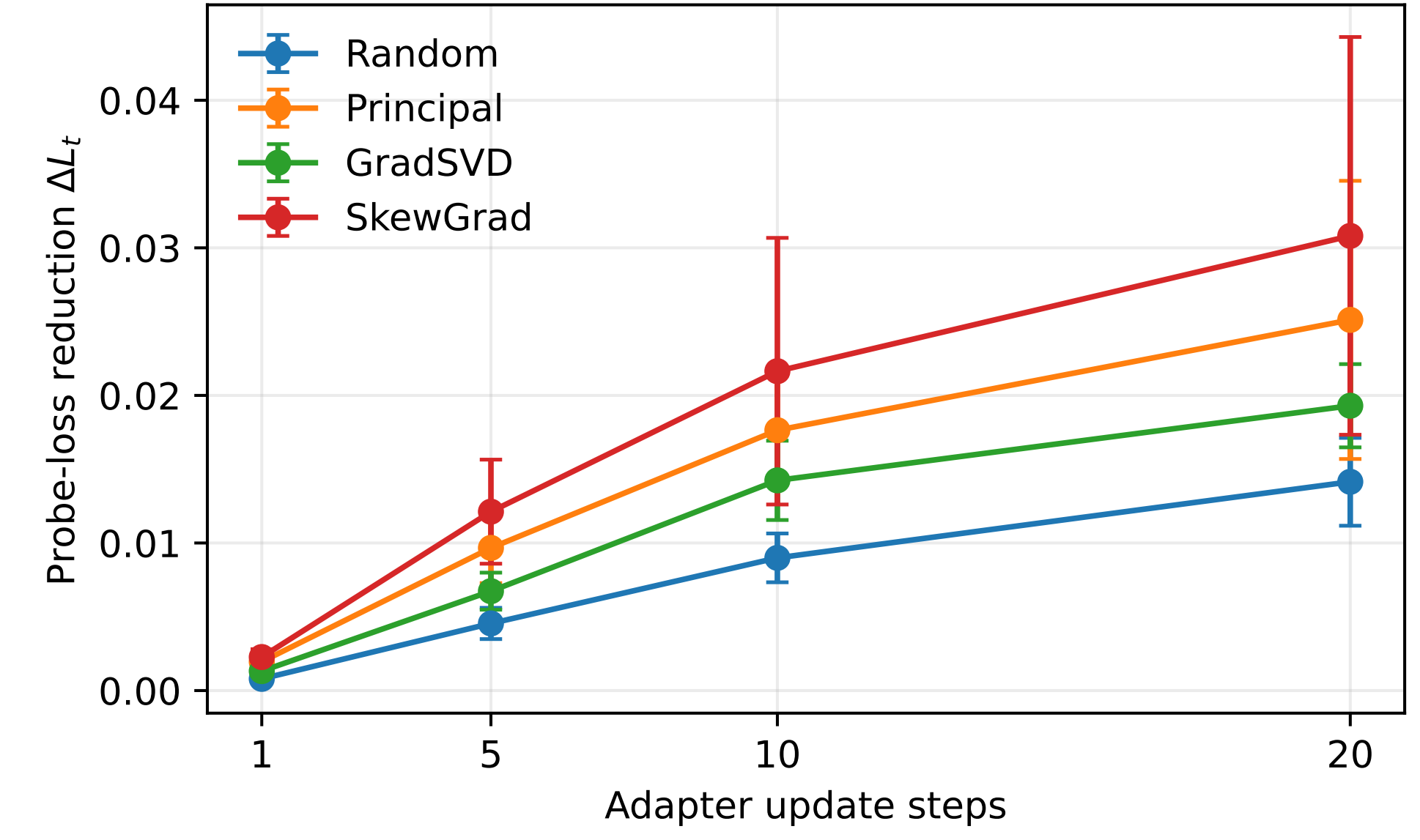}
    \vspace{-0.5em}
    \centerline{\small (b) CoLA.}
\end{minipage}
\vspace{-0.4em}
\captionof{figure}{
\textbf{All-support controlled probe diagnostic.}
The task head and all non-LOFT parameters are frozen, so the probe-loss reduction isolates the effect of support selection. GradSVD is included as a gradient-only auxiliary baseline. Compared with GradSVD, SkewGrad better matches the orthogonal first-order generator \(F=\mathrm{skew}(W_0^\top G)\), and yields the strongest short-horizon probe-loss reduction on both tasks.
}
\label{fig:app_probe_all_supports}
\end{center}

\begin{center}
\captionsetup{type=table}
\captionof{table}{
\textbf{Full controlled probe diagnostics.}
Mean \(\pm\) standard deviation over five seeds. \(\rho(P)\) is relative calibration skew-signal capture, and \(\Delta L_t\) is probe-loss reduction after \(t\) adapter-update steps. The task head and all non-LOFT parameters are frozen.
}
\label{tab:app_diag_full_probe}
\small
\setlength{\tabcolsep}{4.5pt}
\renewcommand{\arraystretch}{1.05}

\textbf{(a) STS-B}
\vspace{0.3em}

\begin{adjustbox}{width=0.92\textwidth}
\begin{tabular}{lccccc}
\toprule
Support & \(\rho(P)\) & \(\Delta L_{1}\) & \(\Delta L_{5}\) & \(\Delta L_{10}\) & \(\Delta L_{20}\) \\
\midrule
Random
& \(0.0020 \pm 0.0003\)
& \(0.055 \pm 0.007\)
& \(0.269 \pm 0.106\)
& \(0.541 \pm 0.226\)
& \(1.315 \pm 0.526\) \\

Principal
& \(0.0357 \pm 0.0070\)
& \(\mathbf{0.140 \pm 0.026}\)
& \(0.690 \pm 0.261\)
& \(2.073 \pm 0.443\)
& \(4.707 \pm 0.400\) \\

GradSVD
& \(0.0491 \pm 0.0073\)
& \(0.087 \pm 0.008\)
& \(0.457 \pm 0.133\)
& \(1.411 \pm 0.485\)
& \(4.287 \pm 0.390\) \\

SkewGrad
& \(\mathbf{1.0000 \pm 0.0000}\)
& \(0.083 \pm 0.038\)
& \(\mathbf{1.328 \pm 0.343}\)
& \(\mathbf{4.463 \pm 0.345}\)
& \(\mathbf{6.099 \pm 0.383}\) \\
\bottomrule
\end{tabular}
\end{adjustbox}

\vspace{1.0em}

\textbf{(b) CoLA}
\vspace{0.3em}

\begin{adjustbox}{width=0.92\textwidth}
\begin{tabular}{lccccc}
\toprule
Support & \(\rho(P)\) & \(\Delta L_{1}\) & \(\Delta L_{5}\) & \(\Delta L_{10}\) & \(\Delta L_{20}\) \\
\midrule
Random
& \(0.0020 \pm 0.0004\)
& \(0.0008 \pm 0.0003\)
& \(0.0045 \pm 0.0024\)
& \(0.0090 \pm 0.0037\)
& \(0.0141 \pm 0.0067\) \\

Principal
& \(0.0353 \pm 0.0153\)
& \(0.0020 \pm 0.0014\)
& \(0.0097 \pm 0.0053\)
& \(0.0176 \pm 0.0082\)
& \(0.0251 \pm 0.0211\) \\

GradSVD
& \(0.0500 \pm 0.0052\)
& \(0.0013 \pm 0.0006\)
& \(0.0067 \pm 0.0028\)
& \(0.0142 \pm 0.0060\)
& \(0.0193 \pm 0.0063\) \\

SkewGrad
& \(\mathbf{0.9995 \pm 0.0001}\)
& \(\mathbf{0.0023 \pm 0.0012}\)
& \(\mathbf{0.0121 \pm 0.0079}\)
& \(\mathbf{0.0216 \pm 0.0202}\)
& \(\mathbf{0.0308 \pm 0.0301}\) \\
\bottomrule
\end{tabular}
\end{adjustbox}
\end{center}

The controlled probe results confirm the predicted ordering. 
On STS-B, SkewGrad gives the strongest reduction from step 5 onward and the largest reduction at step 20. 
On CoLA, SkewGrad gives the largest mean reduction across all measured steps, although the results have higher variance. 
GradSVD captures more skew-signal than Random but does not match SkewGrad, suggesting that generic gradient awareness is less effective than the orthogonal-specific skew signal.

\paragraph{Practical significance.}
The controlled probe diagnostic is not intended to replace final validation or test performance. Its practical role is to provide a cheap, calibration-stage test of whether a candidate support exposes useful local descent directions before running full fine-tuning. This is especially relevant in step-limited adaptation settings, such as rapid personalization, low-resource task adaptation, or federated/local fine-tuning, where the first few update steps can matter disproportionately. The comparison with GradSVD also shows that generic gradient information is not sufficient: aligning the support with the skew-symmetric generator specific to orthogonal adaptation gives a stronger short-horizon optimization signal.

\paragraph{Calibration-size robustness.}
We additionally vary the number \(K\) of training-only calibration mini-batches used to construct the SkewGrad support. This diagnostic tests whether SkewGrad depends on a large auxiliary calibration set, while keeping the rank, transform class, optimizer, and probe protocol fixed. Since \(\rho(P)\) is measured on the calibration skew signal used for support construction, it mainly serves as a construction-stability check. The more important quantity is the controlled probe loss reduction \(\Delta L_{20}\), which tests whether the constructed support yields useful short-horizon optimization. The results show that small calibration sizes are already sufficient: \(K=1\) or \(K=2\) achieves near-saturated \(\rho(P)\) on most tasks and competitive \(\Delta L_{20}\) across tasks. 
Increasing \(K\) does not systematically improve the probe objective, suggesting that SkewGrad does not rely on a large hidden calibration set.

\begin{center}
\captionsetup{type=table}
\captionof{table}{
\textbf{Calibration-size robustness for SkewGrad.}
\(K\) is the number of training-only calibration mini-batches used to construct the SkewGrad support. Values are mean \(\pm\) standard deviation over five seeds. Rank, transform, optimizer, and probe protocol are fixed. \(\Delta L_{20}\) is the controlled probe-loss reduction after 20 adapter-update steps.
}
\label{tab:calib_size_skewgrad}
\small
\setlength{\tabcolsep}{4.5pt}
\renewcommand{\arraystretch}{1.04}
\begin{tabular}{llcc}
\toprule
Task & \(K\) & \(\rho(P)\) & \(\Delta L_{20}\) \\
\midrule
CoLA & 1 & \(0.9991 \pm 0.0002\) & \(0.1133 \pm 0.0466\) \\
CoLA & 2 & \(0.9991 \pm 0.0001\) & \(0.0740 \pm 0.0415\) \\
CoLA & 4 & \(0.9995 \pm 0.0001\) & \(0.0308 \pm 0.0270\) \\
CoLA & 8 & \(0.9993 \pm 0.0002\) & \(0.0310 \pm 0.0195\) \\
\midrule
MRPC & 1 & \(0.9902 \pm 0.0034\) & \(0.1623 \pm 0.0154\) \\
MRPC & 2 & \(0.9978 \pm 0.0010\) & \(0.1845 \pm 0.0315\) \\
MRPC & 4 & \(0.9994 \pm 0.0002\) & \(0.1417 \pm 0.0164\) \\
MRPC & 8 & \(0.9998 \pm 0.0000\) & \(0.0870 \pm 0.0306\) \\
\midrule
STS-B & 1 & \(0.9999 \pm 0.0000\) & \(8.3965 \pm 0.3313\) \\
STS-B & 2 & \(0.9999 \pm 0.0000\) & \(7.4750 \pm 0.5630\) \\
STS-B & 4 & \(1.0000 \pm 0.0000\) & \(6.0991 \pm 0.3423\) \\
STS-B & 8 & \(1.0000 \pm 0.0000\) & \(3.4890 \pm 0.3343\) \\
\bottomrule
\end{tabular}
\end{center}

Table~\ref{tab:calib_size_skewgrad} shows that SkewGrad does not require a large calibration set to construct a useful support. The calibration skew-signal capture \(\rho(P)\) is already close to one with \(K=1\) on CoLA and STS-B, and reaches the same near-saturated regime on MRPC with \(K=2\) or \(K=4\). The probe-loss reduction also remains strong with small \(K\): on MRPC and STS-B, SkewGrad gives the largest \(\Delta L_{20}\) among Random, Principal, GradSVD, and SkewGrad for all tested calibration sizes; on CoLA, it is strongest for \(K\leq4\), while the \(K=8\) differences are small and within the high variance of the task. These results suggest that the proposed support construction is not dependent on a large auxiliary calibration set; a few training-only mini-batches are sufficient for the short-horizon optimization signal used by SkewGrad.

\paragraph{Early validation diagnostic.}

The early-validation curves and summary statistics are reported in Figure~\ref{fig:first_order_diag} and Table~\ref{tab:valearly25_summary}. 
This diagnostic checks whether the short-horizon ordering observed in the controlled probe setting is also visible under the standard GLUE training pipeline. 
The LOFT adapter and task head are updated while the pretrained backbone remains frozen, and validation loss is recorded at fixed early update steps. 
We use these results only as a sanity check for early training behavior; they are not intended to replace final task performance.

\begin{center}
\captionsetup{type=table, skip=3pt}
\vspace{-0.1em}
\small
\setlength{\tabcolsep}{4pt}
\captionof{table}{
\textbf{Early validation loss.}
Mean \(\pm\) standard deviation over five seeds. 
Validation loss is recorded at fixed update steps in the first 25 training steps.
Average loss is computed over the listed step ranges. 
Wins count how many fixed steps give the lowest mean validation loss. 
Lower loss and more wins are better.
}
\begin{tabular}{llcccc}
\toprule
Task & Support & Avg. loss 5--20 & Avg. loss 5--25 & Wins 1--25 & Wins 5--25 \\
\midrule
STS-B
& Random    & \(4.5900 \pm 0.3400\) & \(4.1745 \pm 0.2927\) & 0 & 0 \\
& Principal & \(3.9261 \pm 0.3246\) & \(3.6557 \pm 0.2289\) & 2 & 1 \\
& SkewGrad  & \(\mathbf{3.4281 \pm 0.2808}\) & \(\mathbf{3.1687 \pm 0.2911}\) & \(\mathbf{4}\) & \(\mathbf{4}\) \\
\midrule
CoLA
& Random    & \(0.6235 \pm 0.0163\) & \(0.6200 \pm 0.0160\) & 0 & 0 \\
& Principal & \(0.6026 \pm 0.0206\) & \(0.5866 \pm 0.0218\) & 1 & 0 \\
& SkewGrad  & \(\mathbf{0.5673 \pm 0.0346}\) & \(\mathbf{0.5570 \pm 0.0419}\) & \(\mathbf{5}\) & \(\mathbf{5}\) \\
\bottomrule
\end{tabular}
% \vspace{-0.05em}
\label{tab:valearly25_summary}
\vspace{-0.3em}
\end{center}

\begin{figure}[!tbp]
\centering
\includegraphics[
  width=0.98\linewidth,
  height=0.16\textheight,
  trim=10 70 10 38,
  clip
]{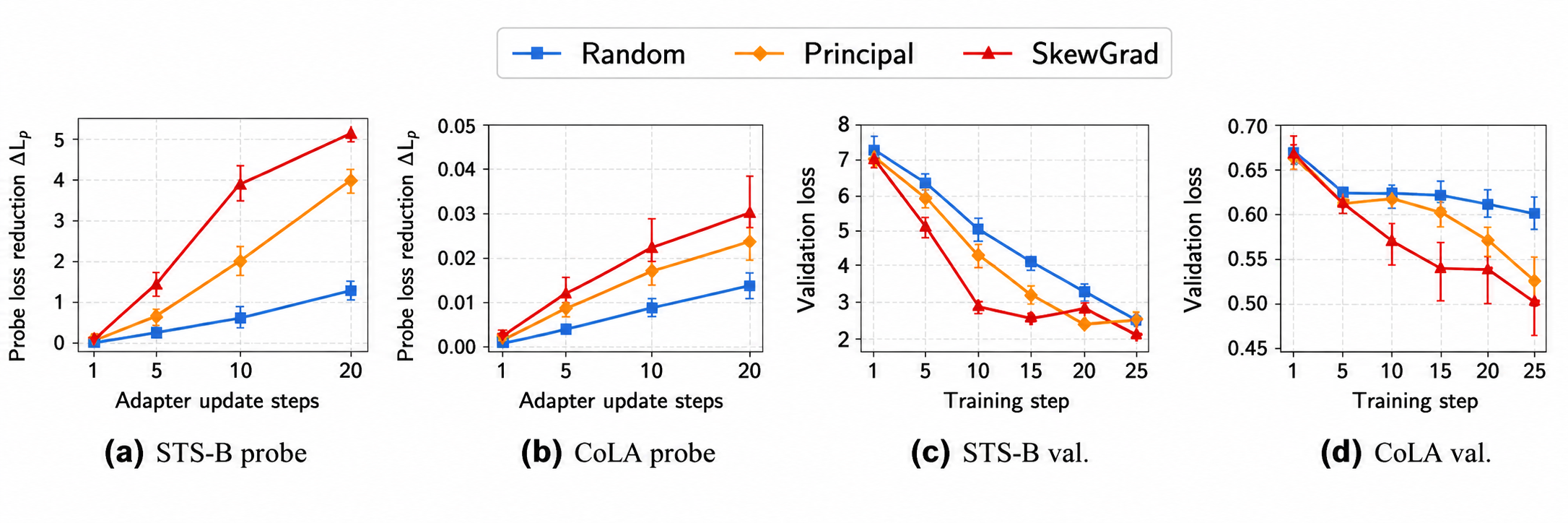}
\caption{
\textbf{Short-horizon support diagnostics.}
Panels (a,b) show held-out probe-loss reduction after adapter-only updates.
Panels (c,d) show validation loss during the first 25 steps of normal GLUE training.
All panels compare Random, Principal, and SkewGrad under matched rank, optimizer, learning rate, and orthogonal LOFT transform. GradSVD is included in the full controlled-probe results in Table~\ref{tab:app_diag_full_probe}.
}
\label{fig:first_order_diag}
\vspace{-0.7em}
\end{figure}

% The probe diagnostic confirms the first-order mechanism: supports with larger skew-gradient signal expose stronger short-horizon descent directions. SkewGrad gives the largest probe-loss reduction from step 5 onward on STS-B and the largest mean reduction at all measured horizons on CoLA, although CoLA is noisier. The validation diagnostic shows that this effect is not restricted to the frozen-head probe protocol: over the first 25 update steps, SkewGrad achieves the best early-window validation averages on both tasks and wins most fixed horizons.

% These diagnostics are not substitutes for final task performance. Since Proposition~\ref{prop:signal-strength} is initialization-local, they mainly test whether the predicted support exposes useful early descent directions. Together with the final-task results, they connect the first-order support signal to both early optimization behavior and matched-budget downstream performance, turning \(P_r\) from an implicit architectural choice into a measurable optimization object. Appendix~\ref{app:short_horizon_diag} further shows that SkewGrad remains effective with only a few training-only calibration mini-batches.

The early-validation results show that the same ordering also appears in normal GLUE training. 
Within the first 25 updates, SkewGrad achieves the lowest average validation loss on both tasks and wins most fixed horizons. 
This supports the narrower claim that the score in Proposition~\ref{prop:signal-strength} identifies supports that help early optimization. 
Together with the controlled probe and calibration-size diagnostics, these results link the first-order support signal to short-horizon descent behavior under matched rank, optimizer, and transform settings.

\section{Full Support and Transform Ablations}
\label{app:support_transform_ablations}

This appendix reports LOFT variants omitted from the main GLUE and VTAB tables. 
For these two benchmarks, the main text fixes the default orthogonal transform \(T_r=\Torth\) and focuses on principal support \(P_r=\pprin\) and SkewGrad support \(P_r=\pskew\). 
Here we additionally report GradSVD support \(P_r=\pgrad\), unconstrained transform variants \(T_r=\Tfree\), and auxiliary rank variants whenever available.
Some rows are repeated from the main text so that each benchmark has a complete ablation table in one place. 
After each table, we summarize which effects are attributable to support selection and which are attributable to the transform class.

\paragraph{GLUE ablations.}
Table~\ref{tab:app_glue_full} reports the full GLUE table, including OOM baselines, GradSVD supports, and free in-subspace transforms. The main text reports only the default orthogonal transform and the two support choices most relevant to the main mechanism.

\begin{table}[H]
\centering
\scriptsize
\renewcommand{\arraystretch}{1.06}
\setlength{\tabcolsep}{4pt}
\caption{
Full results on six controlled GLUE tasks using DeBERTaV3-base under the same hold-out validation protocol as PSOFT. We report Matthews correlation on CoLA, Pearson correlation on STS-B, and accuracy on RTE, MRPC, SST-2, and QNLI. All results are averaged over five random seeds. Peak memory is measured on a single NVIDIA V100 64GB GPU. For LOFT, the subscript denotes rank \(r\), the superscript denotes support \(P_r\), and \(T_r=\Torth\) is used by default unless \(T_r=\Tfree\) is shown.
}
\label{tab:app_glue_full}
\resizebox{\textwidth}{!}{
\begin{tabular}{lccccccccc}
\toprule
Method & \#Params & Mem. (GB) & CoLA & STS-B & RTE & MRPC & SST-2 & QNLI & \cellcolor{avgblue}Avg. \\
\midrule
FFT & 184M & 5.9 & 67.56 & 91.46 & 82.88 & 90.69 & 94.13 & 93.37 & \cellcolor{avgblue}86.68 \\
GOFTv2 & 0.08M & 18.5 & 65.45 & \multicolumn{5}{c}{OOM} & \cellcolor{avgblue}N/A \\
qGOFTv2 & 0.33M & 18.5 & 68.03 & \multicolumn{5}{c}{OOM} & \cellcolor{avgblue}N/A \\
BOFT$_{m=2}^{b=8}$ & 1.41M & 6.3 & 68.85 & 91.09 & 83.60 & 88.40 & 95.28 & 93.78 & \cellcolor{avgblue}86.83 \\
OFTv2$_{b=32}$ & 1.29M & 4.5 & 66.79 & 91.22 & 84.03 & 89.61 & 93.72 & 92.64 & \cellcolor{avgblue}86.34 \\
LoRA$_{r=8}$ & 1.33M & 4.5 & 67.98 & 91.60 & 84.87 & 90.20 & 95.28 & 93.89 & \cellcolor{avgblue}87.30 \\
PiSSA$_{r=8}$ & 1.33M & 4.5 & 66.50 & 91.40 & 83.77 & 89.90 & 93.17 & 92.72 & \cellcolor{avgblue}86.24 \\
DoRA$_{r=8}$ & 1.41M & 5.8 & 67.06 & 91.60 & \textbf{87.19} & 90.49 & 95.23 & 94.09 & \cellcolor{avgblue}87.61 \\
LoRA-XS$_{r=136}$ & 1.33M & 4.2 & 64.67 & 91.48 & 84.17 & 91.27 & 93.85 & 93.14 & \cellcolor{avgblue}86.43 \\
PSOFT$_{r=46}$ & 0.08M & 4.1 & 70.42 & 91.56 & 86.74 & 90.49 & 95.55 & 93.47 & \cellcolor{avgblue}88.04 \\
\midrule
\rowcolor{psoftgreen}
\loft$_{r=46}^{\scriptscriptstyle \pprin}$
& 0.07M & 4.1 & 70.52 & 91.37 & 86.19 & 90.90 & \textbf{95.73} & 93.63 & \cellcolor{avgblue}88.06 \\
\rowcolor{psoftgreen}
\loft$_{r=46}^{\scriptscriptstyle \pprin,\,\Tfree}$
& 0.15M & 4.1 & 70.84 & 91.41 & 87.05 & 91.17 & 95.41 & 94.23 & \cellcolor{avgblue}88.35 \\
\rowcolor{psoftgreen}
\loft$_{r=46}^{\scriptscriptstyle \pgrad}$
& 0.07M & 4.1 & 70.63 & 91.32 & 87.05 & 91.67 & 95.28 & 94.39 & \cellcolor{avgblue}88.39 \\
\rowcolor{psoftgreen}
\loft$_{r=46}^{\scriptscriptstyle \pgrad,\,\Tfree}$
& 0.15M & 4.1 & \textbf{72.09} & 91.58 & 86.17 & 91.18 & 95.24 & 93.45 & \cellcolor{avgblue}88.29 \\
\rowcolor{psoftgreen}
\loft$_{r=46}^{\scriptscriptstyle \pskew}$
& 0.07M & 4.1 & 72.03 & \textbf{91.75} & 86.83 & 92.36 & 95.42 & 93.97 & \cellcolor{avgblue}\textbf{88.73} \\
\rowcolor{psoftgreen}
\loft$_{r=46}^{\scriptscriptstyle \pskew,\,\Tfree}$
& 0.15M & 4.1 & 71.04 & 91.50 & 86.77 & \textbf{92.65} & 94.72 & \textbf{94.43} & \cellcolor{avgblue}88.52 \\
\bottomrule
\end{tabular}}
\end{table}
\begin{center}
\captionsetup{type=table}
\captionof{table}{
Task-specific standard deviations for LOFT variants on the six-task GLUE setup. Values are standard deviations over five random seeds and use the same task order as Table~\ref{tab:app_glue_full}.
}
\label{tab:app_glue_loft_stds}
\small
\setlength{\tabcolsep}{4.5pt}
\renewcommand{\arraystretch}{1.04}
\begin{adjustbox}{max width=\linewidth}
\begin{tabular}{lcccccc}
\toprule
Method & CoLA & STS-B & RTE & MRPC & SST-2 & QNLI \\
\midrule
\loft$_{r=46}^{\scriptscriptstyle \pprin}$
& 1.13 & 0.32 & 1.58 & 1.71 & 0.31 & 0.25 \\
\loft$_{r=46}^{\scriptscriptstyle \pprin,\,\Tfree}$
& 1.12 & 0.26 & 2.83 & 1.22 & 0.35 & 0.31 \\
\loft$_{r=46}^{\scriptscriptstyle \pgrad}$
& 1.11 & 0.48 & 3.35 & 1.65 & 0.16 & 0.25 \\
\loft$_{r=46}^{\scriptscriptstyle \pgrad,\,\Tfree}$
& 1.81 & 0.38 & 2.74 & 2.97 & 0.82 & 0.62 \\
\loft$_{r=46}^{\scriptscriptstyle \pskew}$
& 1.85 & 0.38 & 3.43 & 1.50 & 0.38 & 0.46 \\
\loft$_{r=46}^{\scriptscriptstyle \pskew,\,\Tfree}$
& 2.53 & 0.39 & 2.62 & 2.06 & 0.91 & 0.37 \\
\bottomrule
\end{tabular}
\end{adjustbox}
\end{center}

\paragraph{GLUE ablation analysis.}
The full GLUE ablation separates the two LOFT design axes. Under the default orthogonal transform, changing the support gives a clear ordering: \(\loft_{r=46}^{\pprin}\) reaches \(88.06\), \(\loft_{r=46}^{\pgrad}\) improves to \(88.39\), and \(\loft_{r=46}^{\pskew}\) further improves to \(88.73\). This shows that generic gradient awareness is useful, but the orthogonal-specific skew-gradient signal gives the strongest support.

The transform-class comparison is more nuanced. The free transform helps in the principal-support regime, improving \(\loft_{r=46}^{\pprin}\) from \(88.06\) to \(88.35\), but it is not uniformly better than the orthogonal transform. For GradSVD, the orthogonal and free variants are close in average score, \(88.39\) versus \(88.29\). For SkewGrad, the orthogonal transform remains stronger than the free transform, \(88.73\) versus \(88.52\). These results suggest that, on GLUE, the primary gain comes from selecting a better task-aligned support, while orthogonality remains a useful regularizer once the support is already informative. The OOM rows for GOFTv2/qGOFTv2 further indicate that the comparison is constrained by practical memory feasibility, not only by final accuracy.

Table~\ref{tab:app_glue_loft_stds} reports the corresponding across-seed standard deviations for the LOFT rows. The variance is largest on small or seed-sensitive tasks such as RTE and CoLA, while STS-B, SST-2, and QNLI are comparatively stable across seeds. The strongest average result, \(\loft_{r=46}^{\pskew}\), is therefore not an isolated transform-class effect: it combines the most informative support with the regularity of the orthogonal in-subspace transform.

\paragraph{VTAB ablations.}
Table~\ref{tab:app_vtab_full} reports the full 19-task VTAB-1K results, including auxiliary LOFT support and transform variants omitted from the main text.

\begin{center}
\captionsetup{type=table}
\captionof{table}{
Full VTAB-1K results of fine-tuned ViT-B/16. Reported values are top-1 accuracy (\%) averaged over five random seeds. We report trainable parameter count and peak memory to show the efficiency--performance trade-off across PEFT methods. For LOFT, the subscript denotes rank \(r\), the superscript denotes support \(P_r\), and \(T_r=\Torth\) is used by default unless \(T_r=\Tfree\) is shown.
}
\label{tab:app_vtab_full}
\tiny
\renewcommand{\arraystretch}{1.03}
\setlength{\tabcolsep}{1.25pt}
\begin{adjustbox}{width=\textwidth}
\begin{tabular}{lcc|ccccccc|cccc|cccccccc|c}
\toprule
& & & \multicolumn{7}{c|}{Natural} & \multicolumn{4}{c|}{Specialized} & \multicolumn{8}{c|}{Structured} & \\
\cmidrule(lr){4-10} \cmidrule(lr){11-14} \cmidrule(lr){15-22}
Method & \vthead{\#Params} & \vthead{Mem.}
& \vthead{Cifar100}
& \vthead{Caltech101}
& \vthead{DTD102}
& \vthead{Flower102}
& \vthead{Pets}
& \vthead{SVHN}
& \vthead{Sun397}
& \vthead{Camelyon}
& \vthead{EuroSAT}
& \vthead{Resisc45}
& \vthead{Retinopathy}
& \vthead{Clevr-Count}
& \vthead{Clevr-Dist}
& \vthead{DMLab}
& \vthead{KITTI-Dist}
& \vthead{dSpr-Loc}
& \vthead{dSpr-Ori}
& \vthead{sNORB-Azi}
& \vthead{sNORB-Ele}
& \cellcolor{avgblue}\vthead{Avg.} \\
\midrule
FFT & 85.9M & 8.2
& 70.7 & 89.3 & 69.5 & 99.0 & 90.4 & 81.7 & 54.9
& 85.4 & 93.6 & 83.8 & 74.5
& 58.3 & 51.5 & 43.2 & 75.0 & 73.1 & 48.7 & 16.4 & 30.0
& \cellcolor{avgblue}67.8 \\

GOFTv2 & 0.08M & OOM
& \multicolumn{20}{c}{N/A} \\

qGOFTv2 & 0.33M & OOM
& \multicolumn{20}{c}{N/A} \\

BOFT$_{m=2}^{b=8}$ & 1.41M & 10.9
& 70.6 & 88.2 & 69.8 & 99.0 & 91.4 & 77.4 & 55.1
& 85.1 & 93.6 & 82.3 & 74.9
& 61.8 & 50.4 & 42.9 & 76.1 & 73.7 & 48.8 & 15.7 & 30.8
& \cellcolor{avgblue}70.9 \\

OFTv2$_{b=32}$ & 1.29M & 7.7
& 68.5 & 88.9 & 67.5 & 98.4 & 89.5 & \textbf{86.9} & 53.6
& 86.0 & 94.1 & \textbf{84.2} & 74.6
& 58.7 & 56.4 & \textbf{46.7} & 78.5 & 81.1 & 48.1 & 17.3 & 32.5
& \cellcolor{avgblue}72.1 \\

LoRA$_{r=8}$ & 1.33M & 9.9
& 71.4 & 88.4 & 70.1 & 99.0 & 91.4 & 76.6 & 55.7
& 85.9 & 94.2 & 83.3 & 74.1
& \textbf{72.0} & 54.3 & 43.0 & 76.6 & 74.8 & 48.6 & 16.4 & 31.8
& \cellcolor{avgblue}71.8 \\

PiSSA$_{r=8}$ & 1.33M & 9.9
& 70.7 & 88.7 & 68.9 & \textbf{99.2} & 91.0 & 81.9 & 53.3
& 82.6 & 93.4 & 83.0 & 74.0
& 71.0 & \textbf{60.2} & 44.0 & 77.1 & 81.9 & 51.8 & 18.1 & 33.1
& \cellcolor{avgblue}72.3 \\

DoRA$_{r=8}$ & 1.41M & 17.8
& 70.7 & 89.0 & 69.8 & 98.9 & 91.0 & 81.7 & 55.5
& 85.7 & 94.2 & 83.5 & 74.8
& 67.3 & 54.2 & 45.1 & 77.4 & \textbf{82.0} & 48.5 & 16.9 & 31.5
& \cellcolor{avgblue}72.3 \\

LoRA-XS$_{r=136}$ & 1.33M & 6.6
& 68.5 & 89.4 & 68.4 & 98.7 & 90.9 & 84.5 & 54.1
& 84.0 & 94.3 & 80.8 & 73.6
& 60.0 & 57.7 & 45.8 & 79.6 & 80.6 & 48.1 & 17.4 & 30.8
& \cellcolor{avgblue}71.6 \\

PSOFT$_{r=46}$ & 0.08M & 6.2
& \textbf{71.9} & 89.6 & \textbf{70.3} & 99.1 & 91.8 & \textbf{86.9} & \textbf{55.9}
& 84.6 & 94.2 & 82.4 & 75.2
& 71.2 & 59.9 & 45.7 & 79.6 & 80.9 & \textbf{52.9} & \textbf{20.0} & 32.9
& \cellcolor{avgblue}73.4 \\

\midrule
\rowcolor{psoftgreen}
\loft$_{r=42}^{\scriptscriptstyle \pprin}$ & 0.06M & 5.7
& 70.3 & 90.1 & 69.2 & 99.1 & 91.1 & 86.0 & 54.9
& 86.5 & 94.5 & 83.8 & 75.3
& 70.9 & 59.9 & 45.0 & 79.6 & 79.5 & 51.8 & 19.9 & 32.6
& \cellcolor{avgblue}73.4 \\

\rowcolor{psoftgreen}
\loft$_{r=42}^{\scriptscriptstyle \pprin,\,\Tfree}$ & 0.13M & 5.7
& 70.2 & \textbf{90.5} & 70.0 & 99.2 & 91.5 & 86.2 & 55.2
& 86.3 & 94.7 & 83.5 & 75.2
& 68.7 & 60.1 & 46.1 & 79.7 & 79.7 & 50.6 & 19.9 & 32.1
& \cellcolor{avgblue}73.3 \\

\rowcolor{psoftgreen}
\loft$_{r=46}^{\scriptscriptstyle \pprin}$ & 0.07M & 5.7
& 68.0 & 89.5 & 69.1 & 99.1 & 91.2 & 86.1 & 54.5
& 85.0 & 94.3 & 80.7 & 75.3
& 65.2 & 58.0 & 45.2 & 79.2 & 79.4 & 48.3 & 19.5 & 31.7
& \cellcolor{avgblue}72.3 \\

\rowcolor{psoftgreen}
\loft$_{r=46}^{\scriptscriptstyle \pprin,\,\Tfree}$ & 0.15M & 5.7
& 69.1 & 89.8 & 69.5 & 99.1 & 91.5 & 86.2 & 54.9
& 84.9 & 94.3 & 81.5 & 74.9
& 67.3 & 56.8 & 45.6 & 77.0 & 79.2 & 48.1 & 19.0 & 30.8
& \cellcolor{avgblue}72.3 \\

\rowcolor{psoftgreen}
\loft$_{r=46}^{\scriptscriptstyle \pgrad}$ & 0.07M & 5.7
& 69.9 & 89.6 & 69.4 & 99.2 & 92.0 & 85.2 & 55.2
& 86.4 & \textbf{95.7} & 82.7 & 75.3
& 68.9 & 59.8 & 46.2 & 79.8 & 79.8 & 50.2 & 19.8 & \textbf{33.2}
& \cellcolor{avgblue}73.3 \\

\rowcolor{psoftgreen}
\loft$_{r=46}^{\scriptscriptstyle \pskew}$ & 0.07M & 5.7
& 71.8 & 90.1 & 70.2 & \textbf{99.3} & \textbf{92.1} & 86.2 & 55.7
& \textbf{86.7} & 95.0 & 83.7 & \textbf{75.5}
& 71.2 & 59.9 & 46.1 & \textbf{80.1} & 80.1 & 52.5 & \textbf{20.0} & 33.1
& \cellcolor{avgblue}\textbf{73.8} \\
\bottomrule
\end{tabular}
\end{adjustbox}
\end{center}

\begin{center}
\captionsetup{type=table}
\captionof{table}{
Task-specific standard deviations for selected LOFT variants on VTAB-1K. Values are reported in percentage points and computed over five random seeds. We report the main principal-support row, its free-transform counterpart, and the two gradient-informed orthogonal supports used in the VTAB ablation.
}
\label{tab:app_vtab_loft_stds}
\tiny
\renewcommand{\arraystretch}{1.03}
\setlength{\tabcolsep}{2.0pt}
\begin{adjustbox}{width=\textwidth}
\begin{tabular}{lccccccccccccccccccc}
\toprule
Method
& \vthead{Cifar100}
& \vthead{Caltech101}
& \vthead{DTD102}
& \vthead{Flower102}
& \vthead{Pets}
& \vthead{SVHN}
& \vthead{Sun397}
& \vthead{Camelyon}
& \vthead{EuroSAT}
& \vthead{Resisc45}
& \vthead{Retinopathy}
& \vthead{Clevr-Count}
& \vthead{Clevr-Dist}
& \vthead{DMLab}
& \vthead{KITTI-Dist}
& \vthead{dSpr-Loc}
& \vthead{dSpr-Ori}
& \vthead{sNORB-Azi}
& \vthead{sNORB-Ele} \\
\midrule
\loft$_{r=42}^{\scriptscriptstyle \pprin}$
& 1.91 & 0.74 & 0.32 & 0.20 & 0.13 & 0.92 & 0.64 & 0.22 & 0.18 & 1.24 & 0.48 & 2.25 & 0.52 & 0.41 & 1.75 & 0.36 & 1.68 & 0.72 & 0.66 \\

\loft$_{r=42}^{\scriptscriptstyle \pprin,\,\Tfree}$
& 1.43 & 0.77 & 0.70 & 0.16 & 0.40 & 0.19 & 0.85 & 0.93 & 0.59 & 1.19 & 0.29 & 2.55 & 0.66 & 0.68 & 1.40 & 0.34 & 3.27 & 1.57 & 1.09 \\

\loft$_{r=46}^{\scriptscriptstyle \pgrad}$
& 0.23 & 0.19 & 0.14 & 0.02 & 0.20 & 0.22 & 0.13 & 0.17 & 0.04 & 0.34 & 0.68 & 1.93 & 0.55 & 0.20 & 0.72 & 0.39 & 0.34 & 0.23 & 0.53 \\

\loft$_{r=46}^{\scriptscriptstyle \pskew}$
& 0.18 & 0.22 & 0.25 & 0.08 & 0.37 & 0.32 & 0.11 & 0.38 & 0.15 & 0.29 & 0.28 & 1.50 & 0.45 & 0.39 & 0.42 & 0.51 & 0.68 & 0.32 & 0.48 \\
\bottomrule
\end{tabular}
\end{adjustbox}
\end{center}

\paragraph{VTAB ablation analysis.}
The VTAB ablation shows a more heterogeneous pattern than GLUE, which is expected because VTAB aggregates natural, specialized, and structured visual tasks. The \(r=42\) principal-support row is the most direct recovery of the PSOFT regime: \(\loft_{r=42}^{\pprin}\) matches PSOFT's average while using fewer parameters and less memory. Increasing the principal rank to \(r=46\) does not improve the average, indicating that simply increasing the subspace width is not sufficient. GradSVD gives strong task-level results on several datasets, such as Flower102, Pets, EuroSAT, and sNORB-Ele, but its average remains below SkewGrad. The best overall average is obtained by \(\loft_{r=46}^{\pskew}\), which reaches \(73.8\) and improves over both PSOFT and the principal-support LOFT variants. This comparison is especially informative because the \(r=46\) principal-support rows do not improve over the \(r=42\) principal-support row, whereas the SkewGrad support at \(r=46\) gives the strongest average. Thus, the improvement is better explained by support selection than by simply increasing rank. The result supports the same qualitative conclusion as GLUE: generic gradient awareness helps, but the orthogonal-specific skew-gradient support is the most reliable support choice. Free-transform variants do not improve the VTAB average, suggesting that visual transfer benefits more from support selection than from relaxing the in-subspace orthogonality constraint. Table~\ref{tab:app_vtab_loft_stds} reports the corresponding across-seed variability for selected LOFT variants. The gradient-informed supports are generally stable: the average task-level standard deviation is about \(0.38\) percentage points for \(P_r=\pgrad\) and \(0.39\) percentage points for \(P_r=\pskew\). The largest variances occur on a small number of structured tasks, especially Clevr-Count and dSpr-Ori, while most natural and specialized tasks have substantially smaller variation. This helps rule out the concern that the VTAB average gain is driven by highly unstable seed behavior, although we still interpret the VTAB improvement as modest rather than uniform across all tasks.

\section{Additional Robustness Analyses}
\label{app:robustness}

\subsection{Low-Resource Language OOD Adaptation}
\label{app:lowres-lang}

We further evaluate whether gradient-informed support selection remains effective in a low-resource language adaptation setting. We fine-tune Llama-3.2-3B on Bactrian-X and report held-out response-only negative log-likelihood (NLL). We use the base-model bits-per-byte (BPB) on held-out raw text as a rough proxy for language OOD strength. The shared experimental setup is summarized in Table~\ref{tab:lowres_lang_setup}.

\begin{table}[H]
\centering
\small
\caption{Shared hyperparameter settings for low-resource language adaptation.}
\label{tab:lowres_lang_setup}
\begin{tabular}{p{0.34\linewidth}p{0.48\linewidth}}
\toprule
Hyperparameter & Value \\
\midrule
Base model & Llama-3.2-3B \\
Training data & Bactrian-X, 5,000 examples per language \\
Hold-out data & 500 examples per language \\
Evaluation metric & Response-only NLL \(\downarrow\) \\
OOD proxy & Base-model BPB on held-out raw text \\
Optimizer & AdamW \\
Learning-rate schedule & Cosine, warmup ratio \(0.1\) \\
Default learning rate & \(2\times 10^{-4}\) \\
Weight decay & \(0.0\) \\
Training epochs & 2 \\
Batch size & 1 \\
Gradient accumulation & 64 \\
Precision & bf16 \\
Seeds & 0, 1, 2 \\
LOFT / PSOFT rank & 354 \\
Gradient calibration & 16 examples \\
PSOFT extras & Magnitude vectors and Cayley--Neumann approximation with 5 terms \\
\bottomrule
\end{tabular}
\end{table}

\begin{table}[H]
\centering
\small
\setlength{\tabcolsep}{4pt}
\caption{Low-resource language adaptation on Bactrian-X. We report held-out response-only NLL
(lower is better), averaged over three seeds. All methods use learning rate \(2\times 10^{-4}\).
BPB is measured before fine-tuning and serves as an approximate OOD-strength proxy.}
\label{tab:lowres_lang_main}
\begin{tabular}{lcccccc}
\toprule
Method &
\begin{tabular}[c]{@{}c@{}}Indonesian\\(id)\end{tabular} &
\begin{tabular}[c]{@{}c@{}}English\\(en)\end{tabular} &
\begin{tabular}[c]{@{}c@{}}Tagalog\\(tl)\end{tabular} &
\begin{tabular}[c]{@{}c@{}}Afrikaans\\(af)\end{tabular} &
\begin{tabular}[c]{@{}c@{}}Swahili\\(sw)\end{tabular} &
\cellcolor{avgblue}Avg. \\
\midrule
BPB       & 0.596 & 0.626 & 0.811 & 0.831 & 1.065 & -- \\
\midrule
Principal & 1.0556 & 1.8715 & 1.3771 & 1.3513 & 1.3631 & \cellcolor{avgblue}1.4037 \\
GradSVD   & \textbf{1.0553} & \textbf{1.8655} & \textbf{1.3708} & \textbf{1.3463} & \textbf{1.3480} & \cellcolor{avgblue}\textbf{1.3972} \\
SkewGrad  & 1.0666 & 1.9037 & 1.3768 & 1.3585 & 1.3503 & \cellcolor{avgblue}1.4112 \\
PSOFT     & 1.0557 & 1.8747 & 1.3770 & 1.3511 & 1.3628 & \cellcolor{avgblue}1.4043 \\
\bottomrule
\end{tabular}
\end{table}

Table~\ref{tab:lowres_lang_main} shows that the gradient-informed support generally improves over the principal weight-based support in this OOD adaptation setting. The gain is small when the base model is already close to the target language distribution, such as Indonesian, but becomes much
larger for the strongest OOD language, Swahili. PSOFT closely tracks the principal support, which is expected because both choose the adaptation support from pretrained-weight geometry rather than from downstream gradient information. SkewGrad is less stable: it helps on Swahili but degrades
performance on Indonesian, English, and Afrikaans, suggesting that mixing pretrained-weight geometry into the skew signal can be harmful when pretrained directions are already well aligned with the task.

To check that the GradSVD advantage is not an artifact of a single learning rate, we run an extended learning-rate sweep on Swahili, the strongest OOD language in this set. We also include a parameter-matched LoRA baseline and a half-parameter tier.

\begin{table}[H]
\centering
\small
\setlength{\tabcolsep}{5pt}
\caption{Swahili learning-rate and parameter-budget ablation. We report each method at its own best
learning rate. The 12M tier compares LOFT/PSOFT rank \(354\) with LoRA rank \(8\); the 6M tier
compares LOFT/PSOFT rank \(250\) with LoRA rank \(4\).}
\label{tab:lowres_sw_bestlr}
\begin{tabular}{lcccc}
\toprule
& \multicolumn{2}{c}{12M tier} & \multicolumn{2}{c}{6M tier} \\
\cmidrule(lr){2-3} \cmidrule(lr){4-5}
Method & Best lr & NLL \(\downarrow\) & Best lr & NLL \(\downarrow\) \\
\midrule
Principal & \(5\times 10^{-4}\) & 1.3307 & \(1\times 10^{-3}\) & 1.3546 \\
GradSVD   & \(5\times 10^{-4}\) & \textbf{1.3170} & \(1\times 10^{-3}\) & \textbf{1.3395} \\
SkewGrad  & \(3\times 10^{-4}\) & 1.3399 & \(5\times 10^{-4}\) & 1.3697 \\
PSOFT     & \(5\times 10^{-4}\) & 1.3300 & \(1\times 10^{-3}\) & 1.3534 \\
LoRA      & \(1\times 10^{-3}\) & 1.3362 & \(1\times 10^{-3}\) & 1.3646 \\
\bottomrule
\end{tabular}
\end{table}

Table~\ref{tab:lowres_sw_bestlr} confirms that LOFT's gradient-informed support selection continues to improve performance after each method is given its own tuned learning rate. At the 12M-parameter tier, GradSVD lowers NLL by about \(1.0\%\) compared with the best principal-support variant and by about \(1.4\%\) compared with the best parameter-matched LoRA baseline. The gain becomes larger at the 6M-parameter tier, where GradSVD improves over LoRA by about \(1.9\%\). This trend supports our main claim that loss-informed supports become more valuable under tighter adaptation budgets. SkewGrad is competitive at lower learning rates but becomes less stable as the learning rate increases; therefore, for stability, we use GradSVD as the default gradient-informed support in this low-resource language setting.

\subsection{SkewGrad on MRPC and RTE}
\label{app:glue_ablation_skew}

We provide two additional robustness analyses on MRPC and RTE to stress-test the loss-informed support-selection claim beyond the main six-task GLUE table. These two tasks are useful probes because they are relatively small, seed-sensitive sentence-pair classification tasks, so improvements on them are less likely to be explained solely by large-data averaging effects. In all experiments below, the support is constructed using the SkewGrad rule and the resulting support is fixed during training; we then compare the default orthogonal transform, the unconstrained free transform within the same support, and PSOFT under the corresponding setting.

The first analysis is a low-resource data-fraction sweep at fixed rank \(r=46\), where we vary the amount of training data while keeping the adaptation budget fixed. This tests whether SkewGrad remains useful when the calibration and downstream training signal are estimated from reduced supervision. The second analysis is a full-data rank sweep over \(r\in\{16,46,64\}\), where we vary the adaptation budget while keeping the full training set available. This tests whether the observed gains depend on the main rank \(r=46\), or whether they persist across a broader parameter range. Unless otherwise stated, each setting uses the task-specific learning rate selected for that setting, and all reported values are averaged over five random seeds.

These experiments should be interpreted as robustness checks for the support-selection mechanism rather than as additional tests of the principal-support regime. In particular, the comparisons isolate whether the loss-informed SkewGrad support remains competitive against PSOFT when supervision is reduced or rank is changed, and whether the orthogonal/free distinction within the same support changes the outcome.

\begin{center}
\captionsetup{type=table}
\captionof{table}{
Low-resource data-fraction sweep on MRPC and RTE for SkewGrad-support variants at fixed rank \(r=46\). All values are mean \(\pm\) standard deviation over five random seeds. \(\loft^{\pskew}\) uses the default orthogonal transform \(T_r=\Torth\); \(\loft^{\pskew,\Tfree}\) uses the unconstrained free transform.
}
\label{tab:skew_data_fraction}
\scriptsize
\setlength{\tabcolsep}{4.2pt}
\renewcommand{\arraystretch}{1.04}
\begin{adjustbox}{max width=\linewidth}
\begin{tabular}{llccc}
\toprule
Task & Fraction & \(\loft_{r=46}^{\scriptscriptstyle \pskew}\) & \(\loft_{r=46}^{\scriptscriptstyle \pskew,\,\Tfree}\) & PSOFT$_{r=46}$ \\
\midrule
MRPC & 10\%  & \(0.8394 \pm 0.0264\) & \(\mathbf{0.8502 \pm 0.0144}\) & \(0.8171 \pm 0.0199\) \\
MRPC & 20\%  & \(\mathbf{0.8640 \pm 0.0107}\) & \(0.8636 \pm 0.0159\) & \(0.8487 \pm 0.0102\) \\
MRPC & 40\%  & \(\mathbf{0.8952 \pm 0.0201}\) & \(0.8840 \pm 0.0104\) & \(0.8692 \pm 0.0087\) \\
MRPC & 60\%  & \(\mathbf{0.8988 \pm 0.0068}\) & \(0.8984 \pm 0.0164\) & \(0.8751 \pm 0.0085\) \\
MRPC & 80\%  & \(\mathbf{0.9080 \pm 0.0085}\) & \(0.9017 \pm 0.0073\) & \(0.8982 \pm 0.0129\) \\
MRPC & 100\% & \(0.9216 \pm 0.0101\) & \(\mathbf{0.9265 \pm 0.0065}\) & \(0.9086 \pm 0.0176\) \\
\midrule
RTE & 10\%  & \(0.6619 \pm 0.0426\) & \(0.6691 \pm 0.0326\) & \(\mathbf{0.6705 \pm 0.0405}\) \\
RTE & 20\%  & \(0.7252 \pm 0.0186\) & \(\mathbf{0.7381 \pm 0.0242}\) & \(0.7262 \pm 0.0271\) \\
RTE & 40\%  & \(\mathbf{0.8086 \pm 0.0320}\) & \(0.8072 \pm 0.0276\) & \(0.7897 \pm 0.0078\) \\
RTE & 60\%  & \(0.8158 \pm 0.0187\) & \(\mathbf{0.8388 \pm 0.0064}\) & \(0.8115 \pm 0.0372\) \\
RTE & 80\%  & \(\mathbf{0.8302 \pm 0.0195}\) & \(0.8273 \pm 0.0176\) & \(0.8285 \pm 0.0168\) \\
RTE & 100\% & \(\mathbf{0.8683 \pm 0.0195}\) & \(0.8677 \pm 0.0283\) & \(0.8675 \pm 0.0236\) \\
\bottomrule
\end{tabular}
\end{adjustbox}
\end{center}

\begin{center}
\captionsetup{type=figure}
\begin{minipage}{0.475\linewidth}
    \centering
    \includegraphics[width=\linewidth]{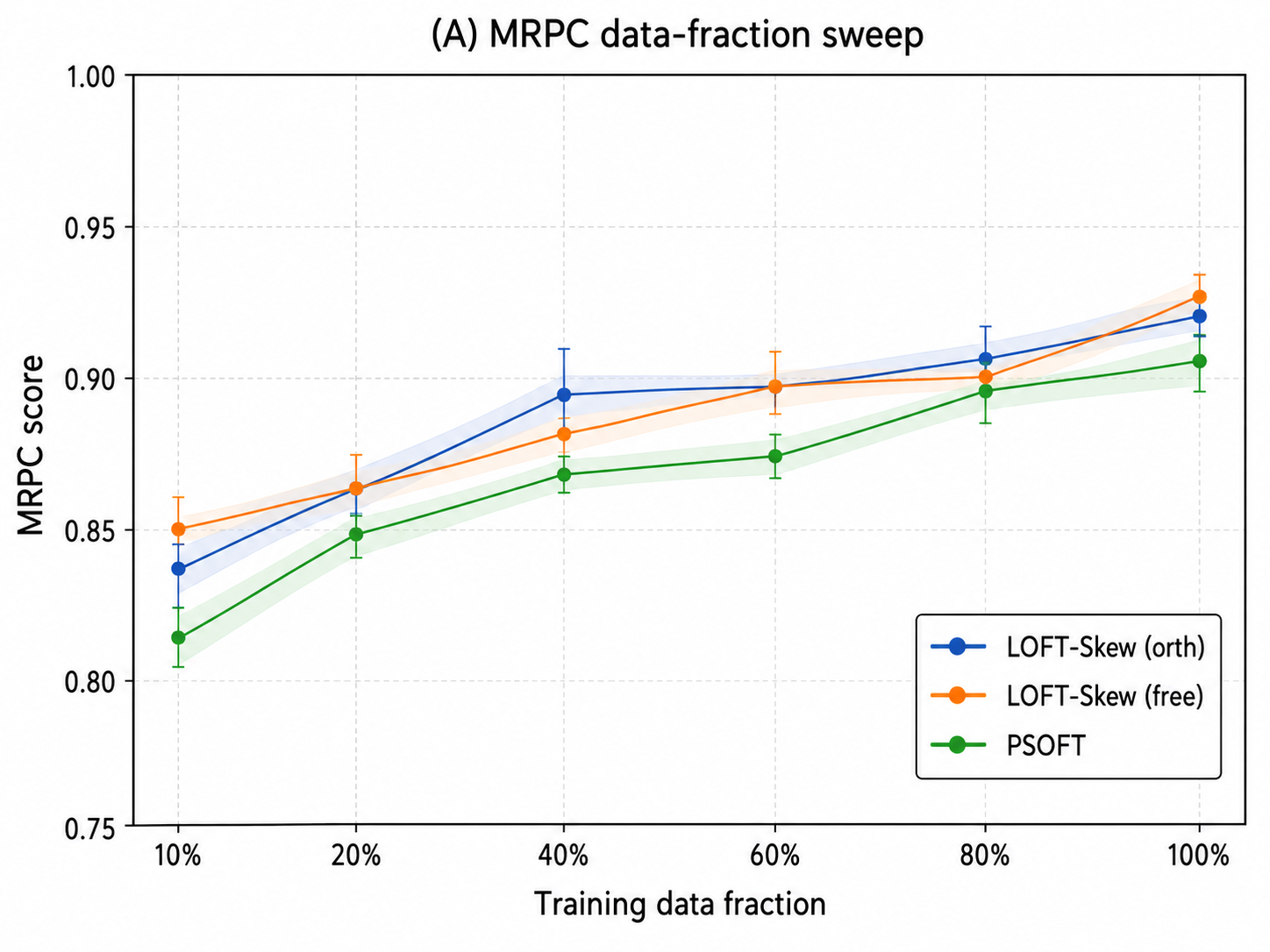}
\end{minipage}
\hfill
\begin{minipage}{0.475\linewidth}
    \centering
    \includegraphics[width=\linewidth]{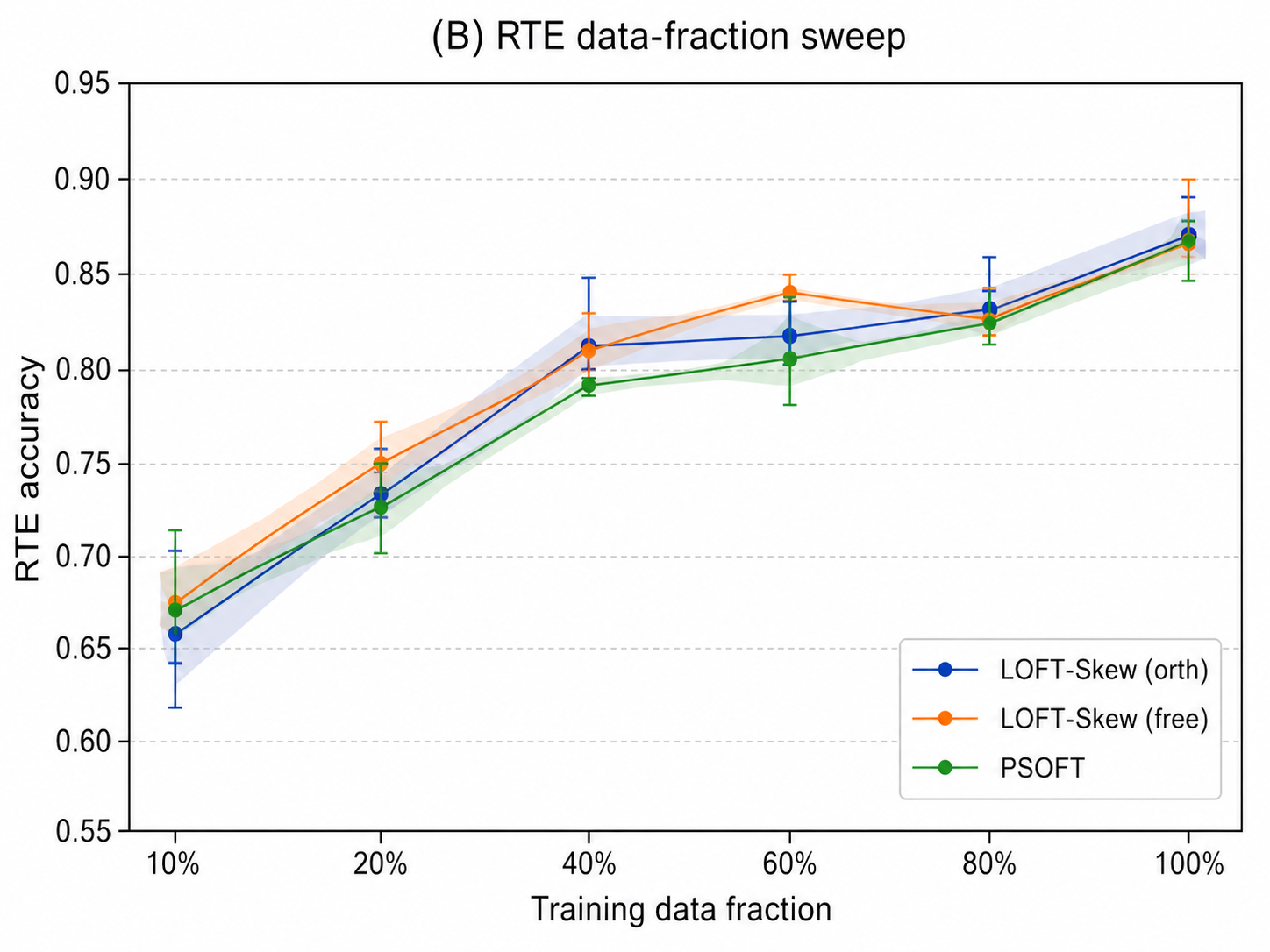}
\end{minipage}
\captionof{figure}{
\textbf{Low-resource data-fraction robustness.}
Panels (A) and (B) visualize the MRPC and RTE data-fraction sweeps in Table~\ref{tab:skew_data_fraction}. Curves show mean performance over five seeds, with uncertainty indicating across-seed standard deviation.
}
\label{fig:skew_data_fraction}
\end{center}

Table~\ref{tab:skew_data_fraction} and Figure~\ref{fig:skew_data_fraction} show that the benefits of the theory-guided SkewGrad support are not confined to the full-data setting. On MRPC, both SkewGrad-support LOFT variants outperform PSOFT across all data fractions, with especially clear margins in the intermediate 40--60\% regime and at full data. On RTE, the very low-resource regime is less stable at 10--20\%, but from 40\% onward the SkewGrad variants are consistently competitive and often stronger than PSOFT. These results support the view that loss-informed support selection remains useful when supervision is reduced.

\begin{center}
\captionsetup{type=table}
\captionof{table}{
Full-data rank sweep on MRPC and RTE for SkewGrad-support variants. All values are mean \(\pm\) standard deviation over five random seeds. The sweep tests whether SkewGrad's gains depend on the main rank \(r=46\) or persist across a broader parameter budget.
}
\label{tab:skew_rank_sweep}
\scriptsize
\setlength{\tabcolsep}{4.2pt}
\renewcommand{\arraystretch}{1.04}
\begin{adjustbox}{max width=\linewidth}
\begin{tabular}{llccc}
\toprule
Task & Rank \(r\) & \(\loft^{\scriptscriptstyle \pskew}\) & \(\loft^{\scriptscriptstyle \pskew,\,\Tfree}\) & PSOFT \\
\midrule
MRPC & 16 & \(0.9048 \pm 0.0072\) & \(\mathbf{0.9189 \pm 0.0106}\) & \(0.8756 \pm 0.0068\) \\
MRPC & 46 & \(0.9216 \pm 0.0141\) & \(\mathbf{0.9265 \pm 0.0112}\) & \(0.9086 \pm 0.0070\) \\
MRPC & 64 & \(\mathbf{0.9183 \pm 0.0087}\) & \(0.9148 \pm 0.0103\) & \(0.9085 \pm 0.0068\) \\
\midrule
RTE & 16 & \(0.7986 \pm 0.0337\) & \(\mathbf{0.8259 \pm 0.0298}\) & \(0.7330 \pm 0.1507\) \\
RTE & 46 & \(\mathbf{0.8683 \pm 0.0283}\) & \(0.8677 \pm 0.0456\) & \(0.8674 \pm 0.0147\) \\
RTE & 64 & \(\mathbf{0.8660 \pm 0.0316}\) & \(0.8631 \pm 0.0186\) & \(0.8414 \pm 0.0194\) \\
\bottomrule
\end{tabular}
\end{adjustbox}
\end{center}

\begin{center}
\captionsetup{type=figure}
\begin{minipage}{0.475\linewidth}
    \centering
    \includegraphics[width=\linewidth]{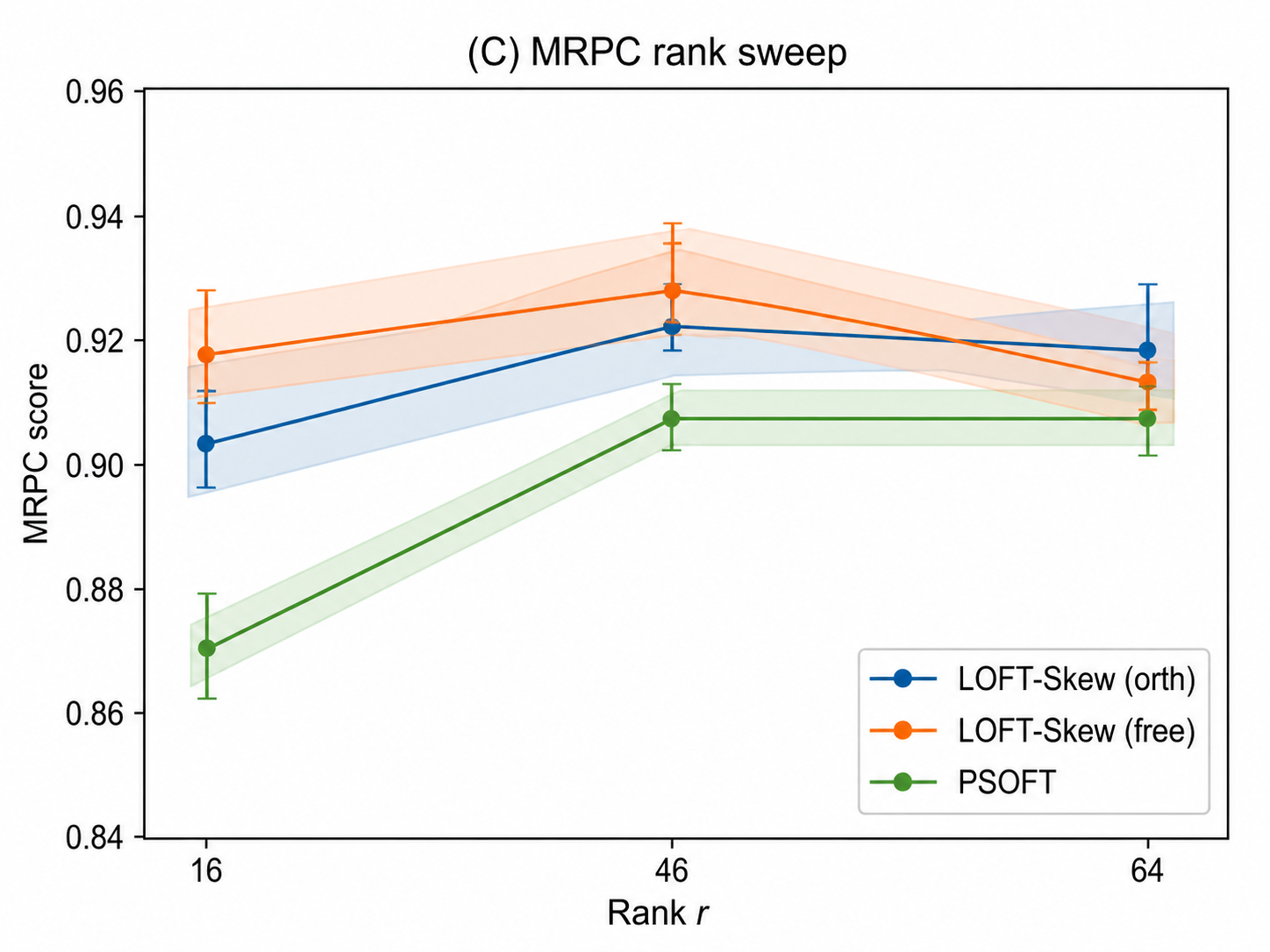}
\end{minipage}
\hfill
\begin{minipage}{0.475\linewidth}
    \centering
    \includegraphics[width=\linewidth]{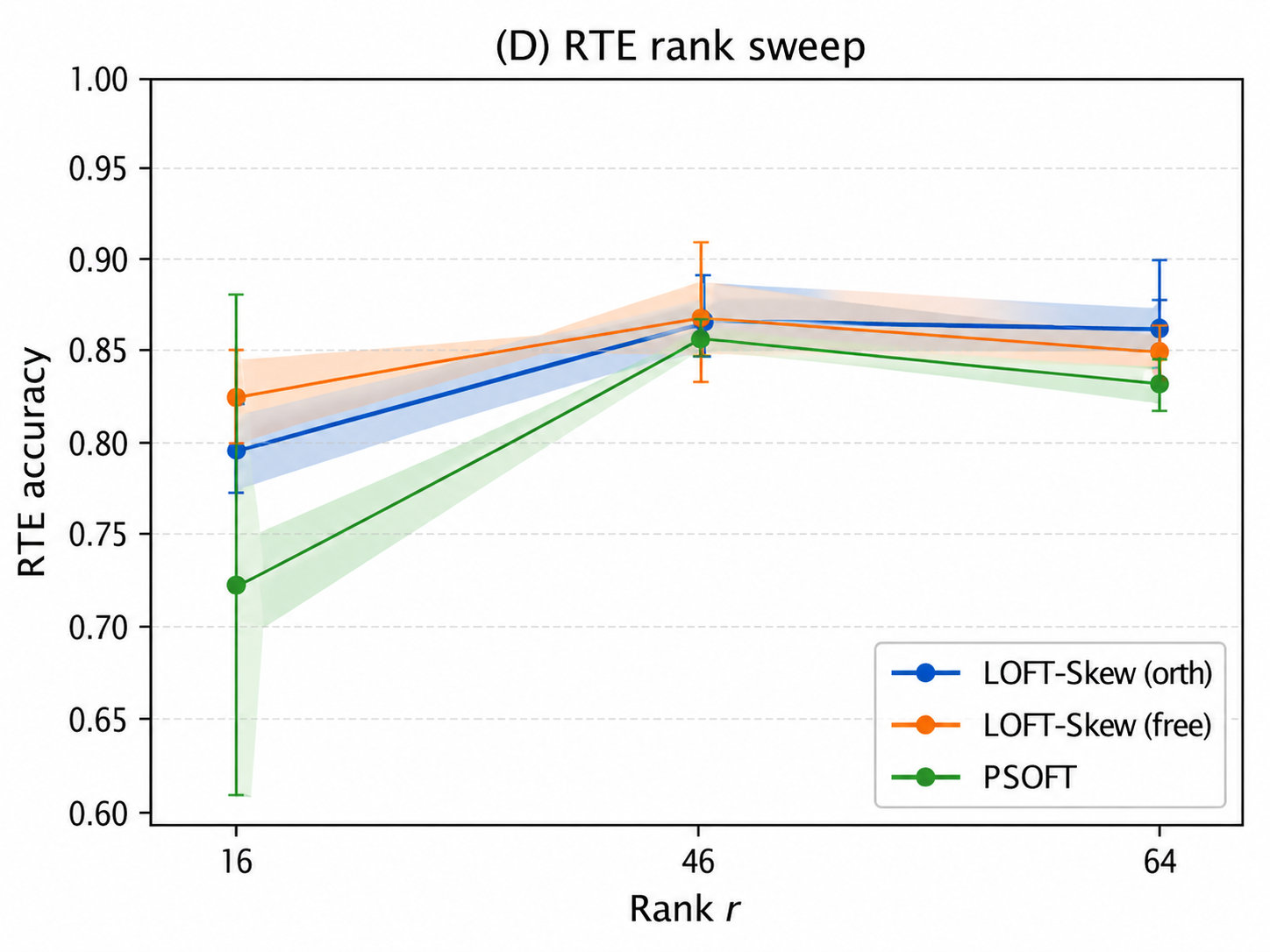}
\end{minipage}
\captionof{figure}{
\textbf{Rank-sweep robustness.}
Panels (C) and (D) visualize the MRPC and RTE rank sweeps in Table~\ref{tab:skew_rank_sweep}. Curves show mean performance over five seeds, with uncertainty indicating across-seed standard deviation.
}
\label{fig:skew_rank_sweep}
\end{center}

Table~\ref{tab:skew_rank_sweep} and Figure~\ref{fig:skew_rank_sweep} show that the SkewGrad gains are not tied to a single narrowly tuned rank. On MRPC, both SkewGrad-support LOFT variants outperform PSOFT at all three ranks, with the strongest results around the main rank \(r=46\). On RTE, SkewGrad substantially improves over PSOFT at \(r=16\), remains competitive at \(r=46\), and stays clearly stronger at \(r=64\). Taken together, the data-fraction and rank sweeps indicate that the benefit of the theory-guided support persists across both supervision levels and parameter budgets, rather than arising only from one carefully chosen operating point.

\section{Additional Efficiency Measurements for Principal, GradSVD, and SkewGrad on GLUE}
\label{app:glue_efficiency_grad}

We further report supplementary wall-clock measurements on the six-task GLUE setup at rank \(r=46\). 
The goal is to quantify the training-time and parameter-cost profile of three support families under the same task-specific training protocol: principal support \(P_r=\pprin\), GradSVD support \(P_r=\pgrad\), and SkewGrad support \(P_r=\pskew\). 
For principal support, we additionally compare against PSOFT at the same rank. 
For GradSVD and SkewGrad, the reported wall-clock runtime includes the one-off calibration stage used to construct the fixed support before training the PEFT patch.

Because the support only changes the fixed basis \(P_r\), the trainable parameter count is unchanged across support families for a fixed transform class and rank. 
Accordingly, all orthogonal LOFT variants at rank \(r=46\) use \(74{,}520\) non-classifier trainable parameters, while all free-transform LOFT variants at the same rank use \(152{,}352\) non-classifier trainable parameters. 
PSOFT at rank \(r=46\) uses \(81{,}144\) non-classifier trainable parameters.

\begin{center}
\captionsetup{type=table}
\captionof{table}{
End-to-end wall-clock training runtime in seconds on the six-task GLUE setup at rank \(r=46\).
All runs use the same task-specific training protocol and the same NVIDIA V100 64GB GPU setting as Table~\ref{tab:glue_main}.
Runtime is measured from single-seed runs with seed 42 and rounded to the nearest second.
\#Params reports non-classifier trainable parameters.
For GradSVD and SkewGrad supports, runtime includes the one-off calibration pass used to construct the support.
For LOFT, \(T_r=\Torth\) is used by default unless \(T_r=\Tfree\) is shown.
}
\label{tab:glue_efficiency_grad}
\scriptsize
\setlength{\tabcolsep}{3.0pt}
\renewcommand{\arraystretch}{1.05}
\begin{adjustbox}{width=\textwidth}
\begin{tabular}{l|lr|ccccccc}
\toprule
Method / support & Transform & \#Params & CoLA & MRPC & QNLI & RTE & SST-2 & STS-B & \cellcolor{avgblue}Avg. \\
\midrule
PSOFT$_{r=46}$ & Ortho & 81,144
& 3242 & 5793 & 29694 & 3927 & 15609 & 2879 & \cellcolor{avgblue}10191 \\ 
\midrule
\loft$_{r=46}^{\scriptscriptstyle \pprin}$ & \(\Torth\) & 74,520
& 2824 & 5479 & 28726 & 3926 & 13660 & 2556 & \cellcolor{avgblue}9528 \\
\loft$_{r=46}^{\scriptscriptstyle \pgrad}$ & \(\Torth\) & 74,520
& 2811 & 5666 & 28274 & 3973 & 13933 & 2610 & \cellcolor{avgblue}9544 \\
\loft$_{r=46}^{\scriptscriptstyle \pskew}$ & \(\Torth\) & 74,520
& 2819 & 5765 & 28305 & 3936 & 13954 & 2640 & \cellcolor{avgblue}9570 \\
\midrule
\loft$_{r=46}^{\scriptscriptstyle \pprin}$ & \(\Tfree\) & 152,352
& 1395 & 4911 & 23161 & 3372 & 10295 & 1893 & \cellcolor{avgblue}7504 \\
\loft$_{r=46}^{\scriptscriptstyle \pgrad}$ & \(\Tfree\) & 152,352
& 1451 & 5176 & 26069 & 3574 & 10412 & 2030 & \cellcolor{avgblue}8119 \\
\loft$_{r=46}^{\scriptscriptstyle \pskew}$ & \(\Tfree\) & 152,352
& 1483 & 5188 & 26074 & 3628 & 10321 & 1957 & \cellcolor{avgblue}8108 \\
\bottomrule
\end{tabular}
\end{adjustbox}
\end{center}

\paragraph{Efficiency analysis.}
Table~\ref{tab:glue_efficiency_grad} highlights three points:

First, in the principal-support regime, orthogonal LOFT uses slightly fewer non-classifier trainable parameters than PSOFT at the same rank: \(74{,}520\) versus \(81{,}144\), an \(8.2\%\) reduction. 
It is also consistently faster than PSOFT in wall-clock time across the six GLUE tasks. 
This indicates that the additional PSOFT orthogonal transformation and magnitude factors introduce non-negligible per-step overhead, even though both methods operate at the same rank.

Second, \(T_r=\Tfree\) uses more trainable parameters than \(T_r=\Torth\), increasing the non-classifier LOFT parameter count from \(74{,}520\) to \(152{,}352\). 
Despite this larger parameter count, the free-transform variants are substantially faster than their orthogonal counterparts on all six tasks. 
This shows that runtime is not determined only by the number of trainable parameters; the cost of parameterizing the in-subspace orthogonal transform can dominate the small dense \(r\times r\) update.

Third, gradient-informed support construction introduces little additional runtime relative to principal support. 
Among the orthogonal LOFT variants, GradSVD and SkewGrad require only \(0.2\%\) and \(0.4\%\) additional average runtime relative to principal support, respectively. 
GradSVD and SkewGrad also have nearly identical wall-clock runtime at matched transform type, showing that the extra cost of forming \(\mathrm{skew}(W_0^\top G)\) is negligible once calibration gradients have been collected. 
Even SkewGrad, the slowest orthogonal LOFT variant in average wall-clock time, remains \(6.1\%\) faster on average than PSOFT.

\end{document}